\title{Differentially Private Conditional Independence Testing}
\newcommand{\cov}{\mathrm{cov}}
\newcommand{\Var}{\mathrm{Var}}
\DeclareMathOperator*{\argmax}{arg\,max}
\newcommand{\indep}{\perp \!\!\! \perp}
\newcommand{\notindep}{\cancel{\perp \!\!\! \perp}}
\theoremstyle{plain}
\newtheorem{theorem}{Theorem}[section]
\newtheorem{lemma}[theorem]{Lemma}
\newtheorem{corollary}[theorem]{Corollary}
\newtheorem{claim}[theorem]{Claim}
\theoremstyle{definition}
\newtheorem{definition}[theorem]{Definition}
\theoremstyle{remark}
\newcommand{\Sec}[1]{Section~\hyperref[sec:#1]{\ref*{sec:#1}}} 
\newcommand{\Eqn}[1]{\hyperref[eq:#1]{(\ref*{eq:#1})}} 
\newcommand{\Fig}[1]{\hyperref[fig:#1]{Fig.\,\ref*{fig:#1}}} 
\newcommand{\Tab}[1]{\hyperref[tab:#1]{Tab.\,\ref*{tab:#1}}} 
\newcommand{\Thm}[1]{Theorem\,\hyperref[thm:#1]{\ref*{thm:#1}}} 
\newcommand{\Thms}[2]{Theorems\,\hyperref[thm:#1]{\ref*{thm:#1}} and~\hyperref[thm:#2]{\ref*{thm:#2}}} 
\newcommand{\Fact}[1]{\hyperref[fact:#1]{Fact\,\ref*{fact:#1}}} 
\newcommand{\Lem}[1]{Lemma\,\hyperref[lem:#1]{\ref*{lem:#1}}} 
\newcommand{\Lems}[2]{Lemmas\,\hyperref[lem:#1]{\ref*{lem:#1}} and~\hyperref[lem:#2]{\ref*{lem:#2}}} 
\newcommand{\Prop}[1]{\hyperref[prop:#1]{Prop.\,\ref*{prop:#1}}} 
\newcommand{\Cor}[1]{Corollary~\hyperref[cor:#1]{\ref*{cor:#1}}} 
\newcommand{\Conj}[1]{\hyperref[conj:#1]{Conjecture~\ref*{conj:#1}}} 
\newcommand{\Def}[1]{Definition~\hyperref[def:#1]{\ref*{def:#1}}} 
\newcommand{\Alg}[1]{Algorithm~\hyperref[alg:#1]{\ref*{alg:#1}}} 
\newcommand{\Proc}[1]{\hyperref[proc:#1]{Procedure~\ref*{proc:#1}}} 
\newcommand{\Step}[1]{\hyperref[step:#1]{Step~\ref*{step:#1}}} 
\newcommand{\Steps}[2]{\hyperref[step:#1]{Steps~\ref*{step:#1}} and~\hyperref[step:#2]{\ref*{step:#2}}} 
\newcommand{\Stepss}[3]{\hyperref[step:#1]{Steps~\ref*{step:#1}},~\hyperref[step:#2]{\ref*{step:#2}}, and~\hyperref[step:#3]{\ref*{step:#3}}} 
\newcommand{\Ex}[1]{\hyperref[ex:#1]{Ex.~\ref*{ex:#1}}} 
\newcommand{\Clm}[1]{Claim~\hyperref[clm:#1]{\ref*{clm:#1}}} 
\newcommand{\Inv}[1]{\hyperref[inv:#1]{Invariant~\ref*{inv:#1}}} 
\newcommand{\Rem}[1]{\hyperref[rem:#1]{Remark~\ref*{rem:#1}}} 
\newcommand{\Obs}[1]{\hyperref[obs:#1]{Observation~\ref*{obs:#1}}} 
\newcommand{\Lap}{\mathrm{Lap}}
\newcommand{\R}{\mathbb{R}}
\newcommand{\bR}{\mathbf{R}}
\newcommand{\N}{\mathbb N}
\newcommand{\eps}{\varepsilon}
\newcommand{\cA}{\mathcal{A}}
\newcommand{\cP}{\mathcal{P}}
\newcommand{\cD}{\mathbf{D}}
\newcommand{\cN}{\mathcal{N}}
\newcommand{\bu}{\mathbf{U}}
\newcommand{\bv}{\mathbf{V}}
\newcommand{\bx}{\mathbf{X}}
\newcommand{\by}{\mathbf{Y}}
\newcommand{\bz}{\mathbf{Z}}
\newcommand{\bw}{\mathbf{w}}
\newcommand{\crtconst}{C'}
\newcommand{\gcmconst}{C}
\newcommand{\pto}{\overset{P}{\to}}
\newcommand{\sigmapriv}{\sigma_{\mathrm{priv}}}
\newcommand{\sigmajoint}{\sigma_{\mathrm{joint}}}
\newcommand{\E}{\mathbb{E}}
\author{
    Iden Kalemaj\thanks{This work was conducted during an internship at Amazon.}\\
    Boston Unviersity\\
    \texttt{ikalemaj@bu.edu }
    \and
    Shiva Prasad Kasiviswanathan\\
    Amazon\\
    \texttt{kasivisw@gmail.com}
    \and
    Aaditya Ramdas\\
     Carnegie Mellon University\\
    \texttt{aramdas@cmu.edu}
}
\date{}
\begin{document}

\maketitle

\begin{abstract}
Conditional independence (CI) tests are widely used in statistical data analysis, e.g., they are the building block of many algorithms for causal graph discovery. The goal of a CI test is to accept or reject the null hypothesis that $X \perp \!\!\! \perp Y \mid Z$, where $X \in \mathbb{R}, Y \in \mathbb{R}, Z \in \mathbb{R}^d$. In this work, we investigate conditional independence testing under the constraint of differential privacy. We design two private CI testing procedures: one based on the generalized covariance measure of Shah and Peters (2020) and another based on the conditional randomization test of Cand{\`e}s et al.~(2016) (under the model-X assumption). We provide theoretical guarantees on the performance of our tests and validate them empirically. These are the first private CI tests with rigorous theoretical guarantees that work for the general case when $Z$ is continuous. 
\end{abstract}

\section{Introduction} \label{sec:intro}

Conditional independence (CI) tests are a powerful tool in statistical data analysis, e.g., they are building blocks for graphical models, causal inference, and causal graph discovery~\cite{dawid1979conditional,koller2009probabilistic,pearl2000models}. These analyses are frequently performed on sensitive data, such as clinical datasets and demographic datasets, where concerns for privacy are foremost. For example, in clinical trials, CI tests are used to answer fundamental questions such as “After accounting for (conditioning on) a set of patient covariates  $Z$ (e.g., age or gender), does a treatment $X$ lead to better patient outcomes $Y$?”. 
Formally, given three random variables $(X,Y,Z)$ where $X \in \mathbb{R}$, $Y \in \R$, and $Z \in \R^d$, denote the conditional independence of $X$ and $Y$ given $Z$ by $X \indep Y \mid Z$. Our problem is that of testing
\begin{align*}
    H_0 \text{ (null)}: X \indep Y \mid Z \mbox{ vs. } H_1 \text{ (alternate) }: X \notindep Y \mid Z
\end{align*}
given data drawn i.i.d.\ from a joint distribution of $(X, Y, Z)$. CI testing is a \text{much} harder problem than (unconditional) independence testing, where the variable $Z$ is omitted. Indeed,~\citet{ShahP20} showed that CI testing is a statistically impossible task for  continuous  random variables.\footnote{Any test that uniformly controls the type-I error (false positive rate) for all absolutely continuous triplets $(X, Y, Z)$ such that $X \indep Y \mid Z$, even asymptotically, does not have nontrivial power against \emph{any} alternative.} Thus, techniques for independence testing \emph{do not} extend to the CI testing problem.

When the underlying data is sensitive and confidential, publishing statistics (such as the value of a CI independence test statistic or the corresponding p-value) can leak private information about individuals in the data. For instance, Genome-Wide Association Studies (GWAS) involve finding (causal) relations between Single
Nucleotide Polymorphisms (SNPs) and diseases. CI tests are building blocks for establishing these relations, and the existence of a link between a specific SNP
and a rare disease
may indicate the presence of a minority patient. \emph{Differential privacy}~\cite{DworkMNS16} is a widely studied and deployed 
formal privacy guarantee for data analysis. The output distributions of a differentially private algorithm must look nearly indistinguishable for any two input datasets that differ only in the data of a single individual. In this work, we design the first differentially private (DP) CI tests that can handle continuous variables.

\paragraph{Our Contributions.} We design two private CI tests, each based on a different set of assumptions about the data-generating distribution. {We provide the {\em first} private CI test with rigorous type-I error and power guarantees. Given the aforementioned impossibility results for non-private CI testing, to obtain a CI test with meaningful theoretical guarantees, some assumptions are necessary; in particular we must restrict the space of possible null distributions. In designing our private tests, we start with non-private CI tests that provide rigorous guarantees on type-I error control.

Our first test (Section~\ref{sec:gcm}) is a private version of the generalized covariance measure (GCM) by~\citet{ShahP20}. The type-I error guarantees of the GCM rely on the fairly weak assumption that the conditional means $\E[X \mid Z]$ and $\E[Y \mid Z]$ can be estimated sufficiently well given the dataset size. The
test statistic of the GCM is a normalized sum of the product of the residuals of (nonlinearly)
regressing $X$ on $Z$ and $Y$ on $Z$. This test statistic has \emph{unbounded} sensitivity, thus  a more careful way of adding and analyzing the impact of the privacy noise is needed. 
Our private GCM test adds appropriately scaled, zero-mean noise to the residual products, and calculates the same statistic on the noisy residual products. We show that even with the added noise, the GCM score converges asymptotically to a standard Gaussian distribution under the null hypothesis. 
The magnitude of the noise added to the residuals is constant (it does not vanish with increasing sample size $n$), thus apriori it is surprising that convergence results can be obtained in the presence of such noise. We do so by bounding how the added noise interacts with the noise from the estimation of the residuals.
While our asymptotics justify the threshold for rejecting the null, our private GCM test controls type-I error very well at small finite $n$, as we demonstrate empirically (because the central limit theorem kicks in rather quickly). Recall that finite sample guarantees on type-I error and power are impossible even for non-private CI testing~\cite{ShahP20}. 

The private GCM is the \emph{first} private CI test with rigorous power guarantees. It achieves the same power as the non-private GCM test with a $O(1 /\eps^2)$-inflation of the dataset size.
In addition, while private extensions of non-private hypothesis tests often suffer from introducing excessive type-I error (see the Related Work), the private GCM exhibits the opposite behavior: it can maintain type-I error control even when the non-private GCM test fails to do so. This occurs in scenarios where the regression methods used to estimate the conditional means either underfit or overfit.

 Our second test (Section~\ref{sec:crt}) relies on the \emph{model-X assumption} that the conditional distribution of $X \mid Z$ is known or can be well-approximated. Recently introduced by~\citet{CandsFJL16}, this assumption is useful in settings where one has access to abundant unlabeled data, such as in GWAS, but labeled data are scarce. The model-X assumption is also satisfied in experimental settings where a randomization mechanism is known or designed by the experimenter. CI tests utilizing this assumption provide exact, non-asymptotic, type-I error control~\citep{CandsFJL16, Berrett2019TheCP}, thus bypassing the hardness result of~\citet{ShahP20}. While this assumption has spurred a lot of recent research in (non-private) CI testing, there are no prior private tests in the literature that are designed to work under this assumption. 
 In this work, we focus on the conditional randomization test (CRT)~\citep{CandsFJL16}. We design a private CRT and provide theoretical guarantees on the accuracy of its p-value. 
 We adopt a popular framework for obtaining DP algorithms, known as Report Noisy Max (or the exponential mechanism), which requires defining a problem-specific score function of low sensitivity. To obtain good utility, our score function exploits the specific distribution of intermediate statistics calculated by the CRT.  This score function is novel and can be used for solving a more general problem: given a set of queries on a dataset, estimate privately the rank of a particular query amongst the rest of the queries. 


 We present a detailed empirical evaluation of the proposed tests, justifying their practicality across a wide range of settings. Our experiments confirm that our private CI tests provide the critical type-I error control, and can in fact do so more reliably than their non-private counterparts. As expected, our private tests achieve lower power due to the noise injected for privacy, which can be compensated for with a larger dataset size.

\subsection{Related Work} 

\paragraph{Private Conditional Independence Testing.} \citet{WangPS20} is the only work, prior to ours, to explicitly study private CI testing, motivated by an application to causal discovery. Their tests (obtained from Kendall's $\tau$ and Spearman's $\rho$ score) are designed for categorical $Z$. While these tests could be adapted to work for continuous $Z$ via clustering, in practice this method does not seem to control type-I error, as we show in \Fig{kendall}. The problem worsens with higher-dimensional $Z$.
Our techniques also differ from those of \citet{WangPS20}, who obtain their tests by bounding the sensitivity of non-private CI scores and adding appropriately scaled noise to the true value of the score.
They state two open problems: obtaining private CI tests for continuous $Z$ and obtaining private tests from scores of unbounded sensitivity (as is the case with the GCM score).
We solve both open problems, and manage to privatize the GCM score by instead adding noise to an intermediate statistic, the residuals of fitting $X$ to $Z$ and $Y$ to $Z$.

Another line of work \cite{smith2011privacy, KazanSGB23, PenaB23} has utilized the ``subsample and aggregate'' framework of differential privacy~\cite{NissimRS07} to obtain private versions of existing hypothesis tests in a black-box fashion. 
In this approach, the dataset is partitioned into $k$ smaller datasets; the non-private hypothesis test is evaluated on the smaller datasets; and finally, the results are privately aggregated. Based on this method, \citet{KazanSGB23} propose a test-of-tests (ToT) framework to construct a private version of any known (non-private) hypothesis test. However, they show guarantees on the power of their test based on finite-sample guarantees of the power of the non-private hypothesis test. Since finite-sample guarantees are impossible for CI testing, their method gives \emph{no} power guarantees for CI testing, and thus cannot be reliably used in practice. In addition, in \Fig{kendall} we compare the type-I error control of our tests with the ToT framework and show that it can fail to control type-I error.

\citet{smith2011privacy} analyzed the asymptotic properties of subsample-and-aggregate and showed that for a large family of statistics, one can obtain a corresponding DP statistic with the same asymptotic distribution as the original statistic.
In particular, the result of \citet{smith2011privacy} can be applied to obtain a DP version of the GCM statistic. However, compared to our results on the private GCM, (a) only a weaker notion of privacy, known as \emph{approximate} DP,  would be guaranteed, and (b) an additional condition on the data-generating distribution would have to be introduced, to guarantee a bounded third moment of the GCM statistic. 
Finally, the test of~\citet{PenaB23} only outputs a binary accept/reject decision and not a p-value as our tests provide, and was empirically outperformed by the test of~\citet{KazanSGB23}.


\paragraph{Private (non-conditional) Independence Testing.} A line of work on private independence testing has focused on privatizing the chi-squared statistic \cite{VuS09, JohnsonS13, UhlerSF13, YuFSU14, WangLK15, GaboardiLRV16, RogersK17}. These tests operate with categorical $X$ and $Y$. 
Earlier works obtained private hypothesis tests by adding noise to the histogram of the data \cite{JohnsonS13}, but it was later pointed out that this approach does not provide reliable type-I error control at small sample sizes \cite{FienbergRY10}. Consequent works used numerical approaches to obtain the distribution of the noisy statistic and calculate p-values with that distribution \cite{UhlerSF13,YuFSU14, WangLK15, GaboardiLRV16}, whereas~\citet{RogersK17} obtain new statistics for chi-squared tests whose distribution after the privacy noise can be derived analytically. In this light, one important feature of our private GCM test is that its type-I error control can be more reliable than for the non-private GCM, even at small $n$, as our experiments demonstrate. 
For continuous $X$ and $Y$, \citet{KusnerSSW16} obtained DP versions of several dependence scores (Kendall's $\tau$, Spearman's $\rho$, HSIC), however, they do not provide type-I error or power guarantees. In follow-up work,~\citet{KimS23} obtained private versions of permutation tests, that were applied to kernel-based independence tests, such as HSIC.
Note that CI testing is a much harder task than independence testing, and techniques for the latter do not necessarily translate to CI testing. Our work is part of the broader literature on private hypothesis testing~\cite{BarrientosRMC17, CampbellBRG18, SwanbergGGRGB19, CouchKSBG19, WangKLK18, AwanS18, BrennerN14, Ding21, Vepakomma22}.

\paragraph{Non-private Conditional Independence Testing.}
 A popular category of CI tests are kernel-based tests, obtained by extending the Hilbert-Schmidt independence criterion to the conditional setting ~\cite{fukumizu2007kernel, ZhangPJS11, strobl2019approximate}. However, these tests only provide a weaker \emph{pointwise} asymptotic validity guarantee. It is widely acknowledged that for a statistical test to be useful in practice, it needs to provide the stronger guarantees of either valid level at finite sample size or \emph{uniformly} asymptotic level. Our private GCM test provides the latter guarantee.

One way of getting around the hardness result of~\citet{ShahP20} is through the model-X assumption, where the conditional distribution of $X \mid Z$ is assumed to be accessible.
Tests based on this assumption, such as CRT (conditional randomization test)~\citep{CandsFJL16} and CPT (conditional permutation test)~\citep{Berrett2019TheCP}, provide a general framework for conditional independence testing, where one can use their test statistic of choice and exactly (non-asymptotically) control the type-I error regardless of the data dimensionality.

\section{Preliminaries} 

In this section, we introduce notation used in the paper as well as relevant background. In \Sec{dp}, we introduce background on differential privacy. In \Sec{stats} we provide background on hypothesis testing (including standard definitions of p-value, type-I error, uniform asymptotic level, power, etc.). Finally, in \Sec{krr} we state a result of~\citet{KusnerSSW16} used in our paper on the residuals of kernel ridge regression 


\paragraph{Notation.} If $(V_{P,n})_{n\in \N,P \in \cP}$ is a family of
sequences of random variables whose distributions are determined by $P 
\in \cP$, we say $V_{P,n} =
o_{\cP}(1)$ if for all $\delta > 0$, $\sup_{P \in \cP} \Pr_{P}[|V_{P, n}| > \delta] \to 0$. Similarly,   $V_{P,n} = O_{\cP}(1)$ if for all $\delta > 0$, $\exists M > 0$ such that $\sup_{n \in \N}\sup_{P \in \cP} \Pr_{P}[|V_{P, n}| > M] < \delta$. 

\subsection{Background on Differential Privacy}\label{sec:dp}

The notion of neighboring datasets is central to differential privacy. In this work, we consider datasets $\cD = (\bx, \by, \bz)$ of $n$ datapoints $\{(x_i,y_i,z_i)\}_{i=1}^n$, drawn i.i.d.\ from a joint distribution $P$ on some domain $\mathcal{X} \times \mathcal{Y} \times \mathcal{Z}$. Let $\mathcal{D}$ denote the universe of datasets. A dataset $\cD' \in \mathcal{D}$ is a neighbor of $\cD$ if it can be obtained from $\cD$ by replacing at most one datapoint $(x_i, y_i, z_i) \in \cD$ with an arbitrary entry $(x'_i,y'_i,z'_i) \in \mathcal{X} \times \mathcal{Y} \times \mathcal{Z}$, for some $i \in [n]$. For the purposes of CRT, where we use the distributional information about $X \mid Z$ to resample additional data, we define  $\cD$ to include the new samples (see Section~\ref{sec:crt}).

\begin{definition}[Differential privacy~\cite{DworkMNS16}]\label{def:dp}
A randomized algorithm $\text{Alg}$ is $\eps$-DP if for all neighboring datasets $\cD,\cD'$ and all events $\mathcal{R}$ in the output space of $\text{Alg}$, it holds
$\Pr[\text{Alg}(\cD) \in \mathcal{R}] \leq \exp(\eps) \cdot \Pr[\text{Alg}(\cD') \in \mathcal{R}],$
where the probability is over the randomness of the algorithm. 
\end{definition}   

The Laplace mechanism is a widely used framework for obtaining DP algorithms~\citep{DworkMNS16}.

\begin{definition}[$\ell_1$-sensitivity] \label{def:laplace}
For a function $f \colon \mathcal{D} \to \R^d$, its $\ell_1$-sensitivity $\Delta_f$ is defined as 
$$
    \Delta_f = \max_{\cD, \cD' \mathrm{neighbors}} \lVert f(\cD) - f(\cD') \rVert_1.
$$
\end{definition}
\begin{lemma}[Laplace Mechanism~\citep{DworkMNS16}] 
\label{lem:laplace}
Let $\eps > 0$ and $f \colon \mathcal{D} \to \R^d$ be a function with $\ell_1$-sensitivity $\Delta_f$. Let $W \sim \Lap(0,\Delta_f/\eps)$ be a noise vector from the  Laplace
distribution with scale parameter $\Delta_f/\eps$. The Laplace Mechanism that, on input  $\cD$ and $\eps$, outputs $f(\cD) + W$ is $\eps$-DP.
\end{lemma}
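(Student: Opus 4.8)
The plan is to verify the guarantee directly from \Def{dp} by comparing the output densities of the mechanism on two arbitrary neighboring datasets. Write $\mathrm{Alg}(\cD) = f(\cD) + W$, where the coordinates $W_1,\dots,W_d$ of $W$ are i.i.d.\ draws from $\Lap(0,\Delta_f/\eps)$, each with density $t\mapsto \tfrac{\eps}{2\Delta_f}\exp(-\eps|t|/\Delta_f)$. By coordinatewise independence, the output $\mathrm{Alg}(\cD)$ has the product density $p_{\cD}$ on $\R^d$ given by
\[
p_{\cD}(t) = \prod_{i=1}^d \frac{\eps}{2\Delta_f}\exp\!\left(-\frac{\eps\,|f(\cD)_i - t_i|}{\Delta_f}\right).
\]
It then suffices to bound the pointwise density ratio $p_{\cD}(t)/p_{\cD'}(t)$ by $\exp(\eps)$ for every $t\in\R^d$ and every pair of neighbors $\cD,\cD'$; integrating that bound over an arbitrary event $\cR$ will yield $\Pr[\mathrm{Alg}(\cD)\in\cR]\le\exp(\eps)\Pr[\mathrm{Alg}(\cD')\in\cR]$, which is exactly the claim.

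First I would form the ratio of the two product densities. The normalizing constants $\eps/(2\Delta_f)$ cancel coordinate by coordinate, leaving
\[
\frac{p_{\cD}(t)}{p_{\cD'}(t)} = \exp\!\left(\frac{\eps}{\Delta_f}\sum_{i=1}^d \big(|f(\cD')_i - t_i| - |f(\cD)_i - t_i|\big)\right).
\]
The key step is to apply the reverse triangle inequality in each coordinate, $|f(\cD')_i - t_i| - |f(\cD)_i - t_i| \le |f(\cD)_i - f(\cD')_i|$. Summing over $i$ replaces the exponent by $\tfrac{\eps}{\Delta_f}\lVert f(\cD) - f(\cD')\rVert_1$, and invoking the definition of $\ell_1$-sensitivity from \Def{laplace}, namely $\lVert f(\cD) - f(\cD')\rVert_1 \le \Delta_f$ for neighboring inputs, bounds the exponent by $\eps$. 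Hence $p_{\cD}(t)/p_{\cD'}(t)\le\exp(\eps)$ uniformly in $t$.

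To finish, I would integrate the pointwise bound over any measurable event $\cR$, obtaining $\Pr[\mathrm{Alg}(\cD)\in\cR] = \int_{\cR} p_{\cD}(t)\,dt \le \exp(\eps)\int_{\cR} p_{\cD'}(t)\,dt = \exp(\eps)\Pr[\mathrm{Alg}(\cD')\in\cR]$, matching \Def{dp}. There is no substantive obstacle here, as every step is elementary; the only point requiring care is the coordinatewise application of the reverse triangle inequality and checking that the per-coordinate deviations aggregate to exactly the $\ell_1$ norm (rather than, say, the $\ell_2$ norm). This is precisely why sensitivity is measured in the $\ell_1$ metric and the Laplace noise is calibrated to scale $\Delta_f/\eps$.
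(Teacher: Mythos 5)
Your proof is correct and is exactly the canonical density-ratio argument for the Laplace mechanism; the paper itself states this lemma without proof, citing \citet{DworkMNS16}, whose proof proceeds the same way (pointwise ratio of product Laplace densities, reverse triangle inequality per coordinate, aggregation to the $\ell_1$ norm, then integration over the event). No gaps.
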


Differential privacy satisfies a post-processing property.

\begin{lemma}[Post-Processing~\cite{DworkMNS16}]\label{lem:postprocess} If the algorithm $\cA$ is $\eps$-differentially private, and $g$ is any randomized function, then the algorithm $g(\cA(x))$ is $\eps$-differentially private. 
\end{lemma}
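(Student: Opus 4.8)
The plan is to reduce the randomized statement to the deterministic one, and then recover the randomized case by averaging over the internal coins of $g$. First I would dispose of the case where $g$ is deterministic. Fix neighboring datasets $\cD, \cD'$ and an arbitrary event $\mathcal{R}$ in the output space of $g \circ \cA$. The key observation is that the event $\{g(\cA(\cD)) \in \mathcal{R}\}$ coincides with the event $\{\cA(\cD) \in \mathcal{S}\}$, where $\mathcal{S} = g^{-1}(\mathcal{R}) = \{t : g(t) \in \mathcal{R}\}$ is the preimage of $\mathcal{R}$ under $g$. Since $\mathcal{S}$ is an event in the output space of $\cA$, applying the $\eps$-DP guarantee of $\cA$ (Definition~\ref{def:dp}) to $\mathcal{S}$ directly gives
\[
\Pr[g(\cA(\cD)) \in \mathcal{R}] = \Pr[\cA(\cD) \in \mathcal{S}] \leq e^{\eps}\,\Pr[\cA(\cD') \in \mathcal{S}] = e^{\eps}\,\Pr[g(\cA(\cD')) \in \mathcal{R}],
\]
which is precisely the $\eps$-DP condition for $g \circ \cA$.

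For the general (randomized) case, I would model $g$ as a deterministic map of its argument together with an independent source of randomness: there is a distribution $\mu$ and a family of deterministic functions $(g_r)_r$ such that drawing $r \sim \mu$ and outputting $g_r(t)$ reproduces the law of $g(t)$, with $r$ independent of the internal randomness of $\cA$. Conditioned on $r$, each $g_r$ is deterministic, so the previous paragraph applies verbatim to $g_r$. Averaging the resulting inequality over $r \sim \mu$ and using independence to exchange the averaging with the probability over the coins of $\cA$ yields
\[
\Pr[g(\cA(\cD)) \in \mathcal{R}] = \E_{r \sim \mu}\big[\Pr[g_r(\cA(\cD)) \in \mathcal{R}]\big] \leq e^{\eps}\, \E_{r \sim \mu}\big[\Pr[g_r(\cA(\cD')) \in \mathcal{R}]\big] = e^{\eps}\,\Pr[g(\cA(\cD')) \in \mathcal{R}].
\]
Since $\cD, \cD'$, and $\mathcal{R}$ were arbitrary, this establishes that $g \circ \cA$ is $\eps$-DP.

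The main obstacle is not any deep inequality — the multiplicative DP bound passes through linearly under both the preimage operation and the averaging over the coins of $g$ — but rather the measure-theoretic bookkeeping. One must ensure that $\mathcal{S} = g^{-1}(\mathcal{R})$ is a measurable event in the output space of $\cA$ (so that the guarantee of $\cA$ can legitimately be invoked on it), that the representation of the randomized $g$ as a mixture of deterministic maps $(g_r)_r$ with $r$ independent of $\cA$ is valid, and that Fubini/Tonelli justifies swapping the expectation over $r$ with the probability over the randomness of $\cA$. I would state these regularity conditions explicitly (they hold in the standard model where a randomized algorithm is a Markov kernel) and otherwise keep the argument at the level above.
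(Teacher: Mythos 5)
Your proof is correct and is essentially the canonical argument (the paper itself states this lemma without proof, citing the standard reference): the deterministic case via the preimage $\mathcal{S}=g^{-1}(\mathcal{R})$, followed by writing a randomized $g$ as a mixture of deterministic maps and averaging, is exactly the proof given in Dwork and Roth. Your closing remarks on measurability of the preimage and the validity of the mixture representation are the right caveats and do not change the substance.
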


\subsection{Background on Hypothesis Testing} \label{sec:stats}
Let $\cP$ be the class of the joint distributions for the random variables $(X, Y, Z)$. We say $P \in \cP$ is a null distribution if $X \indep Y \mid Z$. The null-hypothesis, denoted $\cP_0$, is the class of null distributions, 

\paragraph{Type-I error and validity.} Consider a (potentially) randomized test $\varphi_n$ that is run on $n$ samples from a distribution $P \in \cP$ and outputs a binary decision: $1$ for rejecting the null hypothesis and $0$ for accepting the null hypothesis. The quantity $\Pr_P[\varphi_n = 1]$, where $P$ is a null distribution, refers to the type-I error of the test, i.e., the probability that it erroneously rejects the (true) null hypothesis. Given level $\alpha \in (0,1)$ and the null hypothesis $\cP_0$, we say that the test has \emph{valid level at sample size $n$} if the type-I error is bounded by $\alpha$, i.e.,:
\begin{align*}
    \sup_{P \in \cP_0}\Pr_P[\varphi_n = 1] \leq \alpha. 
\end{align*}
The sequence $\{  \varphi_n \}_{n=1}^\infty$ has
\begin{align*}
    \textit{uniformly asymptotic level } \quad &\text{if} \quad \lim_{n \to \infty} \sup_{P \in \cP_0} \Pr_P[\varphi_n = 1] \leq \alpha, \\
    \textit{pointwise asymptotic level } \quad &\text{if} \quad \sup_{P \in \cP_0}  \lim_{n \to \infty} \Pr_P[\varphi_n = 1] \leq \alpha. 
\end{align*}
Usually, we want at least uniformly asymptotic level to hold for a test. Otherwise, for any sample size $n$, there can be some null distribution that does not control type-I error at that sample size. 

A hypothesis test is usually derived from a statistic $T_n$ (such as the GCM statistic) calculated on $n$ samples drawn i.i.d from the distribution $P$ of $(X, Y, Y)$. Having obtained a value $t$ for the statistic $T_n$, the two-sided p-value is:
\begin{align*}
    \text{p-value} = \Pr_{P}[|T_n| \geq t ].
\end{align*}
The hypothesis test $\varphi_n$ with desired validity level $\alpha$ can then be defined as
\begin{align}\label{eq:hypothesis_test}
    \varphi_n = \mathbf{1}[\text{p-value} \leq \alpha].
\end{align}
 Therefore, to obtain a test with the desired validity we need to compute the p-value. 
 The p-value is typically calculated using information about the distribution of $T_n$. We say that $T_n$ converges uniformly over $\cP_0$ to the standard Gaussian distribution if:
\begin{align*}
        \lim_{n \to \infty} \sup_{P \in \cP_0} \sup_{t \in \R} |\Pr_{P}[T_n \leq t] - \Phi(t)| = 0,
\end{align*}
where $\Phi$ is the CDF of the standard Gaussian.
For the GCM statistic, we are given that (under mild assumptions) $T_n$ converges uniformly over the null hypothesis $\cP_0$ to a standard Gaussian distribution \cite{ShahP20}. Thus, if we set $\text{p-value}=  2(1- \Phi(|t|)$ and define the hypothesis test $\varphi_n$ as in \Eqn{hypothesis_test}, we obtain that $\varphi_n$ has uniformly asymptotic level $\alpha$. 

\paragraph{Power.} Once we have a test with uniformly asymptotic level, we would also like the test to correctly accept the alternate hypothesis, when this hypothesis holds. Let $\mathcal{Q}$ be the set of alternate distributions (for which $X \notindep Y \mid Z$). The power of a test is the probability that it correctly rejects the null hypothesis, given that the alternate hypothesis holds. A sequence $\{\varphi_n\}_{n=1}^{\infty}$ of tests has  
\begin{align*}
    \textit{uniformly asymptotic power} \quad \text{if} \quad  \lim_{n \to \infty} \inf_{P \in \mathcal{Q}} \Pr_P[\varphi_n = 1] = 1. 
\end{align*}

\subsection{Residuals of Kernel Ridge Regression}\label{sec:krr}

In our algorithms and experiments, we use kernel ridge regression (KRR) as a procedure for regressing $\bx$ and $\by$ on $\bz$, and rely on the following result by~\citet{KusnerSSW16} about the sensitivity of the residuals of KRR.\footnote{One could also use other regression techniques within our private GCM and private CRT frameworks, and theoretical guarantees continue to hold if similar ($\approx O(1/n)$) bounds on the sensitivity of the residuals are true.}

\begin{theorem}[Restated Theorem 5 of \citet{KusnerSSW16}] \label{thm:kusner}
Let $(\bu,\bv)$ be a dataset of $n$ datapoints 
$(u_i, v_i)$, $i \in [n]$ from the domain $\mathcal{U} \times \mathcal{V} \subset \R \times \R^d$. Suppose that $|\mathcal{U}| \leq 1$. Given a Hilbert space $\mathcal{H}$, let $\bw$ be the vector that minimizes the kernel ridge regression objective 
$$(\lambda/2) \lVert \bw \rVert^2_{\mathcal{H}} + (1/n)\sum_{i=1}^n (u_i - \bw^\top \phi(v_i))^2,$$
for kernel $\phi \colon \R^d \to \mathcal{H}$ with $\lVert \phi(v) \rVert_{\mathcal{H}} \leq 1$ for all $v \in \mathcal{V}$. Define $\bw'$ analogously for a neighboring dataset $(\bu', \bv')$  that is obtained by replacing one datapoint in $(\bu,\bv)$. Then $\lVert \bw \rVert_{\mathcal{H}} \leq \sqrt{2/\lambda}$ and for all $v \in \bv$ it holds: 
$$
|\bw^\top \phi(v) -  \bw'^\top \phi(v)| \leq 8\sqrt{2}/(\lambda^{3/2}n) + 8/(\lambda n).
$$
\end{theorem}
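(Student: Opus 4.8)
The plan is to treat this as a standard algorithmic-stability (sensitivity) argument for a regularized empirical risk minimizer, exploiting the strong convexity that the ridge penalty confers. Write the objective as $F(\bw) = (\lambda/2)\lVert \bw \rVert_{\mathcal{H}}^2 + (1/n)\sum_{i=1}^n \ell_i(\bw)$ with $\ell_i(\bw) = (u_i - \bw^\top \phi(v_i))^2$, and let $F'$ be the analogous objective for the neighboring dataset, with minimizers $\bw$ and $\bw'$ respectively. The first claim, $\lVert \bw \rVert_{\mathcal{H}} \le \sqrt{2/\lambda}$, I would get immediately by comparing the optimal value to the value at $\bw = 0$: since each $\ell_i(0) = u_i^2 \le 1$ by $|\mathcal{U}| \le 1$, optimality gives $(\lambda/2)\lVert \bw \rVert_{\mathcal{H}}^2 \le F(\bw) \le F(0) \le 1$, hence $\lVert \bw \rVert_{\mathcal{H}} \le \sqrt{2/\lambda}$. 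The identical bound holds for $\bw'$.

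For the sensitivity bound, the key is that $F$ and $F'$ are each $\lambda$-strongly convex (the squared-loss terms are convex and the penalty is $\lambda$-strongly convex). Since $\bw$ minimizes $F$ and $\bw'$ minimizes $F'$, strong convexity evaluated at the minimizers (where the gradients vanish) yields $F(\bw') - F(\bw) \ge (\lambda/2)\lVert \bw - \bw' \rVert_{\mathcal{H}}^2$ and $F'(\bw) - F'(\bw') \ge (\lambda/2)\lVert \bw - \bw' \rVert_{\mathcal{H}}^2$. Adding these and regrouping, the $F$ and $F'$ contributions cancel except on the single datapoint $j$ where the two datasets differ, so the right-hand side collapses to $(1/n)$ times a difference of the changed loss terms. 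This gives $\lambda \lVert \bw - \bw' \rVert_{\mathcal{H}}^2 \le (1/n)\bigl[(\ell_j(\bw') - \ell_j(\bw)) + (\ell_j'(\bw) - \ell_j'(\bw'))\bigr]$.

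The crux---and the step I expect to be the main obstacle---is bounding this right-hand side, because the squared loss is \emph{not} globally Lipschitz. Here the first claim does the work: both $\bw$ and $\bw'$ (and the segment between them) lie in the ball of radius $\sqrt{2/\lambda}$, and on that ball $|u_i - \bw^\top \phi(v_i)| \le |u_i| + \lVert \bw \rVert_{\mathcal{H}} \lVert \phi(v_i) \rVert_{\mathcal{H}} \le 1 + \sqrt{2/\lambda}$, using $\lVert \phi(v_i) \rVert_{\mathcal{H}} \le 1$. Consequently each $\ell_i$ is $L$-Lipschitz there with $L = 2(1 + \sqrt{2/\lambda})$ (its gradient is $-2(u_i - \bw^\top \phi(v_i))\phi(v_i)$). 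Applying this to both loss differences bounds the right-hand side by $(2L/n)\lVert \bw - \bw' \rVert_{\mathcal{H}}$; dividing through by $\lVert \bw - \bw' \rVert_{\mathcal{H}}$ (the claim being trivial when the minimizers coincide) leaves $\lVert \bw - \bw' \rVert_{\mathcal{H}} \le 2L/(\lambda n) = O(\lambda^{-3/2} n^{-1} + \lambda^{-1} n^{-1})$, matching the stated bound up to the absolute constant.

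Finally, I would pass from parameter distance to prediction distance: for any $v \in \bv$, Cauchy--Schwarz and $\lVert \phi(v) \rVert_{\mathcal{H}} \le 1$ give $|\bw^\top \phi(v) - \bw'^\top \phi(v)| = |(\bw - \bw')^\top \phi(v)| \le \lVert \bw - \bw' \rVert_{\mathcal{H}}$, so the per-point residual sensitivity inherits the same bound. The two care points are (i) verifying that the cancellation in the strong-convexity step isolates precisely the single differing summand, and (ii) confirming that the Lipschitz constant is valid exactly on the region guaranteed by the first claim---absent that norm bound the squared loss would admit no finite Lipschitz constant and the argument would break down.
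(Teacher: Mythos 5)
Your argument is correct, but note that the paper does not prove this statement at all: it is imported verbatim as ``Restated Theorem 5 of \citet{KusnerSSW16},'' so there is no in-paper proof to compare against. Your route is the standard uniform-stability argument for strongly convex regularized ERM (in the spirit of Bousquet--Elisseeff, which is also essentially how the original reference proceeds): the norm bound via comparison with $\bw=0$, the two strong-convexity inequalities at the respective minimizers whose sum cancels all but the single differing loss term, and the local Lipschitz bound on the ball of radius $\sqrt{2/\lambda}$ supplied by the first claim. Each step checks out --- the segment between $\bw$ and $\bw'$ stays in the ball by convexity, the replaced point still satisfies $|u_j'|\leq 1$ and $\lVert\phi(v_j')\rVert_{\mathcal{H}}\leq 1$ since it is drawn from the same domain, and the division by $\lVert\bw-\bw'\rVert_{\mathcal{H}}$ is handled. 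In fact your chain gives $\lVert\bw-\bw'\rVert_{\mathcal{H}}\leq 4\sqrt{2}/(\lambda^{3/2}n)+4/(\lambda n)$, which is a factor of two sharper than the constants quoted in the statement; the stated bound follows a fortiori.
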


\section{Private Generalized Covariance Measure}\label{sec:gcm}

Here, we present our private Generalized Covariance Measure (GCM) test.
Missing proofs are in Appendix~\ref{app:gcm}.

\paragraph{GCM Test.} We first describe the non-private GCM test of~\citet{ShahP20}. Given a joint distribution $P$ of the random variables $(X,Y,Z)$, we can always write:  
\begin{align*}
    X = f_P(Z) + \chi_P, \quad Y = g_P(Z) + \xi_P,
\end{align*}
where $f_P(z) =\E_P[X|Z=z]$, $g_P(z) = \E_P[Y|Z=z]$, $\chi_P = X -  f_P(z)$, and $\xi_P = Y -  g_P(z)$.

 Let $\cD = (\bx, \by, \bz)$ be a dataset of $n$ i.i.d.\ samples from $P$. Let $\hat{f}$ and $\hat{g}$ be approximations to the conditional expectations $f_P$ and $g_P$, obtained by fitting $\bx$ to $\bz$ and $\by$ to $\bz$, respectively. We consider the products of the residuals from the fitting procedure:
\begin{align}
    R_i = ((x_i - \hat{f}(z_i)) (y_i - \hat{g}(z_i)) \text{ for } i \in [n]. \label{res_products}
\end{align}
The GCM test statistic $T$ is defined as the normalized mean of the residual products, i.e.,
\begin{align}\label{eq:t}{\small
    T(R_1, \dots, R_n) = \frac{ \frac{1}{\sqrt{n}}\sum_{i=1}^n R_i}{ (\frac{1}{n}\sum_{i=1}^n R_i^2 - (\frac{1}{n} \sum_{k=1}^n R_k)^2)^{1/2}  }. 
    }
\end{align}
The normalization critically ensures that $T$ follows a standard normal distribution asymptotically. However, it also leads to the unbounded sensitivity of the statistic $T$.

\paragraph{Private GCM Test.} To construct a DP version of the GCM test, we focus on the vector of residual products, $\bR = (R_1,\dots,R_n)$. Let $\Delta$ denote the $\ell_1$-sensitivity of $\bR$. Given $\Delta$, we use the Laplace mechanism (Lemma~\ref{lem:laplace}) to add scaled Laplace noise to $\bR$ and then compute $T$ on the noisy residual products. The private GCM test we present (in Algorithm~\ref{alg:gcm}) can be used with any fitting procedure, as long as a bound on the sensitivity of the residuals for that procedure is known. 
\vspace*{-1ex}
\begin{algorithm}[h]
  \caption{Private Generalized Covariance Measure} \label{alg:gcm}
  \begin{algorithmic}[1]
    \State {\textbf{Input: }}Dataset $(\bx, \by, \bz) = \{(x_i, y_i, z_i)\}_{i=1}^n$, privacy parameter $\eps > 0$, fitting procedure $\mathcal{F}$, bound $\Delta > 0$ on the sensitivity of residual products of $\mathcal{F}$. 
    \State Let $\hat{f} = \mathcal{F}(\bz, \bx)$ and $\hat{g} = \mathcal{F}(\bz, \by)$  
    \For{ $i = 1, \dots, n$} 
    \State $r_{X, i} \leftarrow x_i - \hat{f}(z_i)$, $r_{Y, i} \leftarrow y_i - \hat{g}(z_i)$ 
    \State $R_i \leftarrow r_{X, i} \cdot r_{Y, i}$ 
    \State $W_i \sim \mathrm{Lap}(0, \Delta/\eps)$ 
    \EndFor
    \State Let $T^{(n)} \leftarrow T(R_1+W_1, \dots, R_n + W_n)$ \Comment{See~\Eqn{t}}
    \State Output p-value $ = 2\cdot(1-\Phi(|T^{(n)}|)$ 
  \end{algorithmic}
\end{algorithm}
\vspace*{-1ex}

While the algorithm uses a simple noise addition strategy to privatize the GCM test, its asymptotic behavior is rather unexpected. Notice that the noise added to the residuals has constant variance that does not vanish with $n\to \infty$. Yet, we show (in \Thm{level_rescaled}) that $T^{(n)}$ converges to a standard Gaussian distribution similarly to the non-private GCM statistic. 
The key step in the analysis is to show that the error introduced by the noise random variables grows at a slower rate than the error introduced by the fitting procedure.
Our algorithmic framework opens up a question of which fitting procedures have residuals with constant bounded sensitivity. We show such a result when Kernel Ridge Regression is used as the fitting method.

Next, we show theoretical guarantees on the type-I error control and power of \Alg{gcm} under mild assumptions on the fitting procedures $\hat{f}$ and $\hat{g}$, listed in Definition~\ref{def:assumption}.

\begin{definition}[Good fit]
\label{def:assumption}
    Consider $u_P(z) = \E_{P}[\chi_P^2 \mid Z = z]$, $v_P(z) =  \E_{P}[\xi_P^2 \mid Z = z]$, and the following error terms:
\begin{align}
    A_f &= \frac{1}{n}\sum_{i=1}^n (f_P (z_i) - \hat{f}(z_i))^2,  
    \quad\quad B_f = \frac{1}{n}\sum_{i=1}^n (f_P (z_i) - \hat{f}(z_i))^2v_P(z_i), \nonumber \\
    A_g &= \frac{1}{n}\sum_{i=1}^n (g_P (z_i) - \hat{g}(z_i))^2,  
    \quad\quad B_g = \frac{1}{n}\sum_{i=1}^n (g_P (z_i) - \hat{g}(z_i))^2u_P(z_i). \label{eq:a_and_b}
\end{align}
Let $\cP$ be a class of distributions for $(X, Y, Z)$ that are continuous with respect to the Lebesgue measure and such that $\sup_{P \in \cP} \E[|\chi_P \xi_P|^{2+\eta}] \leq c$ for some constants $c, \eta > 0$. 
We say the classifiers $\hat{f}$ and $\hat{g}$ are a good fit for $\cP$ 
if $A_f A_g = o_{\cP}(n^{-1}), B_f =  o_{\cP}(1)$, and $B_g = o_{\cP}(1)$.  
\end{definition}

The key part of Definition~\ref{def:assumption} is that the product of mean squared errors of the fitting method (for fitting $X$ to $Z$ and $Y$ to $Z$) converge to zero at a sublinear rate in the sample size. This is known as the ``doubly robust" assumption and is common in many analyses in theoretical ML~\citep{chernozhukov2018double}. It is a mild assumption because it only requires one of two regressions, either fitting $X$ to $Z$ or fitting $Y$ to $Z$, to be sufficiently accurate.
These assumptions are in fact slightly weaker than those of \citet{ShahP20} for guaranteeing uniformly asymptotic level\footnote{Given a level $\alpha \in  (0, 1)$ and null hypothesis $\cP_0$, a test $\psi_n$ has uniformly asymptotic level if its asymptotic type-I error is bounded by $\alpha$ over all distributions in $\cP_0$, i.e., $\lim_{n \to \infty} \sup_{P \in \cP_0} \Pr_P[\psi_n \mbox{ rejects null}] \leq \alpha$. See \Sec{stats}.} and power of the GCM, as we do not require a lower bound on the variance $\E[\chi_P^2 \xi_P^2]$ of the true residuals. This requirement is no longer necessary as we add finite-variance noise to the residual products.

\paragraph{Type-I Error Control.} We show that as with the GCM test of~\citet{ShahP20}, the private counterpart has uniformly asymptotic level. 
While the original GCM test of~\citet{ShahP20} does not require the input variables to be bounded, we assume bounded random variables $X$ and $Y$ to obtain bounds on the sensitivity $\Delta$ of the residual products.  For the rest of this section, we assume publicly known bounds $a$ and $b$  on the domain $\mathcal{X}$ of $X$ and $\mathcal{Y}$ of $Y$, (i.e., $|x| \leq a, \forall x \in \mathcal{X}$ and $|y| \leq b, \forall y \in \mathcal{Y}$).\footnote{
These bounds can also be replaced with high probability bounds, but the privacy guarantees of our CI test would be replaced with what is known as {\em approximate differential privacy}.}
Note that we do not assume such bounds on the domain of $Z$, which is important as $Z$ could be high-dimensional.

\begin{restatable}{theorem}{levelrescaled}{\emph{(Type-I Error Control of Private GCM)}}
\label{thm:level_rescaled}
Let $a$ and $b$ be known bounds on the domains of $X$ and $Y$, respectively. Given a dataset $\cD = (\bx, \by, \bz)$,  let $(\hat{\bx}, \hat{\by}, \bz)$ be the rescaled dataset obtained by setting $\hat{\bx} = \bx/ a$ and $\hat{\by} = \by /b$. Consider $R_i, i \in [n]$, as defined in \eqref{res_products}, for the rescaled dataset $(\hat{\bx}, \hat{\by}, \bz)$. 
Let $W_i \sim \mathrm{Lap}(0, \Delta/\eps)$ for $i \in [n]$, where $\Delta, \eps > 0$ are constants. Let $\cP$ be the set of null distributions from Definition~\ref{def:assumption} for which $\hat{f}, \hat{g}$ are a good fit. 
The statistic $T^{(n)} = T(R_1 + W_1, \dots, R_n + W_n)$, defined in Algorithm~\ref{alg:gcm}, converges uniformly over $\cP$ to the standard Gaussian distribution $\cN(0, 1)$.
\end{restatable}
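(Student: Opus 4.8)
The plan is to recognize $T^{(n)}$ as the \emph{studentized} (self-normalized) sum of the noisy residual products $\tilde R_i = R_i + W_i$, and to show it is asymptotically equivalent to the studentized sum of the i.i.d.\ variables $\rho_i + W_i$, where $\rho_i = \chi_{P,i}\xi_{P,i}$ are the \emph{true} residual products. The crucial observation is that, because the noise is simply \emph{added} to each residual product, it inflates the variance of the numerator and the plug-in variance estimate in the denominator by exactly the same amount $\E[W_i^2] = 2(\Delta/\eps)^2$; studentization then cancels this inflation, leaving a ratio that is asymptotically $\cN(0,1)$. A pleasant by-product is that the noise variance supplies a uniform floor $\sigma_P^2 := \E[\chi_P^2\xi_P^2] + 2(\Delta/\eps)^2 \ge 2(\Delta/\eps)^2 > 0$ on the limiting denominator, which is precisely why (unlike~\citet{ShahP20}) no lower bound on $\E[\chi_P^2\xi_P^2]$ is required. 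The final assembly would be a \emph{uniform} Slutsky argument: uniform asymptotic normality of the numerator divided by $\sigma_P$, together with convergence in probability of the denominator to $\sigma_P$ uniformly over $\cP$.

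For the numerator, I would expand each residual as $x_i - \hat f(z_i) = \chi_{P,i} + e^f_i$ and $y_i - \hat g(z_i) = \xi_{P,i} + e^g_i$ with fitting errors $e^f_i = f_P(z_i) - \hat f(z_i)$, $e^g_i = g_P(z_i) - \hat g(z_i)$, so that
\[
\tfrac{1}{\sqrt n}\sum_{i=1}^n \tilde R_i = \tfrac{1}{\sqrt n}\sum_{i=1}^n (\rho_i + W_i) + \tfrac{1}{\sqrt n}\sum_{i=1}^n \big(\chi_{P,i} e^g_i + \xi_{P,i} e^f_i + e^f_i e^g_i\big).
\]
The three error terms are exactly what the good-fit assumptions control: Cauchy--Schwarz bounds $|\tfrac{1}{\sqrt n}\sum e^f_i e^g_i| \le \sqrt{n\,A_f A_g} = o_{\cP}(1)$ since $A_f A_g = o_{\cP}(n^{-1})$, while under the null $\E[\chi_{P}\mid Z] = \E[\xi_P \mid Z] = 0$, so conditionally on $\bz$ the two cross terms have mean zero and variances $B_g = o_{\cP}(1)$ and $B_f = o_{\cP}(1)$, giving $o_{\cP}(1)$ by conditional Chebyshev. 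Thus the numerator equals $\tfrac{1}{\sqrt n}\sum(\rho_i + W_i) + o_{\cP}(1)$, and since the $\rho_i + W_i$ are i.i.d., mean zero (using $\E[\rho_i]=0$ under the null), with variance $\sigma_P^2$ and a uniformly bounded $(2+\eta)$-th moment (the moment bound on $\chi_P\xi_P$ from Definition~\ref{def:assumption} plus the fixed, all-moments-bounded Laplace noise), a uniform Berry--Esseen bound yields that $\tfrac{1}{\sigma_P\sqrt n}\sum(\rho_i+W_i)$ converges to $\cN(0,1)$ uniformly over $\cP$.

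For the denominator I would show $\tfrac1n\sum \tilde R_i^2 - (\tfrac1n\sum \tilde R_k)^2 \pto \sigma_P^2$ uniformly. Expanding $\tilde R_i^2 = \rho_i^2 + W_i^2 + 2\rho_i W_i + (\text{terms in } e^f_i, e^g_i)$, the leading pieces converge by a uniform law of large numbers to $\E[\rho^2]$ and $\E[W^2]$, the independent cross term $\tfrac2n\sum\rho_i W_i$ vanishes, and the squared empirical mean is $O_{\cP}(1/n)$ because the numerator sum is $O_{\cP}(1)$. The error-containing pieces are again handled by $A_f A_g, B_f, B_g$; here one must also bound the \emph{interaction} terms between the noise and the estimation error, e.g.\ $\tfrac1n\sum (\chi_{P,i}e^g_i) W_i$, which are $o_{\cP}(1)$ because $W_i$ is independent and zero-mean while the fitting error makes $\tfrac1n\sum \chi_{P,i}^2 (e^g_i)^2$ small. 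Combining the two parts with a uniform Slutsky lemma gives $T^{(n)}\Rightarrow\cN(0,1)$ uniformly over $\cP$, as claimed. (The initial rescaling $\hat\bx=\bx/a$, $\hat\by=\by/b$ only fixes the noise scale via the sensitivity $\Delta$ and leaves the conditional-independence structure and the good-fit assumptions intact, so it does not affect this argument.)

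I expect the main obstacle to be \emph{uniformity} over $\cP$ rather than the pointwise limit: every ``converges in probability'' and ``converges in distribution'' above must be promoted to hold uniformly, which forces the use of quantitative (Berry--Esseen-type) CLTs driven by the uniform $(2+\eta)$-moment bound and uniform concentration for the denominator. Within this, the most delicate accounting is the noise--estimation interaction flagged in the paper's overview: one must verify that the constant-variance noise contributes only \emph{lower-order} fluctuations relative to the fitting error in the denominator (and never multiplies an unbounded quantity in the numerator), so that the self-normalization is not disturbed. This is exactly where the good-fit rates $A_f A_g = o_{\cP}(n^{-1})$ and $B_f, B_g = o_{\cP}(1)$, combined with the boundedness of all moments of the fixed-scale Laplace noise, do the work.
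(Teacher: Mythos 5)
Your proposal is correct and follows essentially the same route as the paper's proof: the same decomposition of the numerator into the i.i.d.\ part $\frac{1}{\sqrt n}\sum(\chi_i\xi_i + W_i)$ plus three fitting-error terms controlled by $A_fA_g$, $B_f$, $B_g$, a uniform CLT under the $(2+\eta)$-moment bound for the leading term, and a uniform law of large numbers plus Slutsky for the denominator, including the same noise--estimation interaction terms $\frac1n\sum(\cdot)W_i$ that the paper treats via the independence and bounded moments of the Laplace noise. The observation that the added noise inflates numerator variance and the studentizing denominator identically (and thereby removes the need for a lower bound on $\E[\chi_P^2\xi_P^2]$) matches the paper's remarks exactly.
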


Since $T^{(n)}$ converges uniformly to a standard Gaussian distribution, this implies that the CI test in Algorithm~\ref{alg:gcm} has uniformly asymptotic level (see Section~\ref{sec:stats}). It also satisfies a weaker pointwise asymptotic level guarantee that holds under slightly weaker assumptions (\Thm{level_rescaled_full}). Note that this result is independent of the bound on $|X|$ and $|Y|$.
While the guarantee of \Thm{level_rescaled} is asymptotic, type-I error control ``kicks in'' at very small sample sizes (like $n = 100$) as confirmed by our experimental results. This behavior is typical for many statistics that converge to a standard Gaussian distribution. Recall that finite-sample guarantees are impossible even for non-private CI testing, even under the assumptions of Definition~\ref{def:assumption}, since these assumptions are only asymptotic in nature.

\paragraph{Noise Addition Leads to Better Type-I Error Control.} A beneficial consequence of the privacy noise is that there are scenarios, under the null hypothesis, where the non-private GCM fails to provide type-I error control, but our private GCM does. If the functions $\hat{f}$ and $\hat{g}$ fail to fit the data (i.e.,~the conditions on $A_f, A_g, B_f, B_g$ in \Thm{level_rescaled} are violated), private GCM can still provide type-I error control. We show in \Sec{experiments} one such scenario, when the learned model underfits the data. Consider on the other hand the case when the model overfits, and more extremely, when the model interpolates asymptotically, i.e.~$\hat{f}(z_i) \to x_i$ and $\hat{g}(z_i) \to y_i$ as $n \to \infty$ for all $i \in [n]$ \cite{LiangR18}. It is not too hard to show that convergence to the standard Gaussian still holds for the private GCM, and thus type-I error control is provided. Instead, the rejection rate of the non-private GCM converges to 1 when the model interpolates.

\paragraph{Power of the Private GCM.} 
  Next, we show a result on the power of our private GCM test. 
  
\begin{restatable}{theorem}{powerrescaled}{\emph{(Power of Private GCM).}}
\label{thm:power_rescaled}
    Consider the setup of \Thm{level_rescaled}. 
    Define the ``signal'' ($\rho_P$) and ``noise'' ($\sigma_P$) of the true residuals $\chi_P, \xi_P$ as: 
    \begin{align*}
        \rho_P = \E_P[\cov(X, Y \mid Z)], \quad  \sigma_P = \sqrt{\mathrm{Var}_P(\chi_P\xi_P)}.
    \end{align*}
   Let $\cP$ be the set of alternate distributions from Definition~\ref{def:assumption} for which $\hat{f}, \hat{g}$ are a good fit. Then
   \begin{align}
       T^{(n)} - \frac{\sqrt{n}\rho_P}{\sigma_P'},\label{eq:power_main} \quad  \mbox{ where }   \sigma_P' =  \sqrt{ \sigma_P^2 + (\frac{\sqrt{2}ab\Delta}{\eps} )^2} 
    \end{align}
    converges uniformly over $\cP$ to $\mathcal{N}(0,1)$. 
\end{restatable}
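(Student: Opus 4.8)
The plan is to write the private statistic as a ratio $T^{(n)}=N_n/D_n$, with numerator $N_n=\frac{1}{\sqrt n}\sum_{i=1}^n(R_i+W_i)$ and denominator $D_n$ equal to the empirical standard deviation of the noisy rescaled residual products $R_i+W_i$, to analyze the two pieces separately, and to recombine them via Slutsky's theorem. The only structural difference from the null case of \Thm{level_rescaled} is that under the alternative $\E[R_i]\neq 0$, which contributes a deterministic $\sqrt n$-drift to $N_n$; the fluctuation analysis, the control of the regression errors, and the convergence of $D_n$ all carry over essentially verbatim. I would first record the oracle decomposition $R_i=\tilde R_i+E_i$, where $\tilde R_i=\chi_{P,i}\xi_{P,i}/(ab)$ is the product of the \emph{true} rescaled residuals — an i.i.d.\ sequence with mean $\rho_P/(ab)$ and variance $\sigma_P^2/(ab)^2$, using $\E[\chi_P\xi_P]=\E[\cov(X,Y\mid Z)]=\rho_P$ — and where $E_i$ gathers the cross terms involving the fitting errors of $\hat f$ and $\hat g$.

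For the numerator I would split
\[
N_n=\frac{\sqrt n\,\rho_P}{ab}+\frac{1}{\sqrt n}\sum_{i=1}^n\Bigl(\tilde R_i-\tfrac{\rho_P}{ab}\Bigr)+\frac{1}{\sqrt n}\sum_{i=1}^n W_i+\frac{1}{\sqrt n}\sum_{i=1}^n E_i .
\]
The good-fit conditions $A_fA_g=o_\cP(n^{-1})$, $B_f=o_\cP(1)$, $B_g=o_\cP(1)$ of \Def{assumption}, combined with Cauchy--Schwarz, give $\frac{1}{\sqrt n}\sum_i E_i=o_\cP(1)$, exactly as in \Thm{level_rescaled}. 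The two remaining centered sums are independent (the Laplace noise is drawn independently of the data), and each is a sum of i.i.d.\ terms; the $(2+\eta)$-moment bound $\sup_P\E[|\chi_P\xi_P|^{2+\eta}]\le c$ together with the finite moments of the fixed-scale Laplace law yields a uniform Lyapunov condition, so a uniform Berry--Esseen bound shows their sum converges uniformly over $\cP$ to $\cN\bigl(0,\sigma_P^2/(ab)^2+2(\Delta/\eps)^2\bigr)=\cN\bigl(0,(\sigma_P')^2/(ab)^2\bigr)$. The non-vanishing noise variance $2(\Delta/\eps)^2$ bounds the total variance away from zero uniformly, which is precisely why — unlike \citet{ShahP20} — no lower bound on $\E[\chi_P^2\xi_P^2]$ is needed.

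For the denominator, $D_n^2$ is the empirical variance of $R_i+W_i$. Writing $\frac1n\sum_i(R_i+W_i)^2=\frac1n\sum_iR_i^2+\frac2n\sum_iR_iW_i+\frac1n\sum_iW_i^2$, the cross term has conditional mean zero and conditional variance $\tfrac{2(\Delta/\eps)^2}{n}\cdot\tfrac1n\sum_iR_i^2=o_\cP(1)$, while $\frac1n\sum_iW_i^2\pto 2(\Delta/\eps)^2$ and $\frac1n\sum_iR_i^2$ converges by the same second-moment estimates that drive the denominator convergence in \Thm{level_rescaled}. Because $D_n$ self-normalizes, it subtracts the squared sample mean and thereby removes the drift, so its limit is the centered variance $\sigma_P^2/(ab)^2+2(\Delta/\eps)^2=(\sigma_P')^2/(ab)^2$, identical to the null case, giving $D_n\pto\sigma_P'/(ab)$ uniformly over $\cP$. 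Slutsky's theorem then yields
\[
T^{(n)}=\frac{N_n}{D_n}=\frac{\sqrt n\,\rho_P/(ab)+\cN\!\bigl(0,(\sigma_P')^2/(ab)^2\bigr)+o_\cP(1)}{\sigma_P'/(ab)+o_\cP(1)}=\frac{\sqrt n\,\rho_P}{\sigma_P'}+\cN(0,1)+o_\cP(1),
\]
which is the claimed convergence in \Eqn{power_main}.

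The main obstacle is the one already present in the type-I analysis: the errors $E_i$ are \emph{not} independent across $i$ — they share the data-trained estimators $\hat f,\hat g$ — so no i.i.d.\ CLT applies to them directly, and their aggregate contribution must be bounded through the good-fit moments $A_f,A_g,B_f,B_g$ \emph{uniformly} over $\cP$. The wrinkle specific to the power claim is to carry the deterministic signal $\sqrt n\,\rho_P$ through the ratio and to verify that self-normalization absorbs the injected Laplace variance, so that $\sigma_P'$ rather than $\sigma_P$ appears in the normalization. This hinges on the independence of the $W_i$ from the data together with the finiteness of all Laplace moments, which let the non-vanishing privacy noise and the vanishing-but-dependent regression error be decoupled.
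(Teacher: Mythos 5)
Your proposal is correct and follows essentially the same route as the paper: the paper's own proof of this theorem simply notes that $\E[\chi_P\xi_P/(ab)]=\rho_P/(ab)$ and $\sqrt{\Var(\chi_P\xi_P/(ab))}=\sigma_P/(ab)$ and then repeats the argument of \Thm{level_rescaled_full} verbatim, invoking Claim~\ref{claim:power_shahpeters} for the vanishing of the regression-error terms and the convergence of the empirical second moment, with the uniform CLT supplying the $\cN(0,(\sigma_P')^2/(ab)^2)$ limit for the centered sum and uniform Slutsky recombining numerator and denominator. The only cosmetic difference is your treatment of the cross term $\frac{2}{n}\sum_i R_iW_i$ via a conditional-variance/Chebyshev bound, where the paper instead expands $R_iW_i$ into four pieces and controls them with the uniform weak law and a $\max_i|W_i|=O_{\cP'}(\log n)$ bound; both yield the same $o_{\cP'}(1)$ conclusion.
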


\paragraph{Discussion on Power.}
   \Thm{power_rescaled} implies that $T^{(n)}$ has uniform (asymptotic) power of $1$ if $\rho_P \neq 0$. See \Cor{power_of_1} for a short proof. In \Thm{power_rescaled_full}, we also show a pointwise (asymptotic) power guarantee, under weaker assumptions.
    We remark that the bounds $a$ and $b$ on $|X|$ and $|Y|$ could depend on the dataset size $n$. \Alg{gcm} has uniform asymptotic power of $1$ as long as $a \cdot b = o(\sqrt{n})$. 

   \citet{ShahP20} show a similar result on the power of the (non-private) GCM, but with $\sigma_P' = \sigma_P$. Suppose $\sigma_P = 1$. Then, \Thm{power_rescaled} states that a $(\frac{ab\Delta}{\eps})^2$-factor   of the dataset size used in the non-private case is required to obtain the same power in the private case. A blow-up in the sample size is typical in DP analyses~\citep{DworkR14}. 

 \paragraph{Private GCM with Kernel Ridge Regression (PrivGCM).}\label{sec:residuals}
 To obtain a bound on the sensitivity of the vector of residual products, we use kernel ridge regression (KRR) as the model for regressing $X$ on $Z$ and $Y$ on $Z$, respectively. Let PrivGCM denote \Alg{gcm} with KRR as the fitting procedure and the bound on $\Delta$.
 
The vector of residual products has $\ell_1$-sensitivity $O_\lambda(1)$ (as formally shown in  \Lem{sensitivity_residuals} using \Thm{kusner}). Along with \Lem{laplace}, this implies that PrivGCM is $\eps$-DP. 
In addition, as shown by \citet{ShahP20}, the requirements on $A_f, A_g, B_f, B_g$ are satisfied when using KRR. If the additional conditions listed in Definition~\ref{def:assumption} are also satisfied, then PrivGCM has uniformly asymptotic level and uniform asymptotic power of $1$ (see \Cor{privgcm}).

\begin{restatable}{lemma}{sensitivityresiduals}{\emph{(Sensitivity of residual products).}}
\label{lem:sensitivity_residuals}
     Let $\mathbf{R}$ be the vector of residual products, as defined in \eqref{res_products}, of fitting a KRR model of $\bx$ to $\bz$ and $\by$ to $\bz$ with regularization parameter $\lambda > 0$. If $|x_i|, |y_i| \leq 1$ for all $i \in [n]$, then $\Delta_{\mathbf{R}} \leq \gcmconst$ where $\gcmconst = 
     4( 1 + \frac{\sqrt{2}}{\sqrt{\lambda}})(1 + \frac{\sqrt{2}}{\sqrt{\lambda}} + \frac{4\sqrt{2}}{\lambda^{3/2}} + \frac{4}{\lambda})$.
\end{restatable}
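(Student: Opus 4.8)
The plan is to bound the $\ell_1$-sensitivity directly from its definition. Fix a pair of neighboring datasets differing only in the $j$-th datapoint, let $\hat{f}, \hat{g}$ be the KRR fits on one dataset and $\hat{f}', \hat{g}'$ the fits on the neighbor, and bound $\sum_{i=1}^n |R_i - R_i'|$, where $R_i = (x_i - \hat{f}(z_i))(y_i - \hat{g}(z_i))$ and $R_i'$ is the corresponding product for the neighbor. The crucial observation is that replacing a single datapoint perturbs the \emph{entire} fitted function, so every residual product changes; the argument works precisely because \Thm{kusner} shows each individual prediction changes by only $O(1/n)$, so summing $n$ such changes still yields an $O(1)$ bound.

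First I would record two consequences of \Thm{kusner}. Since $\lVert \bw \rVert_{\mathcal{H}} \leq \sqrt{2/\lambda}$ and $\lVert \phi(z) \rVert_{\mathcal{H}} \leq 1$, Cauchy--Schwarz gives $|\hat{f}(z_i)| \leq \sqrt{2/\lambda}$, so (using $|x_i|, |y_i| \leq 1$) each residual satisfies $|x_i - \hat{f}(z_i)| \leq M$ and $|y_i - \hat{g}(z_i)| \leq M$ with $M := 1 + \sqrt{2}/\sqrt{\lambda}$; the same bounds hold for the primed quantities. Second, the prediction-stability bound of \Thm{kusner} gives $|\hat{f}(z_i) - \hat{f}'(z_i)| \leq \delta$ and $|\hat{g}(z_i) - \hat{g}'(z_i)| \leq \delta$ for every $z_i$ in the original dataset, where $\delta := 8\sqrt{2}/(\lambda^{3/2} n) + 8/(\lambda n)$.

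Next I would split the sum into the $n-1$ unchanged indices and the single changed index $j$. For $i \neq j$ the datapoint $(x_i, y_i, z_i)$ is common to both datasets, so writing $R_i - R_i'$ via the telescoping identity $ab - a'b' = (a - a')b + a'(b - b')$ and applying the magnitude bound $M$ and the stability bound $\delta$ yields $|R_i - R_i'| \leq 2M\delta$. For $i = j$ both the datapoint and the fitted model change, but since $|R_j|, |R_j'| \leq M^2$ the triangle inequality gives the crude bound $|R_j - R_j'| \leq 2M^2$; note this term does not invoke \Thm{kusner}'s prediction bound, which matters because that bound is only guaranteed at points of the original dataset (and the neighbor introduces a new point $z_j'$). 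Summing gives $\Delta_{\mathbf{R}} \leq 2(n-1)M\delta + 2M^2 \leq 2nM\delta + 2M^2$, and substituting $n\delta = 8\sqrt{2}/\lambda^{3/2} + 8/\lambda$ together with $M = 1 + \sqrt{2}/\sqrt{\lambda}$ yields $\Delta_{\mathbf{R}} \leq 2M(M + 8\sqrt{2}/\lambda^{3/2} + 8/\lambda)$, which is bounded above by the claimed constant $\gcmconst$.

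The main obstacle---and the conceptual heart of the lemma---is precisely that the sensitivity is \emph{not} localized to the changed datapoint: one must control the cumulative effect of a global model perturbation across all $n$ residual products. The $O(1/n)$ prediction-stability bound from \Thm{kusner} is what makes the $n$-fold sum collapse to a constant, and the product (rather than linear) structure of the residuals forces the telescoping step and the uniform magnitude bound $M$. The only delicate bookkeeping is the changed index $j$, where the stability bound is unavailable and one falls back on the boundedness of the residuals themselves.
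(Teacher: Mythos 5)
Your proof is correct and follows essentially the same route as the paper's: split the sum over the $n-1$ unchanged indices (where Theorem~\ref{thm:kusner} gives the $O(1/n)$ prediction-stability bound and the $\lVert \bw \rVert_{\mathcal{H}} \le \sqrt{2/\lambda}$ bound gives the uniform residual magnitude $M$) and the single changed index, then telescope the product. The only difference is at the changed index, where the paper still decomposes the residual change via the triangle inequality through $|x_n - x_n'|$, $\lVert \bw'\rVert_{\mathcal{H}}\lVert\phi(z_n')-\phi(z_n)\rVert_{\mathcal{H}}$, and the stability bound at $z_n$, whereas you fall back on $|R_j - R_j'| \le |R_j| + |R_j'| \le 2M^2$; your version is in fact slightly tighter and still sits below the stated constant $\gcmconst$.
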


\section{Private Conditional Randomized Testing} \label{sec:crt}

Here, we propose a private version of the conditional randomization test (CRT), which uses access to the distribution of $X \mid Z$ as a key assumption. Recall that such an assumption is useful, for example, when one has access to abundant unlabeled data $(X,Z)$. Missing proofs are in Appendix~\ref{app:crt}.

\paragraph{CRT.} As before, consider a dataset $(\bx, \by, \bz)$ of $n$ i.i.d.\ samples $(x_i,y_i,z_i), i \in [n]$ from the joint distribution $P$. For ease of notation, denote the original $\bx$ as $\bx^{(0)}$. The key idea of CRT is to sample $m$ copies of $\bx^{(0)}$ from $X \mid Z$, where $Z$ is fixed to the values in $\bz$. That is, for $j \in [m]$ and $i \in [n]$, a new datapoint $x_i^{(j)}$ is sampled from $X \mid Z = z_i$. Then $\bx^{(j)} = (x_1^{(j)}, \dots, x_n^{(j)})$.

 Under  the null hypothesis, the triples $(\bx^{(0)}, \by,\bz), \dots,(\bx^{(m)}, \by,\bz)$ are identically distributed. Thus, for every statistic $T$ chosen independently of the data, the random variables $T(\bx^{(0)}, \by,\bz), \dots, T(\bx^{(m)}, \by,\bz)$ are also  identically distributed. Denote these random variables by $T_0, \dots, T_m$. The p-value is computed by ranking $T_0$, obtained by using the original $\bx^{(0)}$ vector, against $T_1, \dots, T_m$, obtained from the resamples:
 \begin{align*}
     \textrm{p-value} = \frac{1+ \sum_{j=1}^m  \mathbf{1}(T_j \geq T_0)}{m+1}.
 \end{align*}
For every choice of $T$, the p-value is uniformly distributed and finite-sample type-I error control is guaranteed.

\paragraph{Private CRT.} Let $\cD = (\bx^{(0)}, \dots, \bx^{(m)}, \by, \bz)$ denote the aggregated dataset. 
We say $\cD'$ is a neighbor of $\cD$ if they differ in at most one row. By defining $\cD$ to include the resamples $\bx^{(1)}, \dots, \bx^{(m)}$, we also protect the privacy of the data obtained in the resampling step. 

Our private CRT test is shown in \Alg{crt}: it obtains a private estimate of the rank of $T_0$ amongst the statistics $T_1, \dots, T_m$, sorted in decreasing order.  
Using the Laplace mechanism to privately estimate the rank is not a viable option, since the rank has high sensitivity: changing one point in $\cD$ could change all the values $T_0, \dots, T_m$ and change the rank of $T_0$ by $O(m)$. Another straightforward approach is to employ the widely used Sparse Vector Technique \cite{DworkNRRV09, DworkR14} to privately answer questions "Is $T_i > T_0$?" for all $i \in [m]$. However, this algorithm pays a privacy price for each $T_i$ that is above the “threshold” $T_0$, which under the null is $\Omega(m)$, thus resulting in lower utility of the algorithm. Instead, we define a new score function and algorithm which circumvents this problem by intuitively only incurring a cost for the queries $T_i$ that are very close to $T_0$ in value. 

Our key algorithmic idea is to define an appropriate score function of bounded sensitivity. It assigns a score to each rank $c \in [0, m]$ that indicates how well $c$ approximates the true rank of $T_0$. 
The score of a rank $c$ equals the negative absolute difference between $T_0$ and the statistic at rank $c$.  The true rank of $T_0$ has the highest score (equal to $0$), whereas all other ranks have negative scores. 
We show that this score function has bounded sensitivity for statistics $T$ of bounded sensitivity. The rank with the highest score is privately selected using Report Noisy Max, a popular DP selection algorithm~\citep{DworkR14}. 
To obtain good utility, the design of the score function exploits the fact that for CRTs, the values $T_i$ are distributed in a very controlled fashion, as explained in the remark following \Thm{crt}.

\begin{algorithm}
  \caption{Private Conditional Randomization Test} 
  \label{alg:crt}
  \begin{algorithmic}[1]
    \State {\bfseries Input:} Dataset $(\bx^{(0)}, \by, \bz)$, privacy parameter $\eps$, bound $\Delta_T$ on the sensitivity of $T$, number of resamples $m$. 
    \State $T_0 \leftarrow T(\bx^{(0)}, \by, \bz)$, $s_0 \leftarrow 0$. 
    \For{ $i = 1, \dots, m$} 
    \State  Sample $\bx^{(i)} \mid \bz$ from $X \mid Z$ \label{step:gem}.
    \State $T_i \leftarrow T(\bx^{(i)}, \by, \bz)$.
    \EndFor
    \State Let $Q_0, \dots, Q_m$ denote the values $\{T_i\}_{i\in[0,m]}$ sorted in decreasing order.
     \For{ $i = 0, 1 \dots, m$}
    \State $s_i \leftarrow -\frac{|Q_i - T_0|}{2\Delta_T}$.
    \EndFor
    \State $\hat{c} \leftarrow \mathtt{ReportNoisyMax}(\{s_i\}_{i \in [0, m]}, \eps)$. \Comment{\Thm{rnm}}
    \State Output p-value $\hat{p} = \frac{1+\hat{c}}{m+1}$. 
  \end{algorithmic}
\end{algorithm}

\begin{theorem}[Report Noisy Max~\citep{DworkR14, McKennaS20, Ding21}]
\label{thm:rnm}
    Let $\eps > 0$. Given scores $s_i \in \R$, $i \in [B]$ evaluated from a score function of sensitivity at most $1$, the algorithm $\mathtt{ReportNoisyMax}$ samples $Z_1, \dots, Z_B \sim \mathrm{Exp}(2/\eps)$ and returns $\hat{\imath} = \argmax_{i \in [B]} (s_i +Z_i)$.
    This algorithm is $\eps$-DP and for $\delta \in (0,1)$, with probability at least $1 - \delta$, it holds
    $s_{\hat{\imath}} \geq s^* - 2\log(B/\delta)\eps^{-1}$,
where $s^* = \max_{i \in [B]} s_i$. 
\end{theorem}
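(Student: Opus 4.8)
The statement bundles two separate claims: the $\eps$-DP guarantee and the high-probability utility bound $s_{\hat{\imath}} \ge s^* - 2\log(B/\delta)\eps^{-1}$. I would dispatch the utility bound first, since it is the easier of the two and uses only tail bounds on the noise. Each $Z_i \sim \mathrm{Exp}(2/\eps)$ has survival function $\Pr[Z_i > t] = e^{-\eps t/2}$ for $t \ge 0$, so a union bound gives $\Pr[\exists\, i : Z_i > (2/\eps)\log(B/\delta)] \le B\,e^{-\log(B/\delta)} = \delta$. On the complementary event (probability at least $1-\delta$), let $i^\dagger$ be an index achieving $s_{i^\dagger} = s^*$. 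Because $\hat{\imath}$ maximizes $s_i + Z_i$ and $Z_{i^\dagger} \ge 0$, we have $s_{\hat{\imath}} + Z_{\hat{\imath}} \ge s_{i^\dagger} + Z_{i^\dagger} \ge s^*$; since $Z_{\hat{\imath}} \le (2/\eps)\log(B/\delta)$ on this event, rearranging yields the claimed bound.

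For the privacy claim I would fix an arbitrary output index $k$ and a pair of neighboring datasets, writing $s_i$ and $s_i'$ for the scores on the two datasets, where $|s_i - s_i'| \le 1$ for every $i$ by the sensitivity-$1$ assumption. The key move is to condition on the vector of noises $Z_{-k} = (Z_j)_{j \ne k}$, which is legitimate precisely because the noise is drawn independently of the data and hence has the same law under both datasets. Conditioned on $Z_{-k} = z_{-k}$, index $k$ is the argmax exactly when $Z_k \ge \max_{j \ne k}(s_j + z_j) - s_k =: t$, so the conditional winning probability equals $h(t)$, where $h(t) = e^{-\eps\max(t,0)/2}$ is the exponential survival function. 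Writing $t'$ for the analogous threshold on the neighboring dataset (with the same $z_{-k}$), I would show $|t - t'| \le 2$: the competitor maximum $\max_{j \ne k}(s_j + z_j)$ shifts by at most $1$ when passing to the neighbor (the max of values each perturbed by at most $1$ moves by at most $1$), and the winner's own score $s_k$ shifts by at most $1$, for a total of $2$.

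It then remains to bound $h(t)/h(t') = \exp\big(\tfrac{\eps}{2}(\max(t',0) - \max(t,0))\big)$. Since $t \mapsto \max(t,0)$ is $1$-Lipschitz, we have $\max(t',0) - \max(t,0) \le |t - t'| \le 2$, so this ratio is at most $e^\eps$. Because the bound is pointwise in $z_{-k}$, integrating over $Z_{-k}$ gives $\Pr[\hat{\imath} = k \mid D] \le e^\eps \Pr[\hat{\imath} = k \mid D']$; as $k$ was arbitrary, the mechanism is $\eps$-DP. A minor point to dispatch along the way is that ties in the argmax occur with probability zero because the noise is continuous, so $\hat{\imath}$ is almost surely well-defined.

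The main obstacle is precisely the one-sided (nonnegative) support of the exponential noise. Unlike the two-sided Laplace case, where $h$ is a single exponential and the density ratio can be bounded by a one-line shift of the noise variable, here $h$ is clamped to $1$ on the negatives, so one must work with $\max(t,0)$ and exploit its Lipschitzness rather than a clean shift. This clamping is also the reason the scale $2/\eps$ (rather than $1/\eps$) is required: it is exactly what is needed to absorb the two independent unit contributions to the threshold displacement, namely the shift of the winner's score and the shift of the running maximum over the competitors.
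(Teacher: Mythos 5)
This theorem is imported from the cited references (Dwork--Roth, McKenna--Sheldon, Ding et al.) and the paper gives no proof of it, so there is nothing to compare against; your argument is a correct, self-contained proof and is essentially the standard one for Report Noisy Max with one-sided exponential noise. Both halves check out: the utility bound via the union bound on the exponential tails, and the privacy bound via conditioning on $Z_{-k}$, the $1$-Lipschitzness of $t \mapsto \max(t,0)$, and the displacement of the winning threshold by at most $2$ (one unit from the competitors' maximum, one from the winner's own score), which is exactly what the scale $2/\eps$ absorbs.
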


We describe our score function in \Def{score} and 
bound its sensitivity in \Lem{sensitivity_score}. The bound on the sensitivity of the score function is obtained by assuming a bound $\Delta_T$ on the sensitivity of the statistic $T$.

\begin{definition}[Score function for rank of query] 
\label{def:score}
Let $\{T_i\}_{i \in [0, m]}$ be $m+1$ queries of sensitivity at most $\Delta_T$ on a dataset $\cD$.  Let $Q_0, \dots, Q_m$ denote the values $\{T_i\}_{i \in [0,m]}$ sorted in decreasing order. Let $k \in [0, m]$ be the index of the query whose rank we wish to know. Then for all $c \in [0,m]$, define
\begin{align*}
    s_k(c, \cD) = -\frac{|Q_c - T_k|}{2\Delta_T}.
\end{align*}
\end{definition}

\begin{restatable}{lemma}{sen}{\emph{(Sensitivity of the score function).}}
\label{lem:sensitivity_score}
Let $\{T_i\}_{i \in [0,m]}$ be the values of $m+1$ queries of sensitivity at most $\Delta_T$ on a dataset $\cD$. Let $\{T_i'\}_{i \in [B]}$ be the values of the same queries on a neighboring dataset $\cD'$. Let $Q_0, \dots, Q_m$ (respectively $Q_0', \dots, Q_m'$) denote the values $\{T_i\}_{i \in [0,m]}$ (respectively $\{T_i'\}_{i \in [0,m]}$ ) sorted in decreasing order. Then $|Q_c - Q_c'| \leq \Delta_T$ for all $c \in [0,m]$. As a result, the score function $s_k(c, \cD)$ has sensitivity at most 1 for all $c \in [0,m]$. 
\end{restatable}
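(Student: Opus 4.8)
The plan is to prove the two claims in sequence: first the stability of the sorted values, $|Q_c - Q_c'| \le \Delta_T$, and then deduce the unit sensitivity of the score function from it via the reverse triangle inequality. The only nontrivial ingredient is the first claim, since sorting may reorder the indices differently on $\cD$ and $\cD'$, so a term-by-term comparison of $Q_c$ against $Q_c'$ is not immediate — the matching across datasets is at the level of the unsorted indices $i$ (where the per-query sensitivity lives), while $Q_c$ and $Q_c'$ are defined only after two independent sortings.

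For the first claim I would start from the per-query guarantee: since each $T_i$ has sensitivity at most $\Delta_T$, we have $|T_i - T_i'| \le \Delta_T$ for every $i \in [0,m]$. Fix $c \in [0,m]$ and consider the index set $S = \{\, i : T_i \ge Q_c \,\}$. Because $Q_0, \dots, Q_c$ are all at least $Q_c$ in the decreasing sorted order, $|S| \ge c+1$. For each $i \in S$ the corresponding primed value satisfies $T_i' \ge T_i - \Delta_T \ge Q_c - \Delta_T$, so at least $c+1$ of the values $\{T_i'\}_{i \in [0,m]}$ lie at or above $Q_c - \Delta_T$; hence the $(c+1)$-th largest primed value obeys $Q_c' \ge Q_c - \Delta_T$. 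Swapping the roles of $\cD$ and $\cD'$ gives $Q_c \ge Q_c' - \Delta_T$, and combining the two inequalities yields $|Q_c - Q_c'| \le \Delta_T$. This counting argument is the main step; I would be careful to handle ties (values equal to $Q_c$) by phrasing everything through the ``$\ge$'' sets rather than assuming distinct values.

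For the second claim I would write $s_k(c,\cD) - s_k(c,\cD') = -\tfrac{1}{2\Delta_T}\bigl(|Q_c - T_k| - |Q_c' - T_k'|\bigr)$ and bound the difference of absolute values by the reverse triangle inequality:
\[
\bigl|\, |Q_c - T_k| - |Q_c' - T_k'| \,\bigr| \le |(Q_c - T_k) - (Q_c' - T_k')| \le |Q_c - Q_c'| + |T_k - T_k'| \le 2\Delta_T .
\]
Dividing by $2\Delta_T$ shows $|s_k(c,\cD) - s_k(c,\cD')| \le 1$ for every candidate rank $c$, which is exactly the statement that the score function has sensitivity at most $1$, as required to invoke Report Noisy Max in \Thm{rnm}.

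I expect the order-statistics stability bound $|Q_c - Q_c'| \le \Delta_T$ to be the crux of the argument, precisely because it must survive the independent re-sorting of the two datasets; the score sensitivity bound is then a short, routine application of the triangle inequality that simply adds the two $\Delta_T$ contributions and cancels the $2\Delta_T$ in the denominator.
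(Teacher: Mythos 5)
Your proof is correct and follows essentially the same route as the paper: the key step in both is the order-statistics counting argument (at least $c+1$ of the $T_i'$ lie above $Q_c-\Delta_T$, hence so does the $(c+1)$-th largest), followed by the same triangle-inequality bound on the score. The only difference is that you argue the bound $|Q_c-Q_c'|\le\Delta_T$ directly and then invoke symmetry, whereas the paper phrases the same counting argument as a proof by contradiction; your direct version is, if anything, slightly cleaner and handles ties explicitly.
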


\paragraph{Statistic $T$ and its Sensitivity.} The statistic $T$ that we use to obtain our private CRT test is defined as the numerator of the GCM statistic. The residuals of $\by$ with respect to $\bz$ are calculated by fitting a KRR model of $\by$ to $\bz$. Denote such residuals $r_{Y, i}$, for $i \in [n]$. The residuals of $\bx$ with respect to $\bz$ are exact, since we have access to the distribution $X \mid Z$. Denote such residuals $r_{X, i}$ for $i \in [n]$. The residual products are calculated as $R_i = r_{X,i}r_{Y, i}$ for $i \in [n]$.

\begin{definition}[Statistic $T$ for the private CRT]
\label{def:crt_statistic}
    Given a dataset $(\bx, \by, \bz)$ of $n$ points, 
    let $(R_1, \dots, R_n)$ be the vector of residual products of the exact residuals of $\bx$ with respect to $\bz$ and the residuals of fitting a kernel ridge regression model of $\by$ to $\bz$. Define $T(\bx, \by, \bz) = \sum_{i=1}^n R_i$. 
\end{definition}

We obtain a bound of $O_\lambda(1)$ on the $\ell_1$-sensitivity of the statistic $T$ by bounding the sensitivity of the $R_i$. To bound the sensitivity of the  $r_{Y, i}$ we assume that the domain of the variable $Y$ is bounded and use the result of \Thm{kusner}. We assume a known bound on the magnitude of the residuals $r_{X,i}$, motivated by the fact that we have access to the distribution $X \mid Z$. This differs from the assumptions for our PrivGCM test, where we assumed bounds on both $X$ and $Y$. Assuming a bound on the residuals $r_{X,i}$ gives a tighter sensitivity bound for $T$. 

\begin{restatable}{lemma}{sensitivitycrt} \emph{(Sensitivity of $T$).}
\label{lem:sensitivity_crt}
    Consider two neighboring datasets $\cD = (\bx^{(0)}, \dots, \bx^{(m)}, \by, \bz)$ and $\cD' = (\bx'^{(0)}, \dots, \bx'^{(m)}, \by', \bz')$. For $j \in [0, m]$, let $T_j = T(\bx^{(j)}, \by, \bz)$.  Define $T_j'$ analogously. If $|y_i|, |y'_i| \leq 1$ for all $i \in [n]$ and $|r_{X,i}^{(j)}|, |{r'}_{X,i}^{(j)}| \leq 1$ for all $i \in [n], j \in [0, m]$,\footnote{The bound of $1$ can be replaced by any constant.} then $       |T_j - T_j'| \leq \crtconst$,    where $\crtconst = 4\cdot (1 +\frac{\sqrt{2}}{\sqrt{\lambda}} +  \frac{2\sqrt{2}}{\lambda^{3/2}} + \frac{2}{\lambda})$. 
\end{restatable}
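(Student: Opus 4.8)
The plan is to fix the single index $i^\ast \in [n]$ at which the neighboring datasets $\cD$ and $\cD'$ differ, and to split the difference $T_j - T_j' = \sum_{i=1}^n \bigl( r_{X,i}^{(j)} r_{Y,i} - {r'}_{X,i}^{(j)} r'_{Y,i} \bigr)$ into the contribution of the changed row $i^\ast$ and the contribution of the $n-1$ unchanged rows. The two contributions are governed by qualitatively different facts: the exact $X$-residuals are a pointwise function of the data and so are unaffected at unchanged rows, whereas the KRR $Y$-residuals change globally but only by $O(1/n)$ per point by \Thm{kusner}. Keeping these two mechanisms separate is the organizing idea of the whole argument.

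First I would treat the unchanged rows $i \neq i^\ast$. Since $r_{X,i}^{(j)} = x_i^{(j)} - f_P(z_i)$ depends only on $(x_i^{(j)}, z_i)$ and $f_P(z) = \E[X \mid Z=z]$ is a fixed public function, we have $r_{X,i}^{(j)} = {r'}_{X,i}^{(j)}$ for every $i \neq i^\ast$, so each such summand collapses to $r_{X,i}^{(j)}(r_{Y,i} - r'_{Y,i})$. Because the KRR fit $\hat g$ of $\by$ on $\bz$ satisfies the hypotheses of \Thm{kusner} (the response is bounded, $|y_i| \le 1$, and $\|\phi(z)\|_{\mathcal H} \le 1$), and because $y_i$ is unchanged at $i \neq i^\ast$, we get $|r_{Y,i} - r'_{Y,i}| = |\hat g(z_i) - \hat g'(z_i)| \le 8\sqrt 2/(\lambda^{3/2}n) + 8/(\lambda n)$. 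Using $|r_{X,i}^{(j)}| \le 1$ and summing the $n-1$ unchanged terms gives a bound of $8\sqrt 2/\lambda^{3/2} + 8/\lambda$, which is independent of $n$.

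Next I would handle the single changed row $i^\ast$ by the crude but adequate triangle inequality $|r_{X,i^\ast}^{(j)} r_{Y,i^\ast} - {r'}_{X,i^\ast}^{(j)} r'_{Y,i^\ast}| \le |r_{X,i^\ast}^{(j)}|\,|r_{Y,i^\ast}| + |{r'}_{X,i^\ast}^{(j)}|\,|r'_{Y,i^\ast}|$. The $X$-residuals are bounded by the lemma's hypothesis $|r_{X,i}^{(j)}|, |{r'}_{X,i}^{(j)}| \le 1$, while each $Y$-residual is controlled through the weight bound in \Thm{kusner}: $|r_{Y,i^\ast}| \le |y_{i^\ast}| + |\hat g(z_{i^\ast})| \le 1 + \|\bw\|_{\mathcal H}\,\|\phi(z_{i^\ast})\|_{\mathcal H} \le 1 + \sqrt{2/\lambda}$, and identically for the primed quantities. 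This yields a bound of $2(1 + \sqrt 2/\sqrt\lambda)$ at the changed row. Adding the two contributions gives $2 + 2\sqrt 2/\sqrt\lambda + 8\sqrt 2/\lambda^{3/2} + 8/\lambda$, which is at most $\crtconst$, completing the argument.

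The only place that needs genuine care is the interface between these two mechanisms. At the changed row both residual factors may move by an $O(1)$ amount, but this happens for just one of the $n$ summands; the KRR refit, by contrast, perturbs all $n$ of the $Y$-residuals at once, and it is essential that \Thm{kusner} makes each such perturbation $O(1/n)$ so that their sum remains bounded as $n \to \infty$. A secondary subtlety is that the bound at $i^\ast$ cannot invoke the \emph{pointwise} KRR difference bound of \Thm{kusner}, since $z_{i^\ast}$ itself changes and the two fits are evaluated at different arguments; there I instead bound each residual in absolute value using the operator-norm bound $\|\bw\|_{\mathcal H}\le\sqrt{2/\lambda}$. The remaining steps are the routine arithmetic of collecting the four resulting terms.
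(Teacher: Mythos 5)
Your proof is correct and follows essentially the same route as the paper's: split $T_j - T_j'$ by rows, use the exactness of the $X$-residuals to kill the unchanged rows' $X$-contribution, invoke \Thm{kusner} to make each of the $n-1$ perturbed $Y$-residuals $O(1/n)$, and bound the changed row via $\lVert \bw \rVert_{\mathcal H} \le \sqrt{2/\lambda}$. The one local difference is at the changed row, where the paper decomposes $|R_n - R_n'| \le |r_{X,n}|\,|r_{Y,n}-r_{Y,n}'| + |r_{Y,n}'|\,|r_{X,n}-r_{X,n}'|$ while you bound the two products separately in absolute value; your version gives the slightly tighter total $2 + 2\sqrt{2}/\sqrt{\lambda} + 8\sqrt{2}/\lambda^{3/2} + 8/\lambda$, which is indeed at most $\crtconst$.
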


\paragraph{Accuracy of the Private CRT.} We define the accuracy of \Alg{crt} in terms of the difference between the private p-value it outputs and its non-private counterpart. 
\begin{definition}[$(\gamma, \delta)$-accuracy]
    \label{def:c_gamma}
    Let $G_{\gamma} = |\{ i \in [m] \mid |T_i - T_0|  \leq \gamma \}|$.  Let $c^*$ be the rank of $T_0$ given statistics $T_i, i \in [0,m]$, and $p^* = (1+c^*)/(m+1)$ be the non-private p-value. We say \Alg{crt} is $(\gamma, \delta)$-accurate if with probability at least $1-\delta$ it holds $|\hat{p} - p^*| \leq \frac{G_\gamma}{m+1}$. 
\end{definition}

Define PrivCRT as \Alg{crt} where $T$ is the statistic from \Def{crt_statistic} and $\Delta_T$, the bound on the sensitivity of the statistic $T$, is as given in \Lem{sensitivity_crt}.

\begin{restatable}{theorem}{crt}
\label{thm:crt}
    \emph{PrivCRT} is $\eps$-DP and $(\gamma, \delta)$-accurate for 
$
    \gamma = \frac{ 4\Delta_T}{\eps}  \log\Big(\frac{m}{\delta} \Big).
$
\end{restatable}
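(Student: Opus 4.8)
The proof splits into two independent parts, a privacy guarantee and an accuracy guarantee, and for both the plan is to assemble the claim from the lemmas already established.

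\textbf{Privacy.} Treating the aggregated dataset $\cD = (\bx^{(0)}, \dots, \bx^{(m)}, \by, \bz)$, which by convention includes the resamples, as the private input, the statistics $T_0, \dots, T_m$ computed by \Alg{crt} each have $\ell_1$-sensitivity at most $\Delta_T$ by \Lem{sensitivity_crt}. Feeding these queries into \Def{score} with $k=0$ produces exactly the scores $s_c = -|Q_c - T_0|/(2\Delta_T)$ used by the algorithm, and \Lem{sensitivity_score} guarantees this score function has sensitivity at most $1$. Hence \Thm{rnm} applies directly: $\mathtt{ReportNoisyMax}$ run on $\{s_c\}_{c \in [0,m]}$ with parameter $\eps$ is $\eps$-DP. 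Finally, $\hat{p} = (1+\hat{c})/(m+1)$ is a deterministic function of the output $\hat{c}$, so by post-processing (\Lem{postprocess}) the full PrivCRT procedure is $\eps$-DP. Because $\cD$ is defined to contain the resamples, this simultaneously covers the privacy of the data produced in \Step{gem}.

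\textbf{Accuracy, reduction to ranks.} Since $\hat{p} = (1+\hat{c})/(m+1)$ and $p^* = (1+c^*)/(m+1)$, we have $|\hat{p} - p^*| = |\hat{c} - c^*|/(m+1)$, so it suffices to show $|\hat{c} - c^*| \leq G_\gamma$ with probability at least $1-\delta$. First I would note that the maximum score is $s^* = 0$, attained at the true rank $c^*$ of $T_0$ (indeed $Q_{c^*} = T_0$ gives $s_{c^*}=0$, while every other score is nonpositive). Applying the utility bound of \Thm{rnm} with $B = m+1$ candidate ranks then gives, with probability at least $1-\delta$,
\begin{align*}
    s_{\hat{c}} \;\geq\; s^* - \frac{2\log((m+1)/\delta)}{\eps} \;=\; -\frac{2\log((m+1)/\delta)}{\eps}.
\end{align*}
Unpacking $s_{\hat{c}} = -|Q_{\hat{c}} - T_0|/(2\Delta_T)$ and rearranging yields $|Q_{\hat{c}} - T_0| \leq 4\Delta_T\log((m+1)/\delta)/\eps = \gamma$, matching the stated value of $\gamma$ up to the harmless replacement of $m+1$ by $m$.

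\textbf{Accuracy, from value-closeness to rank-closeness.} The remaining step, which is the conceptual heart of the argument, converts the bound $|Q_{\hat{c}} - T_0| \leq \gamma$ on \emph{values} into the bound $|\hat{c} - c^*| \leq G_\gamma$ on \emph{ranks}. Here I would exploit that $Q_0 \geq \cdots \geq Q_m$ is sorted and $T_0 = Q_{c^*}$: the set of ranks $\{c : |Q_c - T_0| \leq \gamma\}$ is then a contiguous interval $[c_{\min}, c_{\max}]$ containing $c^*$, whose size equals the number of elements of the multiset $\{T_0, T_1, \dots, T_m\}$ within $\gamma$ of $T_0$, namely $1 + G_\gamma$ (the extra $1$ accounting for $T_0$ itself). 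Thus $c_{\max} - c_{\min} = G_\gamma$, and since $c^*$ and, on the high-probability event, $\hat{c}$ both lie in this interval, $|\hat{c} - c^*| \leq c_{\max} - c_{\min} = G_\gamma$. Combining with the reduction above gives $|\hat{p} - p^*| \leq G_\gamma/(m+1)$, i.e.\ $(\gamma,\delta)$-accuracy.

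\textbf{Main obstacle.} The privacy argument and the Report-Noisy-Max bound are essentially plug-and-play given the earlier lemmas. The step I expect to require the most care is the value-to-rank translation, specifically handling potential ties among the $T_i$ so that $c^*$ and the interval $[c_{\min}, c_{\max}]$ are well defined, and checking that $G_\gamma$ (indexed over $i \in [m]$, hence excluding $T_0$) exactly measures the width of the sorted band around $T_0$ once the $+1$ for $T_0$ is included. These are book-keeping issues rather than deep difficulties, but they are precisely where an off-by-one error or a mishandled tie could slip in.
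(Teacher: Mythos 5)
Your proposal is correct and follows essentially the same route as the paper's proof: bound the sensitivity of the $T_i$ and of the score function, invoke Report Noisy Max for privacy and its utility bound for $s_{\hat{c}} \geq -2\log(B/\delta)/\eps$, convert that to $|Q_{\hat{c}} - T_0| \leq \gamma$, and then to $|\hat{c}-c^*| \leq G_\gamma$. You are in fact slightly more careful than the paper on two points it glosses over --- the count $B = m+1$ versus the $\log(m/\delta)$ appearing in the stated $\gamma$, and the explicit contiguous-interval argument turning value-closeness into rank-closeness --- but these are refinements of the same argument, not a different one.
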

\vspace*{-1ex}
\noindent\textbf{Remark on the Accuracy.} 
While the guarantees from the above theorem does not translate into a meaningful type-I error control bound, empirically, we observe that, under the null, the p-values output by PrivCRT are uniformly distributed (\Fig{pvals}, Appendix~\ref{app:extra_experiments}), and thus the test provides type-I error control. 
Under the alternate, $T_0$ is much larger (or smaller) than all the other values $T_i, i \geq 1$ and thus $G_\gamma$ is small. 
However, the power of PrivCRT can be affected if we increase $m$, as this can increase the value of $G_{\gamma}$ (see \Fig{crt_m}, Appendix~\ref{app:extra_experiments}). An interesting open question is whether the dependence on $m$ in the accuracy of a private CRT is avoidable. For now, we recommend using $m = O(1/\alpha)$ for rejection level $\alpha$.

\section{Empirical Evaluation} \label{sec:experiments}

We evaluate our algorithms on a real-world dataset and synthetic data. We start with the latter as it has the advantage that we know the ground-truth of whether $X \perp\!\!\!\perp Y \mid Z$. 

\paragraph{Setup.} To generate synthetic data, we use a setup similar to that of \citet{ShahP20} proposed for evaluating the performance of GCM. Fix an RKHS $\mathcal{H}$ that corresponds to a Gaussian kernel. The function $f_s(z) = \exp(-s^2/2)\sin(sz)$ satisfies $f_s \in \mathcal{H}$.  $Z = (Z_1, \dots, Z_d)$ is a $d$-dimensional variable, where $d \in \{1, 5\}$. The distribution of $(X, Y, Z)$ is as follows: 
\begin{eqnarray*}
Z_1, \dots, Z_d \sim N_Z, \quad X = f_s(Z_1) + N_X, \quad Y = -f_s(Z_1) + N_Y + \beta \cdot N_X,
\end{eqnarray*}
where $N_Z \sim \cN(0, 4)$, $N_X \sim \cN(0, 1), N_Y \sim \cN(0, 1)$, and $\beta \geq 0$ is a constant controlling the strength of dependence between $X$ and $Y$. If $\beta = 0$, then $X \indep Y \mid Z$, but not otherwise.  For experiments with PrivGCM, the dataset $(\bx, \by, \bz)$ consists of $n$ points sampled as above. For experiments with PrivCRT, we additionally sample $m$ copies $\bx^{(j)}$, $j \in [m]$, by fixing $\bz$.  We study how varying $\beta$, $s$, and $n$ affects the rejection rate of our tests (averaged over 500 resampled datasets). Shaded error bars represent 95\% confidence intervals. 
We set type-I error level $\alpha = 0.05$.

We rescale $X$ and $Y$ so that all datapoints $x_i$ and $y_i$ satisfy $|x_i| \leq 1, |y_i| \leq 1$ (recall that we assume known bounds for the data; for this simulation, standard Gaussian concentration implies $\max_{i \leq n}X_i \leq \sqrt{2\log n}$ with very high probability, so choosing $\sqrt{C\log n}$ suffices here for a sufficiently large constant $C$). 
We then fit a KRR model with a Gaussian kernel of $\bx$ to $\bz$ and $\by$ to $\bz$. The best model is chosen via 5-fold cross-validation and grid search over the regularization parameter $\lambda$ and the parameter of the Gaussian kernel.\footnote{Hyper-parameters are optimized non-privately, as is common in the literature on privately training ML models. Our algorithm can be combined in a black box fashion with methods for performing private hyper-parameter search such as \cite{LiuT19}.} The choice of $\lambda$ requires  balancing the performance of the fitting step of the algorithm with the magnitude of noise added (see \Lem{sensitivity_residuals}), and thus some lower bound on $\lambda$ is needed. We enforce $\lambda \geq 10$ and find that this does not hurt performance of the fitting step even for large $n$.  See \Fig{data} (Appendix~\ref{app:extra_experiments}) for an example.

\paragraph{Comparison to the Private Kendall CI test~\citep{WangPS20} and Test-of-Tests~\citep{KazanSGB23}.} We start with a comparison to two other private CI tests in the literature. The first is the private Kendall's CI test, proposed by~\citet{WangPS20}~for categorical variables. The second, which we call PrivToT, is obtained from the Test-of-Tests framework of \citet{KazanSGB23} and uses the non-private GCM as a black-box. See Appendix~\ref{app:extra_experiments} for details on the implementations of these two tests.
\begin{figure}
\centering
\includegraphics[width = 0.7\textwidth]{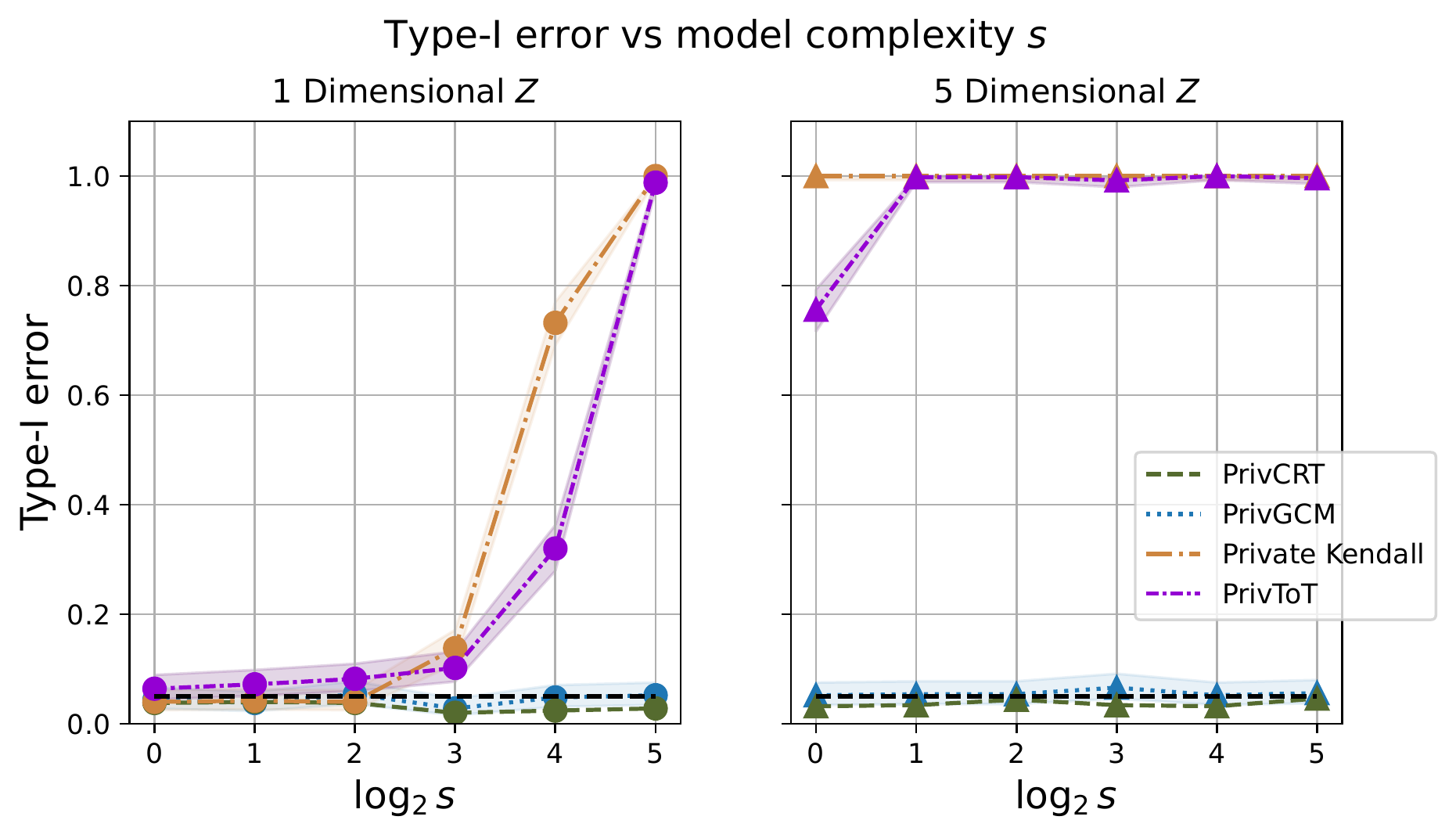}
\caption{Type-I error control of PrivToT, private Kendall, PrivGCM, and PrivCRT (under the null): the first two fail to control Type-I error.}
\label{fig:kendall}
\end{figure}
In \Fig{kendall}, we compare the performance of these two tests with our private tests under the null hypothesis. We vary $s$, the model complexity from $1$ to $32$, and use a sample size $n = 10^4$ and privacy parameter $\eps = 2$. The larger $s$, the harder it is to learn the function $f_s$. As the model complexity increases, the private Kendall test and PrivToT cannot control type-I error, even with a large sample size ($n = 10^4$). They perform even worse when $Z$ is $5$-dimensional. On the other hand, both PrivGCM and PrivCRT have consistent type-I error control across model complexity and dimensionality of $Z$. This experiment motivates the need for tests with rigorous theoretical type-I error guarantees, as we derive.\footnote{Note that for tests without the desired type-I error control, statements about power are vacuous.}


Next, we compare our private CI tests with their non-private counterparts. We fix $s = 2$ for $f_s$.

\begin{figure*}[!ht]
\begin{minipage}{0.4\textwidth}
\centering
\includegraphics[width = 1\textwidth]{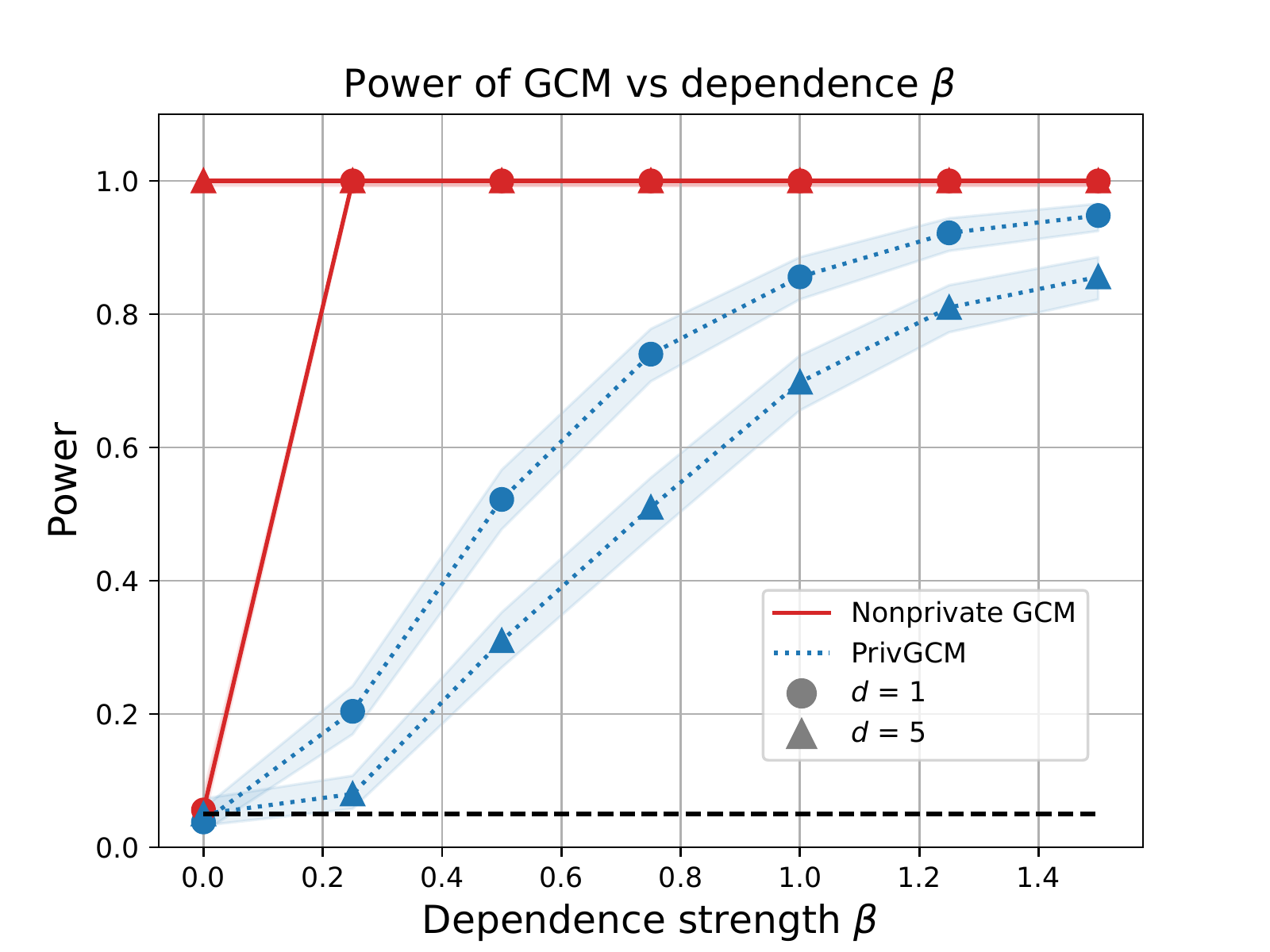}
\caption{Comparison of the power of private and nonprivate GCM tests as the dependence strength $\beta$ increases. At $d=5$, the (nonprivate) GCM fails to provide type-I error control when $\beta=0$.}
\label{fig:betagcm}
\end{minipage}\hfill
\begin{minipage}{0.55\textwidth}
\centering
\includegraphics[width = 1\textwidth]{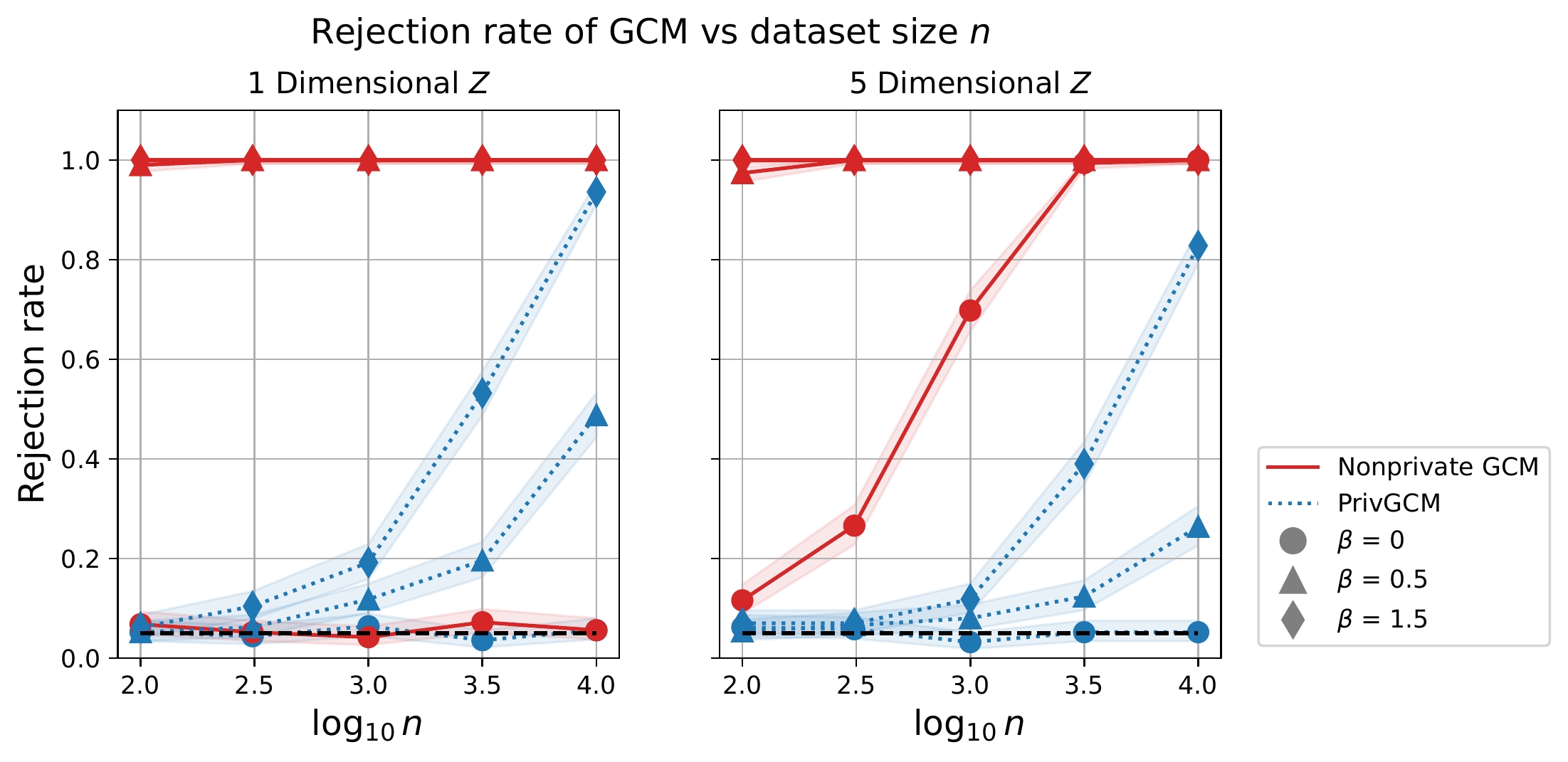}
\caption{Comparison of the type-I error and power of private and nonprivate GCM tests as the dataset size $n$ increases. Again, at $d=5$ with $\beta=0$, the (nonprivate) GCM fails to provide type-I error control even at large $n$ (in fact, its type-I error gets worse with $n$).}
\label{fig:ngcm}
        \end{minipage}
\end{figure*}

\begin{figure*}
\begin{minipage}{0.4\textwidth}
\centering
\includegraphics[width = 1\textwidth]{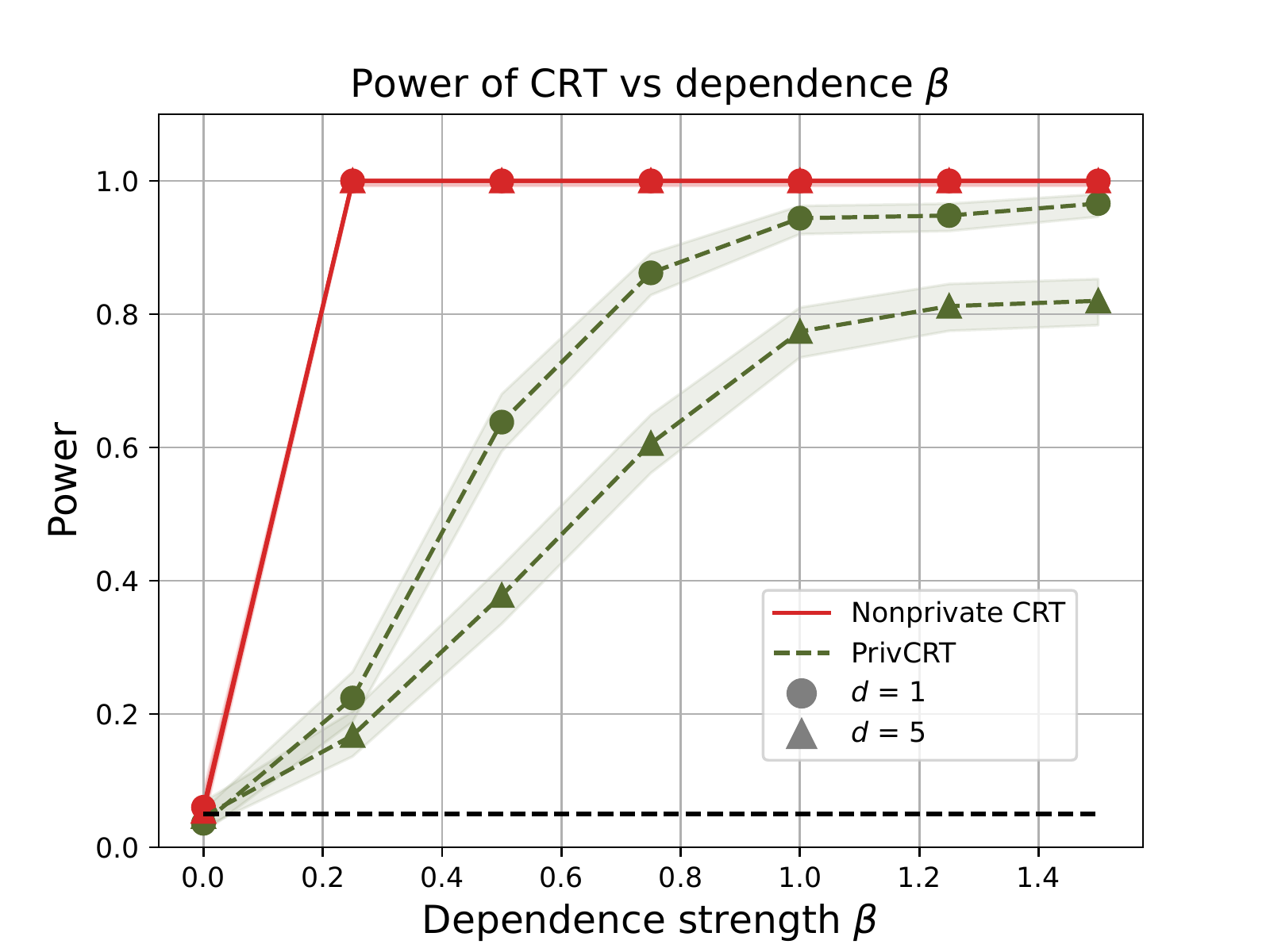}
\caption{Comparing power of private and nonprivate CRT tests as we increase  dependence  $\beta$. }
 \label{fig:betacrt}
    \end{minipage}\hfill
    \begin{minipage}{0.55\textwidth}
\centering
\includegraphics[width = 1\textwidth]{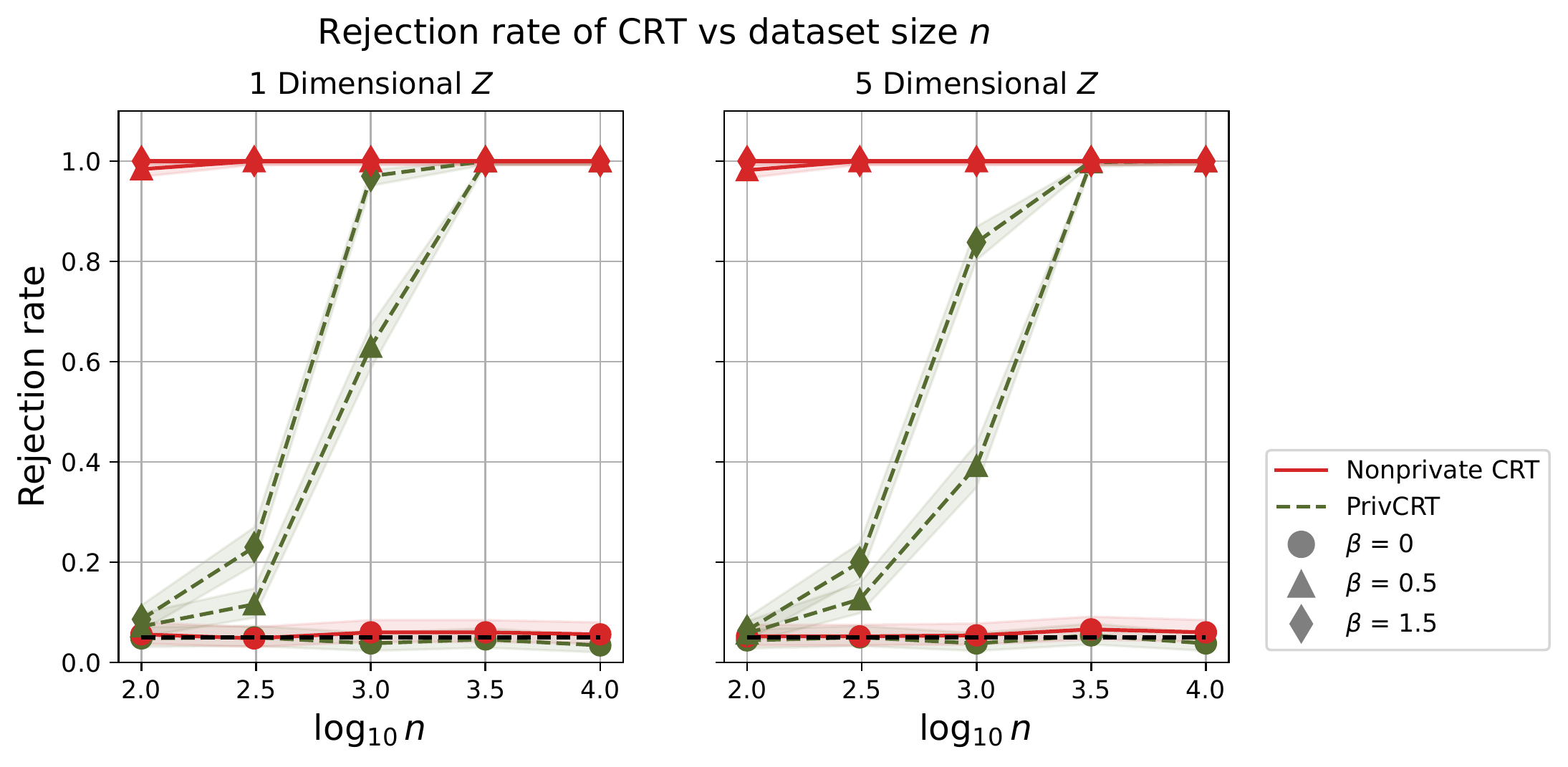}
\caption{Comparison of the type-I error and power of private and nonprivate CRT tests as we increase the dataset size $n$.}
 \label{fig:ncrt}
        \end{minipage}
\end{figure*}

\paragraph{Performance of PrivGCM.}

In \Fig{betagcm}, we vary $\beta$, the strength of the dependence between $X$ and $Y$ from $0$ to $1.5$ and compare the rejection rate of PrivGCM with the (non-private) GCM. We set $n = 10^4$ and privacy parameter $\eps=7$. In the one-dimensional case, i.e., when $d=1$, the rejection rate of both tests goes from $0.05$ to $1$, with the rejection rate of GCM converging faster to 1 than for PrivGCM, as a consequence of the noise added for privacy. Crucially though, when $d=5$, the privacy noise helps PrivGCM provide the critical type-I error control at $\beta=0$, which non-private GCM fails at.
In \Fig{ngcm}, we examine the performance of PrivGCM on synthetic data.

The failure of (non-private) GCM to provide type-I error control is better examined in \Fig{ngcm}, where we vary the dataset size $n$ from $10^2$ to $10^4$, and plot the rejection rate of PrivGCM and GCM for $\beta = 0$ and $\beta > 0$. When $d=5$, the privacy noise helps PrivGCM provide the critical type-I error control at $\beta=0$, which non-private GCM fails at. At $d=5$, the KRR model fails to fit the data (it returns a predicted function that is nearly-zero). In this case, for $\beta = 0$, the GCM statistic converges to a Gaussian of standard deviation 1, but whose mean is removed from zero. 
The larger $n$, the further the mean of the Gaussian is from zero, and the worse the type-I error. The noise added for privacy brings the mean closer to zero since the standard deviation of the noisy residuals, $\sigma_P'$, is much larger than $\sigma_P$ (see \Eqn{power_main}).  
For $\beta > 0$, PrivGCM needs a higher dataset size to achieve the same power as GCM, concordant with our discussion following \Thm{power_rescaled}. As $n \to \infty$, the power of PrivGCM is expected to approach $1$. 

\paragraph{Performance of PrivCRT.}   We study the performance of PrivCRT in Figs.~\ref{fig:betacrt}-\ref{fig:ncrt}. 
PrivCRT achieves better power than PrivGCM for our setup, so we use a smaller privacy parameter of $\eps=2$ and set $m = 19$ (an extreme, but valid choice).
In \Fig{betacrt}, we vary $\beta$, the dependence strength between $X$ and $Y$, from $0$ to $1.5$, using $n = 10^3$. Both non-private CRT and PrivCRT provide type-I error control. Also, the power of both PrivCRT and (non-private) CRT converges to $1$, with a faster convergence for the non-private test.  In \Fig{ncrt}, we vary the dataset size $n$ and $\beta \in \{0, 0.5, 1.5\}$. 




\paragraph{PrivCRT vs.\ PrivGCM.} In \Fig{crt_gcm} we compare the performance of PrivCRT with PrivGCM for different privacy parameters $\eps \in [2^{-3}, 2^3]$. We set $n = 10^4$ and, for PrivCRT, $m=19$. Both tests control type-I error, but PrivCRT achieves better power than PrivGCM for all privacy parameters $\eps$. Therefore, PrivCRT appears preferable to PrivGCM when dataholders have access to the distribution $X \mid Z$. This result is consistent with the non-private scenario where the CRT has higher power because it does not have to learn $\E[X \mid Z]$.

\begin{figure}[!h]
\centering
\includegraphics[width = 0.8\textwidth]{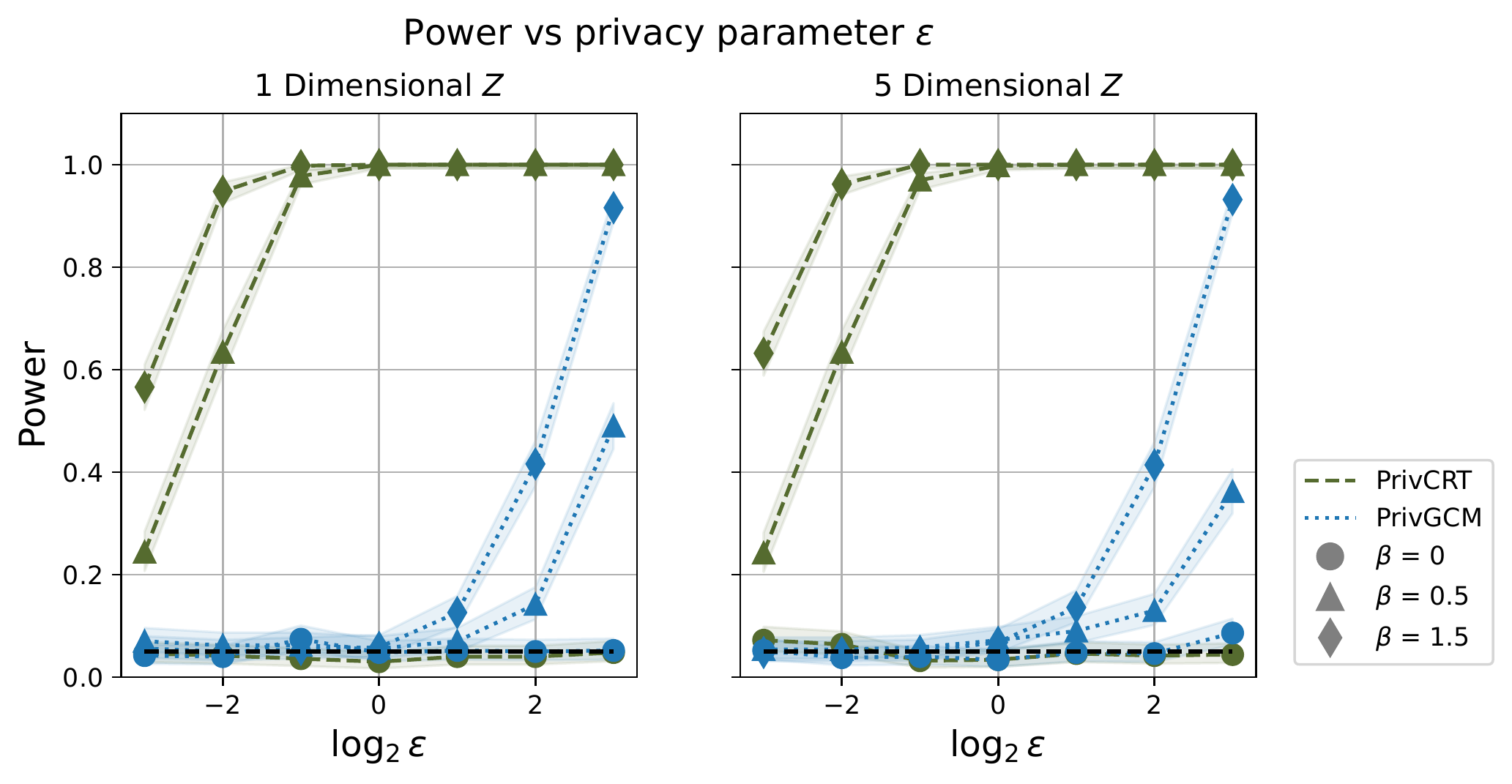}
\caption{Power of PrivCRT and PrivGCM versus privacy $\eps$.}
\label{fig:crt_gcm}
\end{figure}

\paragraph{Real Data Experiments.}
We now demonstrate the performance of our private tests on real-world data. For these experiments, we use the ``Concrete Compressive Strength'' dataset that consists of $1030$ datapoints and $9$ continuous features \cite{concrete}. We set $Y$ to be the outcome feature: the concrete compressive strength. The choice of the $X$ variable is varied, and $Z$ consists of the $7$ remaining features. $Y$ is assumed to be a complex non-linear function of the other $8$ features, thus we expect the CI tests to reject. We focus on the private GCM because among our two algorithms, as it is the more generally applicable one, i.e., it does not require access to the conditional distribution. In \Fig{concrete}, we evaluate the power of the PrivGCM. As desired, the power of PrivGCM tends to $1$ as $n$ increases approaching the power of the the non-private case. 

\begin{figure}[!ht]
\centering
\includegraphics[width = 0.6\textwidth]{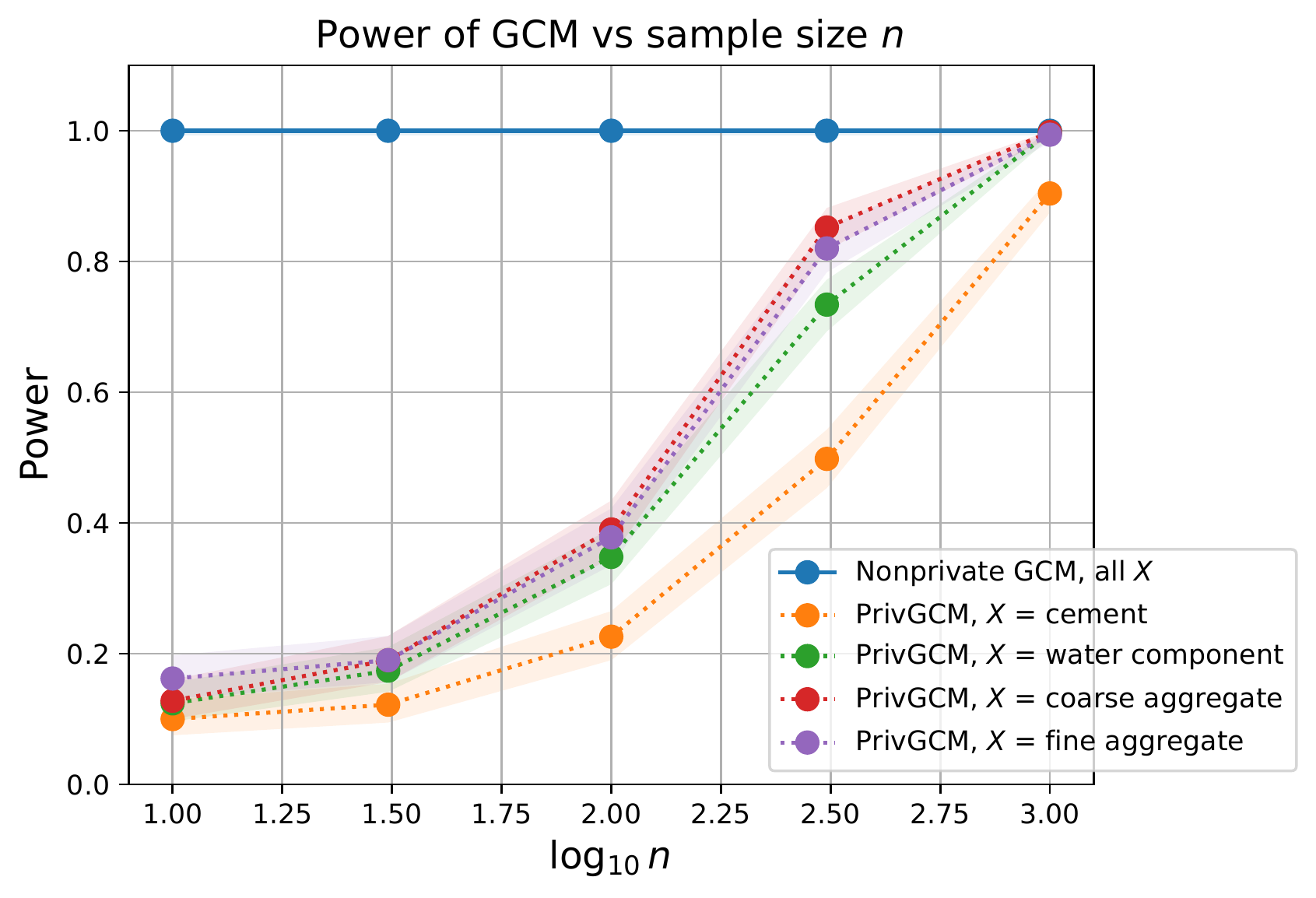}
\caption{Power of the non-private GCM and PrivGCM on the ``Concrete Compressive Strength'' dataset. The power of PrivGCM tends to $1$ with increasing sample size.}
\label{fig:concrete}
\end{figure}

\section{Concluding Remarks} 

This work studies the fundamental statistical task of conditional independence testing under privacy constraints. We design the first DP conditional independence tests that support the general case of continuous variables and have strong theoretical guarantees on both statistical validity and power. Our experiments support our theoretical results and additionally demonstrate that our private tests have more robust type-I error control than their non-private counterparts. 

We envision two straightforward generalizations of our private GCM test. First, our test can be generalized to handle multivariate $X$ and $Y$, following~\citet{ShahP20}, who obtain the test statistic from the residual products of fitting each variable in $X$ and each variable in $Y$ to $Z$. 
 A natural extension would be to compute the same statistic on our noisy residual products. 
Secondly, following \citet{scheidegger2022weighted}, a private version of the weighted GCM would allow the test to achieve power against a wider class of alternatives than the unweighted version. 
Finally, constructing private versions of other model-X based tests, such as the Conditional Permutation Test~\citep{Berrett2019TheCP}, could be another interesting direction.

 \subsection*{Acknowledgements}
We would like to thank Patrick Bl\"obaum for helpful initial discussions surrounding this project.

\bibliographystyle{plainnat}
\bibliography{references}

\newpage
\appendix

\section{Proofs of \Sec{gcm}} \label{app:gcm}

\subsection{Proof of \Thm{level_rescaled}}
 
 In this section, we state and prove a longer version of \Thm{level_rescaled}. Item~\ref{item:1} of \Thm{level_rescaled_full} gives the pointwise asymptotic level guarantee of the private GCM, whereas Item~\ref{item:2} shows the more desirable uniform asymptotic level guarantee under a slightly stronger condition. Item 2 corresponds exactly to \Thm{level_rescaled}.\footnote{The assumptions in Item~\ref{item:2}, \Thm{level_rescaled_full} are identical to those of  Definition~\ref{def:assumption}. The assumptions in Item 1 of \Thm{level_rescaled_full}  are slightly weaker than those of Definition~\ref{def:assumption}.} See \Sec{stats} for definitions of pointwise asymptotic level and uniformly asymptotic level. 

 From a privacy perspective, the proof of Item~\ref{item:2} is more involved. While~\citet{ShahP20} consider the asymptotic behavior of variables $\chi_i \xi_i$ (the product of the true residuals), we instead study the behavior of $\chi_i \xi_i W_i$ (the product of the true residuals with the noise random variables). Similarly, while they study the product of error terms of the fitting method, $(f(z_i) - \hat{f}(z_i))(g(z_i) - \hat{g}(z_i))$, we instead need to consider  $(f(z_i) - \hat{f}(z_i))(g(z_i) - \hat{g}(z_i))W_i$, the product of error terms with the noise variables. A key step in the proof is to show that the noise variables grow at a slower rate than the rate of decay of the error terms, with increasing sample size $n$. 
    
Let $\mathcal{E}_0$ be the set of distributions for $(X, Y, Z)$ that are absolutely continuous with respect to the Lebesgue measure.  The null hypothesis, $\cP_0 \subset \mathcal{E}_0$, is the subset of distributions for which $X \indep Y \mid Z$.  
Given $P \in \cP$, let $P'$ be the joint distribution of variables $(X, Y, Z, W)$ where $W \sim \Lap(\Delta/\eps)$ is independent of $(X,Y,Z)$. For a set of distributions $\cP$, let $\cP'$ denote the set of distributions $P'$ for all $P \in \cP$. Denote by $\Phi$ the CDF of the standard normal distribution. 
 
\begin{restatable}{theorem}{levelrescaled1}{\emph{(Type-I Error Control of Private GCM)}} \label{thm:level_rescaled_full}
Let $a$ and $b$ be known bounds on the domains of $X$ and $Y$, respectively. Let $\cP_0 \subset \mathcal{E}_0$ be the set of null distributions defined above. Given a dataset $\cD = (\bx, \by, \bz)$,  let $(\hat{\bx}, \hat{\by}, \bz)$ be the rescaled dataset obtained by setting $\hat{\bx} = \bx/ a$ and $\hat{\by} = \by /b$. Consider $R_i, i \in [n]$, as defined in \eqref{res_products}, for the rescaled dataset $(\hat{\bx}, \hat{\by}, \bz)$. Let $W_i \sim \mathrm{Lap}(0, \Delta/\eps)$ for $i \in [n]$, where $\Delta, \eps > 0$ are constants. Then $T^{(n)} = T(R_1 + W_1, \dots, R_n + W_n)$, defined in Algorithm~\ref{alg:gcm}, satisfies:
    \begin{enumerate}
        \item \label{item:1} For $P \in \cP_0$ such that $A_f A_g = o_P(n^{-1}), B_f =  o_P(1), B_g = o_P(1)$, and $\E[\chi_P^2 \xi_P^2] < \infty$, then 
         \begin{align*}
            \lim_{n \to \infty}  \sup_{t \in \R} |\Pr_{P'}[T^{(n)} \leq t] - \Phi(t)| = 0.
        \end{align*}
        \item \label{item:2} Let $\cP \subset \cP_0$ be a set of distributions such that $A_f A_g = o_{\cP}(n^{-1}), B_f =  o_{\cP}(1)$ and  $B_g = o_{\cP}(1)$. If in addition $\sup_{P \in \cP} \E[|\chi_P \xi_P|^{2+\eta}] \leq c$,  for some constants $c, \eta > 0$, then
\begin{align}\label{eq:level}
            \lim_{n \to \infty} \sup_{P' \in \cP'} \sup_{t \in \R} |\Pr_{P'}[T^{(n)} \leq t] - \Phi(t)| = 0.
        \end{align}
    \end{enumerate}
\end{restatable}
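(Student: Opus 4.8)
The strategy is to reduce the convergence of the private statistic $T^{(n)}$ to a single application of a uniform (Lyapunov-type) central limit theorem, by showing that the perturbation introduced by the noise variables $W_i$ and the estimation errors of $\hat f, \hat g$ are asymptotically negligible. Write the noisy residual product as $R_i + W_i$, and expand $R_i = \chi_{P,i}\xi_{P,i} + (\text{cross terms involving } f_P - \hat f \text{ and } g_P - \hat g)$, exactly as in \citet{ShahP20}. Under the null, $\E_P[\chi_P\xi_P \mid Z] = 0$, so the ``ideal'' summands $\chi_{P,i}\xi_{P,i} + W_i$ are i.i.d.\ mean-zero with finite variance $\sigma_P^2 + \Var(W)$. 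The plan is to analyze
\begin{align*}
    T^{(n)} = \frac{\frac{1}{\sqrt n}\sum_i (R_i + W_i)}{\big(\frac{1}{n}\sum_i (R_i+W_i)^2 - (\frac{1}{n}\sum_i (R_i+W_i))^2\big)^{1/2}}
\end{align*}
by controlling numerator and denominator separately.

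First I would handle the numerator. The difference $\frac{1}{\sqrt n}\sum_i(R_i + W_i) - \frac{1}{\sqrt n}\sum_i(\chi_{P,i}\xi_{P,i} + W_i)$ collects the cross terms; by Cauchy--Schwarz these are bounded in terms of $A_f, A_g, B_f, B_g$. The crucial new point relative to \citet{ShahP20} is the term $\frac{1}{\sqrt n}\sum_i (f_P(z_i)-\hat f(z_i))(g_P(z_i)-\hat g(z_i))W_i$, where the estimation error is multiplied by the noise. Here I would argue that $W_i$ has finite moments independent of $n$, so after conditioning on $(\bx,\by,\bz)$ the sum has conditional variance of order $\frac{1}{n}\sum_i (f_P-\hat f)^2(g_P-\hat g)^2 \Var(W)$; combined with $A_fA_g = o_{\cP}(n^{-1})$ (and $B_f, B_g = o_{\cP}(1)$ for the asymmetric cross terms), this forces the whole discrepancy to be $o_{\cP}(1)$. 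This is the step the introduction flags as ``the noise variables grow at a slower rate than the rate of decay of the error terms,'' and I expect it to be the main obstacle: one must show these bounds hold \emph{uniformly} over $\cP$, which requires translating the $o_{\cP}(\cdot)$ hypotheses and the uniform moment bound $\sup_{P}\E[|\chi_P\xi_P|^{2+\eta}]\le c$ into uniform control of tail probabilities, rather than pointwise convergence.

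Next I would handle the denominator, showing the empirical variance of $R_i + W_i$ converges uniformly (in probability) to $\sigma_P^2 + \Var(W)$, which is bounded away from zero by the constant noise variance --- this is precisely why, as the text notes, no lower bound on $\E[\chi_P^2\xi_P^2]$ is needed. A uniform weak law of large numbers, justified again by the uniform $(2+\eta)$-moment hypothesis, gives this; the added noise only helps, since it guarantees a strictly positive limiting variance uniformly over $\cP$. Finally, for Item~2 I would invoke a Berry--Esseen / Lyapunov CLT with the $(2+\eta)$ moment condition to get that the standardized sum $\frac{1}{\sqrt n}\sum_i(\chi_{P,i}\xi_{P,i}+W_i)/\sqrt{\sigma_P^2+\Var(W)}$ converges to $\cN(0,1)$ uniformly over $\cP'$ in Kolmogorov distance; combining this with Slutsky-type arguments (using the uniform-in-probability negligibility of the numerator discrepancy and denominator convergence established above) yields \eqref{eq:level}. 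For Item~1, the weaker pointwise statement, the same steps go through with ordinary (pointwise) convergence theorems replacing their uniform counterparts and only $\E[\chi_P^2\xi_P^2]<\infty$ in place of the uniform moment bound.
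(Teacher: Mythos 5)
Your overall architecture --- decompose numerator and denominator, reduce the numerator to an i.i.d.\ sum $\frac{1}{\sqrt n}\sum_i(\chi_{P,i}\xi_{P,i}+W_i)$ handled by a uniform Lyapunov CLT, show the empirical variance converges to $\sigma_P^2+\Var(W)$ (which is bounded away from zero thanks to the noise), and glue with uniform Slutsky --- is exactly the paper's. But you have misplaced the one genuinely new difficulty. The term $\frac{1}{\sqrt n}\sum_i(f_P(z_i)-\hat f(z_i))(g_P(z_i)-\hat g(z_i))W_i$ that you identify as ``the crucial new point'' does not occur in the numerator: since $\frac{1}{\sqrt n}\sum_i(R_i+W_i)=\frac{1}{\sqrt n}\sum_iR_i+\frac{1}{\sqrt n}\sum_iW_i$, the noise enters the numerator purely additively, the cross terms there are the same three terms as in Shah--Peters, and the $W_i$ simply join the CLT summand. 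The noise--error interaction terms $(f-\hat f)(g-\hat g)W_i$, $(f-\hat f)\xi_iW_i$, $(g-\hat g)\chi_iW_i$, $\chi_i\xi_iW_i$ arise only in the \emph{denominator}, from the cross term $\frac{2}{n}\sum_iR_iW_i$ in the expansion of $\frac{1}{n}\sum_i(R_i+W_i)^2$. That is precisely the step you dispose of with ``a uniform weak law of large numbers gives this,'' which is not adequate: the $R_iW_i$ are not i.i.d.\ (the $R_i$ depend on $\hat f,\hat g$ fitted on the whole sample), so no off-the-shelf uniform WLLN applies, and this is where the paper does its real work --- it bounds $|\frac{1}{n}\sum_i(f-\hat f)(g-\hat g)W_i|\le\max_i|W_i|\cdot\frac{1}{n}\sum_i|(f-\hat f)(g-\hat g)|$, shows $\max_i|W_i|=O_{\cP'}(\log n)$, and uses that the deterministic factor is $o_{\cP}(n^{-1/2})$ by Cauchy--Schwarz and $A_fA_g=o_{\cP}(n^{-1})$; the remaining piece $\frac{1}{n}\sum_i\chi_i\xi_iW_i$ is handled by the uniform WLLN after verifying the $(1+\eta)$-moment bound via independence of $\chi_i\xi_i$ and $W_i$.

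To be fair, the conditional-variance device you propose is sound and would in fact work if redirected to where the interaction actually lives: conditional on the data, $\frac{1}{n}\sum_i(f-\hat f)(g-\hat g)W_i$ has mean zero and variance $\frac{\Var(W)}{n^2}\sum_i(f-\hat f)^2(g-\hat g)^2\le\Var(W)\,A_fA_g=o_{\cP}(n^{-1})$, so conditional Chebyshev (restricted to the high-probability event where $nA_fA_g$ is small) gives an alternative, arguably cleaner route than the paper's $\max_i|W_i|$ bound. So the gap is not that your tools are wrong, but that your write-up attributes the key interaction to a term that does not exist in the numerator while leaving the denominator --- the actual location of that interaction --- unjustified. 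A complete proof must explicitly expand $\frac{1}{n}\sum_iR_iW_i$ into the four interaction terms and control each one uniformly over $\cP'$.
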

\begin{proof}
Let $\sigma_P = \sqrt{\Var(\chi_P\xi_P)}$, $\sigmapriv = \Delta/\eps$, and $\sigmajoint = \sqrt{\frac{\sigma_P^2}{a^2b^2} + 2\sigmapriv^2}$.  Denote by $\tau_N$ the numerator of $T^{(n)}$ and by $\tau_D$ the denominator. We sometimes omit $P$ from the notation for ease of presentation. 

\paragraph{Item~\ref{item:1}:}  We first show Item~\ref{item:1}. Specifically, we show that $\tau_N \to \cN(0, \sigmajoint^2)$ and $\tau_D \to \sigmajoint$, which, by Slutsky's lemma (\Lem{slutksy}), would imply that $T^{(n)} \to \cN(0,1)$. For $i \in [n]$, let $\chi_{i} = x_i - f(z_i)$ and $\xi_{i} = y_i - g(z_i)$. Note that
\begin{align}
    \tau_N &= \frac{1}{\sqrt{n}}\sum_{i=1}^n (R_i + W_i) \nonumber \\ 
    &= \frac{1}{ab\sqrt{n}}\sum_{i=1}^n(f(z_i) - \hat{f}(z_i))(g(z_i) - \hat{g}(z_i)) + \frac{1}{ab\sqrt{n}}\sum_{i=1}^n(f(z_i) - \hat{f}(z_i))\xi_{i} \nonumber \\ 
    &+ 
    \frac{1}{ab\sqrt{n}}\sum_{i=1}^n(g(z_i) - \hat{g}(z_i))\chi_{i} + 
    \frac{1}{\sqrt{n}}\sum_{i=1}^n\Big(\frac{\chi_i\xi_i}{ab} + W_i\Big) \label{eq:big_sum_1}
\end{align}
We use the following claim obtained from the proof of Theorem 6 of \citet{ShahP20}.
\begin{claim}[\cite{ShahP20}]\label{clm:r_i_rescaled}
    Under the assumptions listed in Item 1 of \Thm{level_rescaled_full}, the following hold
    \begin{enumerate}
        \item $ \frac{1}{\sqrt{n}}\sum_{i=1}^n(f(z_i) - \hat{f}(z_i))(g(z_i) - \hat{g}(z_i)) \pto 0$. \label{item:a}
        \item $\frac{1}{\sqrt{n}}\sum_{i=1}^n(f(z_i) - \hat{f}(z_i))\xi_{i} \pto 0$. \label{item:b}
        \item  $\frac{1}{\sqrt{n}}\sum_{i=1}^n(g(z_i) - \hat{g}(z_i))\chi_{i} \pto 0$.   \label{item:c}  
         \item $\frac{1}{n}\sum_{i=1}^n R_i \pto 0$.\label{item:d}
        \item $\frac{1}{n}\sum_{i=1}^n R_i^2 \pto \frac{\sigma_P^2}{a^2b^2}$. \label{item:e}
    \end{enumerate}
        Additionally, under the assumptions of Item 2, all convergence statements above are uniform over $\cP$.  
\end{claim}

By \Clm{r_i_rescaled}, Items~\ref{item:a}-\ref{item:c}, the first three terms of the sum in \Eqn{big_sum_1} are $o_P(1)$. We show that the last term of the sum converges to $\cN(0, \sigmajoint^2)$. Note that $\E[\frac{\chi_i\xi_i}{ab} + W_i] = 0$ and $\Var(\frac{\chi_i\xi_i}{ab} + W_i) = \Var(\chi_i \xi_i) + \Var(W_i) = \sigmajoint^2$. Since $\sigmapriv$ is a constant and $\sigma_P < \infty$, we can apply the Central Limit Theorem to obtain the desired convergence to a Gaussian. By Slutsky's lemma, we obtain that $\tau_N \to \cN(0, \sigmajoint^2)$.

We now consider $\tau_D$. Since $\E[W_i] = 0$ for all $i \in [n]$, by the Weak Law of Large Numbers it holds that $\frac{1}{n}\sum_{i=1}^n W_i \to 0$. From Item~\ref{item:d} of \Clm{r_i_rescaled}, we obtain $\frac{1}{n}\sum_{i=1}^n (R_i + W_i) \pto 0$. It remains to show that $\frac{1}{n}\sum_{i=1}^n (R_i+W_i)^2 \pto \sigma_P$. We have
\begin{align*}
    \frac{1}{n}\sum_{i=1}^n (R_i+W_i)^2 =  \frac{1}{n}\sum_{i=1}^n R_i^2 + \frac{2}{n} \sum_{i=1}^n R_i W_i + \frac{1}{n}\sum_{i=1}^{n}W_i^2. 
\end{align*}
Since $R_i$ and $W_i$ are independent, we have $\E[R_i W_i] = \E[R_i]\E[W_i] = 0$. Another application of the Weak Law of Large Numbers yields $\frac{2}{n} \sum_{i=1}^n R_i W_i \pto 0$. Finally, let $B^{(n)} = \frac{1}{n}\sum_{i=1}^{n}W_i^2$. We have $\E[W_i^2] = \Var[W_i] = 2\sigmapriv^2$. The Weak Law of Large numbers gives that $B^{(n)} \to 2\sigmapriv^2$. By Item~\ref{item:e} of \Clm{r_i_rescaled} and Slutsky's lemma, we obtain that $\tau_D \to \sigmajoint$, as desired. This concludes the proof of Item~\ref{item:1}. 

\paragraph{Item~\ref{item:2}:} Turning to Item~\ref{item:2}, more care is needed to replace statements of convergence with statements of uniform convergence due to the presence of privacy noise. We first show that $\tau_N$ converges to $\cN(0, \sigmajoint^2)$ uniformly. Consider again the sum in \Eqn{big_sum_1}. By \Clm{r_i_rescaled}, Items~\ref{item:a}-\ref{item:c}, the first three terms of the sum in \Eqn{big_sum_1} are $o_{\cP}(1)$, and as a consequence, $o_{\cP'}(1)$. We show that
\begin{align}
     \frac{1}{\sqrt{n}}\sum_{i=1}^n\Big( \frac{\chi_i\xi_i}{ab} + W_i \Big) \to \cN\Big(0, \sigmajoint^2 \Big) \text{ uniformly over } \cP'.\label{eq:gaussian_convergence}
\end{align}
Then, applying Item 1 of \Lem{slutsky_uniform} we would obtain the desired convergence of $\tau_N$. We prove \Eqn{gaussian_convergence} by showing that the random variables $\frac{\chi_i\xi_i}{ab} + W_i, i \in [n]$ satisfy the conditions of \Lem{clt_uniform}. 
Recall that $\E[\frac{\chi_i\xi_i}{ab} + W_i] = 0$ and $\E[(\frac{\chi_i\xi_i}{ab} + W_i)^2] = \Var(\frac{\chi_i\xi_i}{ab} + W_i)= \sigmajoint^2$, which is a constant. It remains to bound the $(2+\eta)$-absolute moment of $\frac{\chi_i\xi_i}{ab} + W_i$. By \Clm{moment_of_sum} and our assumption on $\chi_i \xi_i$,  we have 
\begin{align*}
    \E\Big[\Big|\frac{\chi_i\xi_i}{ab} + W_i\Big|^{2+\eta}\Big]  \leq 2^{1+\eta}\Big(\E\Big[\Big|\frac{\chi_i\xi_i}{ab}\Big|^{2+\eta}\Big] + \E[|W_i|^{2+\eta}]\Big) \leq 2^{(1+\eta)} \cdot \Big(\frac{c}{(ab)^{2+\eta}} + c'\Big)
\end{align*}
where $c'$ is the (constant) bound on $\E[|W_i|^{2+\eta}])$ given by \Clm{laplace_moment}. Thus, all the conditions of \Lem{clt_uniform} are satisfied, and \Eqn{gaussian_convergence} holds. This concludes our argument on the convergence of $\tau_N$. 

We now show that $\tau_D$ converges uniformly over $\cP'$ to $\sigmajoint$. 
We first show that $n^{-1}\sum_{i=1}^n (R_i + W_i) = o_{\cP'}(1)$. By Item~\ref{item:d} of \Clm{r_i_rescaled}, we have that $n^{-1}\sum_{i=1}^n R_i = o_{\cP'}(1)$. We also showed that $\frac{1}{n}\sum_{i=1}^n W_i = o_{\cP'}(1)$. Applying \Lem{slutsky_uniform}, we obtain that $n^{-1}\sum_{i=1}^n (R_i + W_i) = o_{\cP'}(1)$. 

Next, we show that $n^{-1}\sum_{i=1}^n R_iW_i = o_{\cP'}(1)$. Note that
\begin{align}
    \frac{1}{n}\sum_{i=1}^n R_iW_i 
    &= \frac{1}{ab} \Big( \frac{1}{ n}\sum_{i=1}^n(f(z_i) - \hat{f}(z_i) + \chi_i)(g(z_i) - \hat{g}(z_i) + \xi_i)W_i \Big) \nonumber\\
    &= \frac{1}{a b} \Big(  \frac{1}{n}\sum_{i=1}^n(f(z_i) - \hat{f}(z_i))(g(z_i) - \hat{g}(z_i))W_i + \frac{1}{ n}\sum_{i=1}^n(f(z_i) - \hat{f}(z_i))\xi_{i} W_i \nonumber \\ &+ 
    \frac{1}{n}\sum_{i=1}^n(g(z_i) - \hat{g}(z_i))\chi_{P, i} W_i + 
    \frac{1}{n}\sum_{i=1}^n\chi_i\xi_i W_i \Big) \label{eq:big_sum}
\end{align}

We show that each of the terms of the sum in \Eqn{big_sum} is $o_{\cP'}(1)$, starting with $n^{-1}\sum_{i=1}^n\chi_i\xi_i W_i$. We do so by showing that the conditions of \Lem{wlln_uniform} hold for the random variables $\chi_{i}\xi_{i} W_i$. More precisely, we need to show that for all $P' \in \cP'$ it holds that $\E_{P'}[|\chi_{i}\xi_{i} W_i|^{1+\eta}] < c_2$ for some $c_2, \eta > 0$. By the assumptions in Item~\ref{item:2}, we know $\E_{P'}[|\chi_P \xi_P|^{1+\eta}] \leq\E_{P'}[|\chi_P \xi_P|^{2+\eta}] \leq c$. By the independence of $\chi_i\xi_i$ and $W_i$ we have 
\begin{align*}
    \E_{P'}[|\chi_i \xi_i W_i|^{1+\eta}] = \E_{P'}[|\chi_P \xi_P|^{1+\eta}]\E_{P'}[|W_i|^{1+\eta}] \leq c \cdot c',
\end{align*}
where $c'$ is a bound on $\E_{P'}[|W_i|^{1+\eta}]$ given by \Clm{laplace_moment}. 
Therefore, we have that the variables $\chi_i\xi_iW_i$ satisfy all properties listed in  \Lem{wlln_uniform}, and thus 
\begin{align}
    n^{-1}\sum_{i=1}^n \chi_i\xi_iW_i= o_{\cP}(1).
\end{align}
Next, we show that 
\begin{align}
    \frac{1}{n}\sum_{i=1}^n(f(z_i) - \hat{f}(z_i))(g(z_i) - \hat{g}(z_i))W_i = o_{\cP'}(1).\label{eq:w_i}
\end{align}
Denote the term in the LHS of \Eqn{w_i} by $b$. We have
\begin{align}\label{eq:product}
     |b| \leq  \max_{i \in [n]}|W_i| \cdot \Big( \frac{1}{n}\sum_{i=1}^n(f(z_i) - \hat{f}(z_i))(g(z_i) - \hat{g}(z_i)) \Bigr)
\end{align}
We show that $\max_{i \in [n]}|W_i| = O_{\cP'}(\log n)$ in \Clm{max_laplace}. By Item~\ref{item:a} of \Clm{r_i_rescaled}, the other term in the product in the RHS of \Eqn{product} is $o_{\cP}(n^{-1/2})$.  As a result, $b = o_{\cP'}(1)$. By a similar proof, and using \Clm{r_i_rescaled}, the other terms in the sum in \Eqn{big_sum} are equal to $o_{\cP'}(1)$. 

By Item~\ref{item:e} of \Clm{r_i_rescaled}, we have that $\frac{1}{n}\sum_{i=1}^{n}R_i^2$ converges uniformly over $\cP'$ to $\frac{\sigma_P^2}{a^2 b^2}$. In addition, we showed that $\frac{1}{n}\sum_{i\in[n]}W_i^2$ converges uniformly over $\cP'$ to $2\sigmapriv^2$. 
Applying \Lem{slutsky_uniform} we obtain that $\tau_D$ converges uniformly over $\cP'$ to $\sigmajoint$. One final application of \Lem{slutsky_uniform} yields that $T^{(n)}$ converges uniformly over $\cP'$ to $\cN(0, 1)$. 
\end{proof}

\subsection{Proof of \Thm{power_rescaled}}
In this section, we state and prove a longer version of \Thm{power_rescaled}. Item~\ref{item:pointwisepower} of \Thm{power_rescaled_full} gives the pointwise power guarantee, whereas Item~\ref{item:uniformpower}shows the uniform  power guarantee under a slightly stronger condition. Item~\ref{item:uniformpower} corresponds exactly to \Thm{power_rescaled}.\footnote{The assumptions in Item 2, \Thm{power_rescaled_full} are identical to those of  Definition~\ref{def:assumption}. The assumptions in Item 1 of \Thm{power_rescaled_full}  are slightly weaker than those of Definition~\ref{def:assumption}.}

Following \citet{ShahP20}, to facilitate the theoretical analysis of power, we separate the model fitting step from the calculation of the residuals. We calculate $\hat{f}$ and $\hat{g}$ on the first half of the dataset and calculate the residuals $R_i, i \in [n]$ on the second half. In practice, it is still advised to perform both steps on the full dataset. 

\begin{restatable}{theorem}{powerrescaled1}{\emph{(Power of Private GCM).}} \label{thm:power_rescaled_full}
    Consider the setup of \Thm{level_rescaled}. Let $A_f, A_g, B_f, B_g$ be as defined in \Eqn{a_and_b}, with the difference that $\hat{f}$ and $\hat{g}$ are estimated on the first half of the dataset $(\hat{\bx}, \hat{\by}, \bz)$, and $R_i, i \in [n/2, n]$ are calculated on the second half. Define the ``signal'' ($\rho_P$) and ``noise'' ($\sigma_P$) of the true residuals $\chi_P, \xi_P$ as: 
    \begin{align*}
        \rho_P = \E_P[\cov(X, Y \mid Z)], \sigma_P = \sqrt{\mathrm{Var}_P(\chi_P\xi_P)}.
    \end{align*}
    \begin{enumerate}
        \item \label{item:pointwisepower} If for $P \in \mathcal{E}_0$ we have $A_f A_g = o_P(n^{-1}), B_f =  o_P(1), B_g = o_P(1)$ and $\sigma_P < \infty $, then 
         \begin{align} 
            \lim_{n \to \infty}  \sup_{t \in \R} \Bigl| \Pr_{P'} \Bigl[ T^{(n)} - \frac{\sqrt{n}\rho_P}{\sigma_P'} \leq t\Bigr] - \Phi(t) \Bigr| = 0, \mbox{ where }  \sigma_P' =  \sqrt{ \sigma_P^2 + (\frac{\sqrt{2}ab\Delta}{\eps} )^2}.   \label{eq:power}
        \end{align}
    \item \label{item:uniformpower} Let $\cP \subset \mathcal{E}_0$ such that $A_f A_g = o_{\cP}(n^{-1}), B_f =  o_{\cP}(1)$ and $B_g = o_{\cP}(1)$. If in addition  $\sup_{P \in \cP} \E[|\chi_P \xi_P|^{2+\eta}] \leq c$, for some constants $c, \eta > 0$, then \Eqn{power} holds over $\cP'$ uniformly. 
    \end{enumerate}  
\end{restatable}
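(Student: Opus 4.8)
The plan is to follow the structure of the Type-I error proof of \Thm{level_rescaled_full}, reusing the numerator decomposition \Eqn{big_sum_1} and \Clm{r_i_rescaled} almost verbatim; the only structural difference under the alternative is that the true residuals are no longer conditionally orthogonal, so $\E[\chi_i\xi_i] = \E_P[\cov(X,Y\mid Z)] = \rho_P$ is now nonzero. Writing $\mu = \sqrt{n}\rho_P/(ab)$, the ``main'' term of $\tau_N$ in \Eqn{big_sum_1}, namely $\frac{1}{\sqrt n}\sum_i\big(\frac{\chi_i\xi_i}{ab}+W_i\big)$, therefore acquires the deterministic mean $\mu$, of order $\sqrt n$. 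The sample-splitting device (fitting $\hat f,\hat g$ on the first half and forming the residuals on the second) is what keeps the cross terms harmless: conditionally on the fit and on $Z$ one still has $\E[\xi_i\mid Z_i]=0$ and $\E[\chi_i\mid Z_i]=0$, so the first three terms of \Eqn{big_sum_1} retain conditional mean zero and are governed by $A_f,A_g,B_f,B_g$ exactly as under the null.

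First I would handle the numerator. By \Clm{r_i_rescaled}, items (a)--(c), the first three terms of \Eqn{big_sum_1} are $o_{\cP}(1)$, since those claims only involve $(f-\hat f),(g-\hat g),\chi,\xi$ and are insensitive to the mean of $\chi_i\xi_i$. For the centered main term $\frac{1}{\sqrt n}\sum_i\big(\frac{\chi_i\xi_i}{ab}+W_i-\frac{\rho_P}{ab}\big)$, the per-summand $(2+\eta)$-moment bound from the null proof (via \Clm{moment_of_sum} and \Clm{laplace_moment}) is unchanged, so the uniform Lindeberg CLT (\Lem{clt_uniform}) applies and yields uniform convergence to $\cN(0,\sigmajoint^2)$, where $\sigmajoint^2=\frac{\sigma_P^2}{a^2b^2}+2\sigmapriv^2$ and $\sigmapriv=\Delta/\eps$. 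Combined with \Lem{slutsky_uniform}, this gives $\tau_N-\mu \to \cN(0,\sigmajoint^2)$ uniformly over $\cP'$.

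Next I would recompute the denominator $\tau_D$. The computation is identical to the null case except that now $\frac1n\sum_i R_i \to \rho_P/(ab)$ and $\frac1n\sum_i R_i^2 \to \E[(\chi_i\xi_i)^2]/(a^2b^2)$; the sample variance of $R_i+W_i$ is their difference plus the noise pieces, which equals $\frac{\E[(\chi\xi)^2]-\rho_P^2}{a^2b^2}+2\sigmapriv^2=\frac{\sigma_P^2}{a^2b^2}+2\sigmapriv^2=\sigmajoint^2$. The cross term $\frac2n\sum_i R_iW_i$ and the term $\frac1n\sum_i W_i^2$ are controlled exactly as in the null proof, using the uniform WLLN (\Lem{wlln_uniform}) together with $\max_{i}|W_i|=O_{\cP'}(\log n)$ from \Clm{max_laplace}. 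Hence $\tau_D \to \sigmajoint$ uniformly. At this point one records the algebraic identity $\sigma_P'=ab\,\sigmajoint$, which reconciles the normalization $\sigma_P'=\sqrt{\sigma_P^2+(\sqrt2 ab\Delta/\eps)^2}$ claimed in \Eqn{power}.

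The last step, and the main obstacle, is to pass from $\frac{\tau_N-\mu}{\tau_D}\to\cN(0,1)$ (uniform Slutsky, \Lem{slutsky_uniform}) to the stated quantity. Writing $T^{(n)}-\frac{\sqrt n\rho_P}{\sigma_P'}=\frac{\tau_N}{\tau_D}-\frac{\mu}{\sigmajoint}=\frac{\tau_N-\mu}{\tau_D}+\frac{\sqrt n\rho_P}{ab}\big(\frac{1}{\tau_D}-\frac{1}{\sigmajoint}\big)$, the first summand converges to $\cN(0,1)$, but the second couples the $\sqrt n$-scale signal with the fluctuation of the sample denominator. Since $\tau_D-\sigmajoint=O_{\cP'}(n^{-1/2})$, this correction is of order $\rho_P$; it vanishes under the local-alternative scaling $\rho_P\to 0$, the regime in which the clean $\cN(0,1)$ statement is sharp, and in all cases it is of lower order than the diverging shift $\frac{\sqrt n\rho_P}{\sigma_P'}\to\infty$ that is all that drives the power-one conclusion of \Cor{power_of_1}. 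I expect verifying that normalizing by the sample $\tau_D$ rather than the population $\sigmajoint$ is asymptotically innocuous for the conclusions drawn to be the only genuinely new difficulty relative to the Type-I error proof.
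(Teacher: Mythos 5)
Your overall route is the same as the paper's: the paper's entire proof of this theorem consists of the observation that $\E[\chi_P\xi_P/(ab)]=\rho_P/(ab)$ and $\sqrt{\Var(\chi_P\xi_P/(ab))}=\sigma_P/(ab)$, followed by ``the proof is similar to that of \Thm{level_rescaled_full}, using Claim~\ref{claim:power_shahpeters}.'' Your handling of the numerator and denominator is exactly that argument, and is in fact more careful than the paper's: Claim~\ref{claim:power_shahpeters} is copied verbatim from the null case, so its items 4 and 5 read $\frac{1}{n}\sum_i R_i\pto 0$ and $\frac{1}{n}\sum_i R_i^2\pto\sigma_P^2/(a^2b^2)$, whereas under a fixed alternative the correct limits are $\rho_P/(ab)$ and $(\sigma_P^2+\rho_P^2)/(a^2b^2)$, as you have them; the two discrepancies cancel in the sample variance, which converges to $\sigmajoint^2$ either way, and your identity $\sigma_P'=ab\,\sigmajoint$ is the right reconciliation with \Eqn{power}. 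Your observation that sample splitting is what keeps the three cross terms conditionally centered under the alternative is also the correct (and in the paper implicit) reason Claim~\ref{claim:power_shahpeters} items 1--3 survive.

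The final step you flag is a genuine difficulty, not a formality, and the paper's proof does not address it at all, so you have not deviated from the paper --- you have exposed a step it leaves silent. Writing $T^{(n)}-\sqrt{n}\rho_P/\sigma_P'=(\tau_N-\mu)/\tau_D+\mu\bigl(\tau_D^{-1}-\sigmajoint^{-1}\bigr)$ with $\mu=\sqrt{n}\rho_P/(ab)$, the second term is of order $\sqrt{n}\rho_P\cdot(\tau_D-\sigmajoint)$; since the sample standard deviation fluctuates at scale $n^{-1/2}$ at best (and possibly more slowly under only a $(2+\eta)$-moment assumption), this term is $O_{P'}(\rho_P)$ and does not vanish for a fixed alternative with $\rho_P\neq 0$, so the self-normalized statistic does not converge exactly to $\cN(0,1)$ after the stated shift. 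To close the gap one must either (i) restrict to (local) alternatives with $\rho_P\to 0$, the regime in which \Eqn{power} is sharp, or (ii) settle for the weaker but sufficient conclusion that $\Pr_{P'}[|T^{(n)}|>M]\to 1$ for every fixed $M$, which follows already from $\tau_N/\sqrt{n}\pto\rho_P/(ab)\neq 0$ and $\tau_D\pto\sigmajoint$ and is all that \Cor{power_of_1} actually uses. Your write-up correctly identifies both escape routes; just be aware that, as literally stated for fixed $\rho_P\neq 0$, \Eqn{power} is not delivered by this argument --- nor by the paper's.
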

\begin{proof}
Note that $\E[\frac{\chi_P \xi_P}{ab}] =\frac{\rho}{ab}$ and $\sqrt{\Var(\frac{\chi_P\xi_P}{ab})} = \frac{\sigma_P}{ab}$.  The proof is similar to that of \Thm{level_rescaled_full}, using Claim~\ref{claim:power_shahpeters} below from \citet{ShahP20}. 
\end{proof}

\begin{claim}\label{claim:power_shahpeters}
        Under the assumptions listed in \Thm{power_rescaled}, as $n \to \infty$, the following hold. 
           \begin{enumerate}
        \item $ \frac{1}{\sqrt{n}}\sum_{i=1}^n(f(z_i) - \hat{f}(z_i))(g(z_i) - \hat{g}(z_i)) \pto 0$. 
        \item $\frac{1}{\sqrt{n}}\sum_{i=1}^n(f(z_i) - \hat{f}(z_i))\xi_{i} \pto 0$. 
        \item  $\frac{1}{\sqrt{n}}\sum_{i=1}^n(g(z_i) - \hat{g}(z_i))\chi_{i} \pto 0$. 
         \item $\frac{1}{n}\sum_{i=1}^n R_i \pto 0$.
        \item $\frac{1}{n}\sum_{i=1}^n R_i^2 \pto \frac{\sigma_P^2}{a^2b^2}$. 
    \end{enumerate}
    Additionally, under the assumptions in Item 2 of \Thm{power_rescaled}, all convergence statements above are uniform over $\cP$.  
    \end{claim}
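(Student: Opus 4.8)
The plan is to prove this claim as the direct, sample-split analog of \Clm{r_i_rescaled}, reducing each of the five convergences to the arguments already available from \citet{ShahP20}. The key structural simplification is the sample split: since $\hat{f}$ and $\hat{g}$ are fit on the first half of $(\hat{\bx},\hat{\by},\bz)$ while the residuals $R_i$ are formed on the second half, conditioning on the first half makes $\hat{f},\hat{g}$ deterministic and leaves the second-half triples $(\chi_i,\xi_i,z_i)$ i.i.d.\ and independent of the fitted functions. Throughout I would work with the four-term expansion of the residual product,
\begin{align*}
R_i = \tfrac{1}{ab}\Big[(f(z_i)-\hat{f}(z_i))(g(z_i)-\hat{g}(z_i)) + (f(z_i)-\hat{f}(z_i))\xi_i + (g(z_i)-\hat{g}(z_i))\chi_i + \chi_i\xi_i\Big],
\end{align*}
and carry the pointwise version ($o_P$, under the Item~\ref{item:pointwisepower} hypotheses of \Thm{power_rescaled_full}) and the uniform version ($o_{\cP}$, under the Item~\ref{item:uniformpower} hypotheses) in parallel, invoking the uniform law-of-large-numbers and uniform CLT lemmas only for the uniform statements. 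The constant $\tfrac1{ab}$ is immaterial for Items~(1)--(3), which concern only the bracketed sums, and is exactly what produces the $a^2b^2$ in Item~(5).

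For Item~(1), I would apply Cauchy--Schwarz to obtain $\big|\tfrac{1}{\sqrt n}\sum_i (f-\hat{f})(g-\hat{g})(z_i)\big| \le \sqrt{n\,A_f A_g}$, which is $o(1)$ precisely because $A_f A_g = o(n^{-1})$ (and $o_{\cP}(1)$ when the rate is uniform). For Items~(2) and~(3), I would condition on the first-half sample: the sum $\tfrac{1}{\sqrt n}\sum_i (f-\hat{f})(z_i)\,\xi_i$ then has conditional mean zero, since $\E[\xi_i\mid z_i]=0$, and conditional variance $\tfrac1n\sum_i(f-\hat f)^2(z_i)\,v_P(z_i)=B_f = o_P(1)$ (see \Eqn{a_and_b}); a conditional Chebyshev bound yields the claimed $o_P(1)$ convergence, and symmetrically for Item~(3) with $B_g$ and $u_P$. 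This is exactly the error-control step of \citet{ShahP20}, and the sample split is what makes the conditional-mean-zero argument clean.

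Items~(4) and~(5) I would obtain as the direct counterparts of Items~(d) and~(e) of \Clm{r_i_rescaled}. In the decomposition the three fit-error terms now carry a $1/n$ rather than a $1/\sqrt n$ factor, so they vanish even faster than in Items~(1)--(3); what remains is the residual-product average $\tfrac1n\sum_i \tfrac{\chi_i\xi_i}{ab}$ (respectively its square for Item~(5)), to which I would apply the weak law of large numbers---\Lem{wlln_uniform} for the uniform version---to recover the stated limits $0$ and $\sigma_P^2/(a^2b^2)$ exactly as in Items~(d)--(e) of \Clm{r_i_rescaled}. The integrability needed to invoke \Lem{wlln_uniform} is supplied by $\sup_{P\in\cP}\E[|\chi_P\xi_P|^{2+\eta}]\le c$, which yields a uniform $(1+\eta)$-moment bound on the residual products (and, after the rescaling $\hat{\bx}=\bx/a,\hat{\by}=\by/b$, the residual products are in fact bounded, so these moment conditions hold automatically).

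The main obstacle is the uniform-over-$\cP$ half of the claim rather than any single pointwise convergence. I would need every $o_P$ statement above to be upgraded to an $o_{\cP}$ statement controlled by the single moment bound $\sup_{P}\E[|\chi_P\xi_P|^{2+\eta}]\le c$: the Cauchy--Schwarz bound for Item~(1) is uniform as soon as $A_f A_g = o_{\cP}(n^{-1})$, but the conditional-Chebyshev arguments for Items~(2)--(3) and the laws of large numbers for Items~(4)--(5) must be routed through the uniform lemmas (\Lem{wlln_uniform} and \Lem{clt_uniform}) so that the $\sup_{P\in\cP}$ can be taken inside the limit without the rate depending on $P$. Because the claim is quoted from \citet{ShahP20}, the bulk of the work is this careful bookkeeping of uniformity across the four-term decomposition rather than any new probabilistic input.
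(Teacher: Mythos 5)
The paper itself gives no proof of this claim --- it is imported wholesale from \citet{ShahP20} (just as \Clm{r_i_rescaled} is), so your reconstruction is being judged on its own merits. For Items~(1)--(3) your argument is exactly the standard one and is correct: Cauchy--Schwarz gives $\bigl|\tfrac{1}{\sqrt n}\sum_i(f-\hat f)(z_i)(g-\hat g)(z_i)\bigr|\le\sqrt{nA_fA_g}=o_{\cP}(1)$, and the conditional mean-zero/conditional Chebyshev argument with conditional variance $B_f$ (resp.\ $B_g$) is precisely what the sample split buys you.

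There is, however, a genuine failure in your treatment of Items~(4)--(5). The claim is stated under the hypotheses of \Thm{power_rescaled}, i.e.\ for \emph{alternate} distributions, where $\E[\chi_P\xi_P]=\E_P[\cov(X,Y\mid Z)]=\rho_P$ is in general nonzero --- indeed the power theorem is only interesting when $\rho_P\neq 0$. The weak law of large numbers therefore gives $\tfrac1n\sum_i\chi_i\xi_i\pto\rho_P$, so your argument yields $\tfrac1n\sum_iR_i\pto\rho_P/(ab)$ and $\tfrac1n\sum_iR_i^2\pto(\sigma_P^2+\rho_P^2)/(a^2b^2)$, \emph{not} the limits $0$ and $\sigma_P^2/(a^2b^2)$ that you assert the WLLN ``recovers.'' (Note also that \Lem{wlln_uniform} requires mean-zero variables, so it cannot be applied to $\chi_i\xi_i$ without centering at $\rho_P$, which is exactly where the discrepancy surfaces.) The stated Items~(4)--(5) are a verbatim copy of the null-hypothesis claim and only hold literally when $\rho_P=0$; what the downstream proof of \Thm{power_rescaled_full} actually needs is that the sample mean of the $R_i$ tends to $\rho_P/(ab)$ and the sample \emph{variance} $\tfrac1n\sum_iR_i^2-(\tfrac1n\sum_iR_i)^2$ tends to $\sigma_P^2/(a^2b^2)$ (the $\rho_P^2$ terms cancel), so that the denominator $\tau_D$ still converges to $\sigmajoint$. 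A correct proof must either restate Items~(4)--(5) in this centered form or explicitly restrict to $\rho_P=0$; asserting that the WLLN delivers the limits as printed is a step that does not go through.
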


\begin{corollary}\label{cor:power_of_1}
Under the assumptions of \Thm{power_rescaled_full}, \Alg{gcm} has asymptotic power of $1$ if $\rho_P \neq 0$.

Next, we show that \Thm{power_rescaled} implies that the private GCM has asymptotic power of $1$. A similar claim and proof holds for uniformly asymptotic power. 
\end{corollary}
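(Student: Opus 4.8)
The plan is to reduce the statement about asymptotic power to a statement about the test statistic crossing its rejection threshold, and then to read off the conclusion from the location-shift convergence already established in \Thm{power_rescaled_full}. Recall that \Alg{gcm} reports the p-value $2(1-\Phi(|T^{(n)}|))$ and rejects the null exactly when this p-value is at most the level $\alpha$, which is equivalent to $|T^{(n)}| \geq z_{\alpha/2}$, where $z_{\alpha/2} = \Phi^{-1}(1-\alpha/2)$ is a fixed finite critical value. Hence the power of the test equals $\Pr_{P'}\!\left[|T^{(n)}| \geq z_{\alpha/2}\right]$, and it suffices to show this probability tends to $1$ as $n \to \infty$ whenever $\rho_P \neq 0$.

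The core of the argument is the decomposition $T^{(n)} = \left(T^{(n)} - \mu_n\right) + \mu_n$, where $\mu_n = \sqrt{n}\,\rho_P/\sigma_P'$ is a deterministic drift. By \Thm{power_rescaled_full}, the centered statistic $T^{(n)} - \mu_n$ converges in distribution to $\cN(0,1)$ and is therefore bounded in probability, while $\sigma_P' = \sqrt{\sigma_P^2 + (\sqrt{2}ab\Delta/\eps)^2}$ is a finite positive constant, so that $|\mu_n| = \sqrt{n}\,|\rho_P|/\sigma_P' \to \infty$ because $\rho_P \neq 0$. To make this precise, assume without loss of generality $\rho_P > 0$ (otherwise apply the symmetric argument to $-T^{(n)}$). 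Then for any fixed $M > 0$,
\begin{align*}
    \Pr_{P'}\!\left[|T^{(n)}| \geq z_{\alpha/2}\right]
    \;\geq\; \Pr_{P'}\!\left[T^{(n)} - \mu_n \geq z_{\alpha/2} - \mu_n\right]
    \;\geq\; \Pr_{P'}\!\left[T^{(n)} - \mu_n \geq -M\right]
\end{align*}
for all $n$ large enough that $z_{\alpha/2} - \mu_n \leq -M$. The rightmost quantity converges to $\Phi(M)$ by the convergence $T^{(n)} - \mu_n \rightsquigarrow \cN(0,1)$, and taking $M \to \infty$ drives the liminf of the power to $1$, as desired.

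I do not expect a serious obstacle in the pointwise case; the only step requiring any care is the standard location-shift (Slutsky-type) manipulation above. The genuinely delicate point is the uniform version alluded to in the remark after the corollary: to upgrade to \emph{uniformly} asymptotic power one needs the drift to diverge \emph{uniformly} over the alternative class, which requires $\inf_{P \in \cP}|\rho_P| > 0$ together with $\sup_{P \in \cP}\sigma_P' < \infty$ (the latter following from the moment bound $\sup_{P}\E[|\chi_P\xi_P|^{2+\eta}] \leq c$ in Item~\ref{item:uniformpower}). Combining this uniform divergence of $|\mu_n|$ with the uniform convergence guaranteed by Item~\ref{item:uniformpower} of \Thm{power_rescaled_full} and a uniform (Polya-type) bound on the deviation of $\Pr_{P'}[T^{(n)}-\mu_n \leq \cdot]$ from $\Phi$, the same chain of inequalities then holds with $\inf_{P \in \cP}$ inserted, yielding $\lim_{n\to\infty}\inf_{P \in \cP}\Pr_{P'}[\varphi_n = 1] = 1$.
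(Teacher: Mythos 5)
Your proposal is correct and follows essentially the same route as the paper's proof: both arguments rest on decomposing $T^{(n)}$ into the centered statistic, which stays tight by \Thm{power_rescaled_full}, plus the deterministic drift $\sqrt{n}\rho_P/\sigma_P'$, which diverges since $\rho_P \neq 0$ and $\sigma_P'$ is a finite constant. If anything, your write-up is slightly more careful than the paper's: you handle the sign of $\rho_P$ explicitly, you tie the rejection event to the actual two-sided threshold $z_{\alpha/2}$, and you correctly flag that the uniform version requires the additional condition $\inf_{P \in \cP}|\rho_P| > 0$, which the paper leaves implicit.
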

\begin{proof}
       Note that \Alg{gcm} has asymptotic power of $1$ if, for all $M > 0$, it holds $\Pr_{P}[T^{(n)} > M] \to 1$ as $n \to \infty$. Given $M > 0$, note that
    \begin{align*}
        \Pr[T^{(n)} \leq M] &= \Pr_{P'} \Bigl[ T^{(n)} - \frac{\sqrt{n}\rho_P}{\sigma_P'} \leq M - \frac{\sqrt{n}\rho_P}{\sigma_P'} \Bigr] 
        \to \Phi\Bigl(M - \frac{\sqrt{n}\rho_P}{\sigma_P'} \Bigr), 
    \end{align*}
    where the convergence statement follows from \Thm{power_rescaled}. 
    Since $\rho_P \neq 0$ and  $\sigma_P'$ is a constant, then $M - \frac{\sqrt{n}\rho_P}{\sigma_P'} \to -\infty$ as $n\to \infty$. Therefore $\Phi\Bigl(M - \frac{\sqrt{n}\rho_P}{\sigma_P'} \Bigr) \to 0$, and as a result $\Pr[T^{(n)} \leq M] \to 0$, as desired. 
\end{proof}

\subsection{Guarantees of PrivGCM}

In this section, we prove Lemma~\ref{lem:sensitivity_residuals} on the sensitivity of the residuals products for kernel ridge regression.
We then use Lemma~\ref{lem:sensitivity_residuals} to prove Corollary~\ref{cor:privgcm} on the type-I error and power gurantess of PrivGCM.

\sensitivityresiduals*
\begin{proof}
     Consider two neighboring datasets $(\bx, \by, \bz)$ and $(\bx', \by', \bz')$.
    For $i \in [n]$, let $r_{X,i}, r_{Y,i}$ denote the residuals of fitting a kernel ridge regression model of $\bx$ to $\bz$ and $\by$ to $\bz$, respectively. Suppose without loss of generality that $(\bx, \by, \bz)$ and $(\bx', \by', \bz')$ differ only in the last datapoint, i.e., $(x_i, y_i,  z_i) = (x_i', y_i', z_i')$ for $i \in [n-1]$. Then, by \Thm{kusner}, for $i\in[n-1]$, we have
    \begin{align*}
        |r_{X,i} - r_{X,i}'| = |(x_i - \bw^\top \phi(z_i)) - (x_i' - \bw'^\top \phi(z_i'))| = |\bw^\top \phi(z_i) -  \bw'^\top \phi(z_i)| \leq \frac{8\sqrt{2}}{\lambda^{3/2}n} + \frac{8}{\lambda n}. 
    \end{align*}
    For the last datapoint we have
      \begin{align*}
        |r_{X,n} - r_{X,n}'| &= |(x_n - \bw^\top \phi(z_n)) - (x_n' - \bw'^\top \phi(z_n'))| \\
        &\leq |x_n - x_n'| + \lVert \bw' \rVert_{\mathcal{H}}|\phi(z_n') - \phi(z_n)| + |\bw'^\top \phi(z_n) - \bw^\top \phi(z_n)| \\
        &\leq 2 + \frac{2\sqrt{2}}{\sqrt{\lambda}} + \frac{8\sqrt{2}}{\lambda^{3/2}n} + \frac{8}{\lambda n}.
    \end{align*}
    Finally, note that for all $i \in [n]$, we have$|r_{X, i}| \leq |x_i| + \lVert \bw \rVert_{\mathcal{H}}\lVert \phi(z_i) \rVert_{\mathcal{H}} \leq 1 + \frac{\sqrt{2}}{\sqrt{\lambda}}$. The same bound holds for $|r_{Y,i}|$. Let $c_1 = 2 + \frac{2\sqrt{2}}{\sqrt{\lambda}}$ and $c_2 = \frac{8\sqrt{2}}{\lambda^{3/2}} + \frac{8}{\lambda}$.  This gives us that for all $i \in [n-1]$:
    \begin{align*}
        |R_i - R_i'| \leq |r_{X, i}|||r_{Y, i} - r_{Y, i}'| + |r_{Y, i}'||r_{X, i} - r_{X,i}'| \leq 2\Big(1 + \frac{\sqrt{2}}{\sqrt{\lambda}}\Big)\Big( \frac{8\sqrt{2}}{\lambda^{3/2}n} + \frac{8}{\lambda n} \Big) = \frac{c_1c_2}{n}.
    \end{align*}
    For $i = n$ we have
    \begin{align*}
        |R_n - R_n'| \leq 2\Big(1 + \frac{\sqrt{2}}{\sqrt{\lambda}}\Big)\Big( 2 + \frac{2\sqrt{2}}{\sqrt{\lambda}} + \frac{8\sqrt{2}}{\lambda^{3/2}n} + \frac{8}{\lambda n} \Big) = c_1(c_1+\frac{c_2}{n}).
    \end{align*}
    Finally,
    \begin{align*}
        \lVert \bR - \bR' \rVert_1 = \sum_{i=1}^n |R_i - R_i'| \leq (n-1) \cdot \frac{c_1c_2}{n} + c_1(c_1+\frac{c_2}{n}) = c_1^2 + c_1c_2,  
    \end{align*}
    as desired.
\end{proof}

\begin{restatable}{corollary}{privgcm}
\label{cor:privgcm}
    Let $a$ and $b$ be known bounds on the domains of $X$ and $Y$, respectively.  Given a dataset $\cD = (\bx, \by, \bz)$, let $(\hat{\bx}, \hat{\by}, \bz)$ be the rescaled dataset obtained by setting $\hat{\bx} = \bx/ a$ and $\hat{\by} = \by /b$. Let PrivGCM be the algorithm which runs \Alg{gcm} with kernel ridge regression as the fitting procedure $\mathcal{F}$ and sensitivity bound $\Delta = \gcmconst$, where $\gcmconst$ is the constant from \Lem{sensitivity_residuals}. Algorithm PrivGCM is $\eps$-differentially private.
    
    The statistic $T^{(n)} = T(R_1 + W_1, \dots, R_n + W_n)$, defined in \Alg{gcm}, satisfies the following.
    \begin{enumerate}
        \item \label{item:privgcm_level} Let $\cP \subset \cP_0$ be a family of distributions such that $A_f A_g = o_{\cP}(n^{-1}), B_f =  o_{\cP}(1), B_g = o_{\cP}(1)$. If in addition $\sup_{P \in \cP} \E[|\chi_P \xi_P|^{2+\eta}] \leq c$,  for some constants $c, \eta > 0$, then
        \begin{align*}
            \lim_{n \to \infty} \sup_{P' \in \cP'} \sup_{t \in \R} |\Pr_{P'}[T^{(n)} \leq t] - \Phi(t)| = 0.
        \end{align*}

    \item \label{item:privgcm_power} Define $\rho_P = \E_P[\cov(X, Y \mid Z)]$ and $\sigma_P = \sqrt{\Var_P(\chi_P\xi_P)}$.
    Let $\cP \subset \mathcal{E}_0$ be a family of distributions such that $A_f A_g = o_{\cP}(n^{-1}), B_f =  o_{\cP}(1)$ and $B_g = o_{\cP}(1)$. If in addition we have $\sup_{P \in \cP} \E[|\chi_P \xi_P|^{2+\eta}] \leq c$, for some constants $c, \eta > 0$, then
      \begin{align*} 
            \lim_{n \to \infty} \sup_{P' \in \cP'}  \sup_{t \in \R} \Bigl| \Pr_{P'} \Bigl[ T^{(n)} - \frac{\sqrt{n}\rho_P}{\sigma_P'} \leq t\Bigr] - \Phi(t) \Bigr| = 0,  
        \end{align*}
        where $\sigma_P' = \sqrt{ \sigma_P^2 +  (\frac{\sqrt{2}ab\cdot \gcmconst}{\eps})^2} $. 
    \end{enumerate}
\end{restatable}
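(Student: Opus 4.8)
The plan is to treat this corollary as a direct instantiation of the general level and power theorems for the particular choice of kernel ridge regression (KRR) as the fitting procedure, after first establishing the privacy guarantee. The three assertions---$\eps$-DP, uniform level, and uniform power---decouple cleanly, so I would dispatch them in turn, with essentially all the analytic work already carried out in the results cited below.

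For privacy, the point is that the rescaling to $|\hat x_i|,|\hat y_i|\le 1$ is exactly the hypothesis of \Lem{sensitivity_residuals}, which bounds the $\ell_1$-sensitivity of the residual-product vector $\bR$ computed on the rescaled data by $\gcmconst$. This bound already accounts for the fact that replacing a single datapoint perturbs both the learned functions $\hat f,\hat g$ and the residuals evaluated at every point---this is precisely what \Thm{kusner}, underlying \Lem{sensitivity_residuals}, controls. Hence adding coordinatewise noise $W_i\sim\Lap(0,\gcmconst/\eps)$ to $\bR$ is $\eps$-DP by the Laplace mechanism (\Lem{laplace}). Since $T^{(n)}$ and the reported p-value $2(1-\Phi(|T^{(n)}|))$ are deterministic functions of the noisy vector $\bR+W$, the $\eps$-DP guarantee of PrivGCM follows from post-processing (\Lem{postprocess}).

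For the uniform level guarantee (Item 1) I would invoke Item~2 of \Thm{level_rescaled_full} with $\Delta=\gcmconst$. The hypotheses required there---the good-fit conditions $A_fA_g=o_{\cP}(n^{-1})$, $B_f=o_{\cP}(1)$, $B_g=o_{\cP}(1)$, continuity of the distributions in $\cP$, and the uniform moment bound $\sup_{P\in\cP}\E[|\chi_P\xi_P|^{2+\eta}]\le c$---are verbatim the hypotheses assumed in the corollary, so the conclusion (uniform convergence of $T^{(n)}$ to $\cN(0,1)$ over $\cP'$) transfers without change. Likewise, the uniform power guarantee (Item 2) is an immediate application of Item~2 of \Thm{power_rescaled_full} with $\Delta=\gcmconst$; substituting this value into $\sigma_P'=\sqrt{\sigma_P^2+(\sqrt2\,ab\Delta/\eps)^2}$ yields exactly the stated $\sigma_P'=\sqrt{\sigma_P^2+(\sqrt2\,ab\,\gcmconst/\eps)^2}$.

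I expect no genuine obstacle here, since the hard analysis---handling the non-vanishing privacy noise interacting with the estimation error---lives entirely in \Thm{level_rescaled_full} and \Thm{power_rescaled_full}, and the explicit sensitivity constant in \Lem{sensitivity_residuals}. The only point demanding care is keeping the privacy accounting coherent with the rescaling: the general theorems are stated for residuals computed on the rescaled dataset and for arbitrary constant noise scale $\Delta/\eps$, and the corollary simply fixes $\Delta=\gcmconst$, which is admissible precisely because the sensitivity of $\bR$ on the rescaled data is at most $\gcmconst$. Thus the corollary's role is to certify that KRR supplies a sensitivity bound of the required $O_\lambda(1)$ form and to record the resulting explicit constant in $\sigma_P'$.
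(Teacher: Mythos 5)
Your proposal is correct and follows essentially the same route as the paper: privacy via \Lem{sensitivity_residuals} plus the Laplace mechanism (and post-processing), and Items 1 and 2 as direct instantiations of \Thm{level_rescaled_full} and \Thm{power_rescaled_full} with $\Delta=\gcmconst$. The only detail the paper adds that you omit is a caveat that if the regularization parameter $\lambda$ is tuned on the data, the constant $\gcmconst$ must be computed from an a priori lower bound on $\lambda$ to preserve the privacy guarantee.
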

\begin{proof}
    The fact that PrivGCM is $\eps$-differentially private follows from \Lems{laplace}{sensitivity_residuals}. Note that if the regularization parameter $\lambda$ is chosen adaptively based on the data, then obtaining the value of the constant $\gcmconst$ by plugging in $\lambda$ might not be $\eps$-differentially private. This can be resolved by setting an a priori lower bound on $\lambda$, independent of the data, e.g.,~$\lambda \geq 1$, and plugging that lower bound to obtain $\gcmconst$. 

    Item~\ref{item:privgcm_level} follows from \Thm{level_rescaled} and \Lem{sensitivity_residuals}. Item~\ref{item:privgcm_power} follows from \Thm{power_rescaled} and \Lem{sensitivity_residuals}
\end{proof}

\subsection{Auxiliary Lemmas}

\begin{lemma}[Slutsky's Lemma]\label{lem:slutksy} Let $A_n, B_n$ be sequences of random variables. If $A_n$ converges in distribution to $A$ and $B_n$ converges in probability to a constant $c$, then (1) $A_n + B_n \to A + c$, (2) $A_n \cdot B_n \to c \cdot A$, and (3) $A_n / B_n \to A/c$. 
\end{lemma}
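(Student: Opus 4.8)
The plan is to reduce all three claims to the continuous mapping theorem, after first upgrading the two \emph{separate} convergence statements into a single \emph{joint} statement $(A_n, B_n) \overset{d}{\to} (A, c)$. The crucial point is that the limit of $B_n$ is a deterministic constant: marginal convergence in distribution of $A_n$ and $B_n$ does not in general control the joint law of $(A_n, B_n)$, but the degeneracy of the $B_n$-limit rescues us. The engine of the whole argument is the elementary asymptotic-equivalence fact: if $X_n \overset{d}{\to} X$ and $Y_n \pto 0$, then $X_n + Y_n \overset{d}{\to} X$. I would prove this by a $\delta$-sandwich on CDFs: for any continuity point $x$ of $F_X$ and any $\delta > 0$,
\[
\Pr[X_n + Y_n \le x] \le \Pr[X_n \le x + \delta] + \Pr[|Y_n| > \delta],
\]
together with the symmetric lower bound; sending $n \to \infty$ and then $\delta \to 0$ through continuity points pins the limit to $F_X(x)$. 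The same inequality carries over verbatim to random vectors, which is all I will need.

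Next I would establish joint convergence. Since the second coordinate of the limit is the constant $c$, the pair $(A_n, c)$ converges in distribution to $(A, c)$ directly from $A_n \overset{d}{\to} A$ (appending a fixed constant coordinate preserves distributional convergence). Writing
\[
(A_n, B_n) = (A_n, c) + (0, B_n - c),
\]
and noting $(0, B_n - c) \pto 0$ by hypothesis, the vector form of the core fact yields $(A_n, B_n) \overset{d}{\to} (A, c)$.

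Finally I would invoke the continuous mapping theorem with the three maps $h_+(a,b) = a+b$, $h_\times(a,b) = ab$, and $h_\div(a,b) = a/b$. The first two are continuous everywhere, immediately giving (1) $A_n + B_n \overset{d}{\to} A + c$ and (2) $A_n B_n \overset{d}{\to} cA$. The map $h_\div$ is continuous only on $\{b \ne 0\}$, but the limit law of $(A, c)$ places all of its mass on the line $b = c \ne 0$, so the set of discontinuities has limit-measure zero and the continuous mapping theorem still applies, yielding (3) $A_n / B_n \overset{d}{\to} A/c$.

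The main obstacle is precisely the joint-convergence step: marginal convergence of $A_n$ and $B_n$ is \emph{not} sufficient to say anything about $(A_n, B_n)$, so the argument must lean on the degeneracy of the $B_n$-limit (equivalently, on $B_n - c \pto 0$) rather than on $B_n \overset{d}{\to} c$ alone. Once joint convergence is in hand, the three conclusions are routine, the only remaining subtlety being the discontinuity of division at $b = 0$, which is harmless because the constant limit $c$ is nonzero.
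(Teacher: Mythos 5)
Your proof is correct: the reduction to joint convergence $(A_n,B_n)\overset{d}{\to}(A,c)$ via the degeneracy of the $B_n$-limit, followed by the continuous mapping theorem (with the observation that the limit law charges no mass to the discontinuity set $\{b=0\}$ of division, since $c\neq 0$ is implicit in the statement of part (3)), is the standard textbook argument. The paper states Slutsky's Lemma as a known auxiliary result and offers no proof of its own, so there is nothing to compare against; your write-up would serve as a complete proof of the cited fact.
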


\begin{lemma}[Uniform version of the Central Limit Theorem (Lemma 18 of \citet{ShahP20})] \label{lem:clt_uniform}
Let $\cP$ be a family of distributions for a random variable $\zeta$ such that for all $P \in \cP$ it holds $\E_P[\zeta] = 0$, $\E_P[\zeta^2] = 1$, and $\E_P[|\zeta|^{2+\eta}] < c$ for some $\eta, c > 0$. Let $(\zeta_i)_{i \in \N}$ be i.i.d copies of $\zeta$. For $n \in \N$, define $S_n = \frac{1}{\sqrt{n}} \sum_{i=1}^n \zeta_i$. Then
\begin{align*}
    \lim_{n \to \infty} \sup_{P \in \cP} \sup_{t \in \R} |\Pr_{P}[S_n \leq t] - \Phi(t)| = 0. 
\end{align*}
\end{lemma}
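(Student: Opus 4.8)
The plan is to reduce this uniform central limit theorem to a \emph{quantitative} Berry--Esseen-type estimate whose error bound depends on $P$ only through the moment quantities that are controlled uniformly over $\cP$. Because every $P \in \cP$ gives the standardized sum $S_n$ the same mean ($0$) and variance ($1$), and the $(2+\eta)$-absolute moment is bounded by the single constant $c$ for all $P$, such an estimate will automatically be uniform, and the lemma will follow by letting $n \to \infty$.

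First I would observe that it suffices to treat $\eta \in (0,1]$. If $\eta > 1$, then for any $\eta' \in (0,1]$ Lyapunov's inequality gives $\E_P[|\zeta|^{2+\eta'}]^{1/(2+\eta')} \le \E_P[|\zeta|^{2+\eta}]^{1/(2+\eta)} \le c^{1/(2+\eta)}$, so the $(2+\eta')$-absolute moment is bounded by a constant depending only on $c$ and $\eta$; we may therefore replace $\eta$ by $\min(\eta,1)$ while keeping a uniform moment bound. This reduction is needed because the classical Berry--Esseen theorem requires a finite \emph{third} absolute moment, which is unavailable when $\eta < 1$.

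Next I would apply the fractional-moment generalization of Berry--Esseen (the Katz--Petrov inequality). For i.i.d.\ summands with mean $0$, variance $1$, and $\E_P[|\zeta|^{2+\eta}] \le c$ with $\eta \in (0,1]$, it yields
\begin{align*}
    \sup_{t \in \R} \bigl| \Pr_P[S_n \le t] - \Phi(t) \bigr| \;\le\; A(\eta)\, \E_P[|\zeta|^{2+\eta}]\, n^{-\eta/2} \;\le\; A(\eta)\, c\, n^{-\eta/2},
\end{align*}
where $A(\eta)$ is a constant depending only on $\eta$. The crucial point is that the right-hand side no longer depends on the individual distribution $P$ beyond the uniformly bounded moment $c$, so taking $\sup_{P \in \cP}$ leaves the bound $A(\eta)\,c\,n^{-\eta/2}$ unchanged.

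Finally, since $A(\eta)\,c\,n^{-\eta/2} \to 0$ as $n \to \infty$, the supremum over $\cP$ of the Kolmogorov distance to $\Phi$ tends to $0$, which is exactly the claim. I expect the only genuine difficulty to be invoking the fractional-moment Berry--Esseen bound correctly rather than the textbook version; once that quantitative estimate is in place, both the reduction to $\eta \le 1$ and the passage to the uniform supremum are routine.
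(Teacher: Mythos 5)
Your proof is correct, and it follows essentially the same route as the source: the paper itself does not reprove this lemma but cites it as Lemma 18 of Shah and Peters (2020), whose proof likewise rests on the Katz--Petrov fractional-moment Berry--Esseen bound, which depends on $P$ only through the uniformly controlled $(2+\eta)$-moment and therefore survives the supremum over $\cP$. Your Lyapunov reduction to $\eta \in (0,1]$ and the final passage to the limit are both sound.
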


\begin{lemma}[Uniform version of the Weak Law of Large Numbers (Lemma 19 of \citet{ShahP20})] \label{lem:wlln_uniform}
    Let $\cP$ be a family of distributions for a random variable $\zeta$ such that for all $P \in \cP$ it holds $\E_P[\zeta] = 0$ and $\E_P[|\zeta|^{1+\eta}] < c$ for some $\eta, c > 0$. Let $(\zeta_i)_{i \in \N}$ be i.i.d copies of $\zeta$. For $n \in \N$, define $S_n = \frac{1}{n} \sum_{i=1}^n \zeta_i$. Then for all $\delta > 0 $ it holds
    \begin{align*}
       \lim_{n \to \infty} \sup_{P \in \cP} \Pr_{P}[|S_n| > \delta] = 0. 
    \end{align*}
\end{lemma}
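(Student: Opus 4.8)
The plan is to prove this uniform weak law by a truncation argument, since the hypothesis supplies only a uniformly bounded $(1+\eta)$-moment rather than a second moment, so a direct Chebyshev bound on $S_n$ is unavailable. Without loss of generality I would assume $\eta \in (0,1]$: if the $(1+\eta)$-moment is uniformly bounded for some larger $\eta$, then by Lyapunov's inequality the $(1+\eta')$-moment is uniformly bounded for every $\eta' \le \eta$, with a constant depending only on $c$. Fixing $\delta > 0$ and a truncation level $M_n$ to be chosen, I would split each $\zeta_i = \zeta_i' + \zeta_i''$ with $\zeta_i' = \zeta_i \mathbf{1}[|\zeta_i| \le M_n]$ and $\zeta_i'' = \zeta_i \mathbf{1}[|\zeta_i| > M_n]$, and write $S_n = A_n + B_n$ where $A_n = n^{-1}\sum_i \zeta_i'$ and $B_n = n^{-1}\sum_i \zeta_i''$. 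Since $|S_n| > \delta$ forces $|A_n| > \delta/2$ or $|B_n| > \delta/2$, it suffices to bound $\Pr_P[|A_n| > \delta/2]$ and $\Pr_P[|B_n| > \delta/2]$ separately and uniformly over $P \in \cP$.

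For the truncated part the two key moment estimates are $\E_P[(\zeta')^2] \le M_n^{1-\eta}\,\E_P[|\zeta|^{1+\eta}] \le c\,M_n^{1-\eta}$ (using $|\zeta|^{2} \le M_n^{1-\eta}|\zeta|^{1+\eta}$ on $\{|\zeta| \le M_n\}$, which needs $\eta \le 1$) and, since $\E_P[\zeta]=0$, the bias bound $|\E_P[\zeta']| = |\E_P[\zeta'']| \le \E_P[|\zeta|\,\mathbf{1}[|\zeta|>M_n]] \le M_n^{-\eta}\E_P[|\zeta|^{1+\eta}] \le c\,M_n^{-\eta}$. Because $c M_n^{-\eta} \to 0$ once $M_n \to \infty$, for $n$ beyond a threshold depending only on $c,\eta,\delta$ the bias is at most $\delta/4$, and Chebyshev's inequality then yields $\Pr_P[|A_n| > \delta/2] \le \Pr_P[|A_n - \E_P A_n| > \delta/4] \le 16\,\Var_P(A_n)/\delta^2 \le 16\,c\,M_n^{1-\eta}/(n\delta^2)$. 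For the tail part I would observe that $B_n \ne 0$ only if some $\zeta_i$ exceeds $M_n$, so a union bound followed by Markov gives $\Pr_P[|B_n| > \delta/2] \le \Pr_P[\exists i:\ |\zeta_i|>M_n] \le n\,\Pr_P[|\zeta|>M_n] \le n\,c/M_n^{1+\eta}$.

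Combining the two estimates gives $\Pr_P[|S_n|>\delta] \le 16 c M_n^{1-\eta}/(n\delta^2) + nc/M_n^{1+\eta}$, and the simple choice $M_n = n$ makes both terms of order $c\,n^{-\eta}$ (up to the $\delta^{-2}$ factor), hence vanishing. The hard part — and the only reason the textbook WLLN proof does not transfer verbatim — is securing the uniformity: the truncation level $M_n$ must be fixed independently of $P$, and every moment and tail estimate must be written purely in terms of the common bound $c$ and the quantities $n,\delta,\eta$. Once the truncation is arranged this way, no distribution-dependent quantity remains, so taking $\sup_{P \in \cP}$ preserves the same vanishing upper bound and yields $\sup_{P\in\cP}\Pr_P[|S_n|>\delta] \to 0$ as $n \to \infty$.
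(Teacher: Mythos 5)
Your proof is correct. Note that the paper itself does not prove this lemma at all---it is imported verbatim as Lemma 19 of \citet{ShahP20} and used as a black box---so there is no in-paper argument to compare against; your truncation argument (splitting at level $M_n = n$, controlling the truncated part by Chebyshev after bounding the bias via the uniform $(1+\eta)$-moment, and the tail part by a union bound plus Markov) is the standard route to such a uniform WLLN, and every constant in your bounds depends only on $c$, $\eta$, and $\delta$, so the uniformity over $\cP$ goes through exactly as you claim.
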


\begin{lemma}[Uniform version of Slutsky's Lemma (Lemma 20 of \citet{ShahP20})]\label{lem:slutsky_uniform}
    Let $\cP$ be a family of distributions that determines the law of sequences $(A_n)_{n \in \N}$ and $(B_n)_{n \in N}$ of random variables. 
    \begin{enumerate}
        \item If $A_n$ converges uniformly over $\cP$ to $\cN(0, 1)$, and $B_n = o_{\cP}(1)$, then $A_n + B_n$ converges uniformly over $\cP$ to $\cN(0, 1)$. 
        \item If $A_n$ converges uniformly over $\cP$ to $\cN(0, 1)$, and $B_n = 1 + o_{\cP}(1)$, then $A_n / B_n$  converges uniformly over $\cP$ to $\cN(0, 1)$.
        \item If $A_n = M + o_{\cP}(1)$ for some $M > 0$ and $B_n = o_{\cP}(1)$, then $A_n + B_n = M + o_{\cP}(1)$. 
        \item If $A_n = o_{\cP}(1)$ and $B_n = o_{\cP}(1)$, then $A_n + B_n = o_{\cP}(1)$. 
    \end{enumerate}
\end{lemma}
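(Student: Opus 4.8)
The plan is to recognize that all four items are the uniform-over-$\cP$ analogues of the classical Slutsky lemma, and to prove each one directly from the definitions of $o_{\cP}(1)$ and of uniform convergence in distribution, taking care that every approximation error is bounded by a quantity that does not depend on $P\in\cP$. I would dispatch the two elementary items first. For Item~4, a union bound gives $\{|A_n+B_n|>\delta\}\subseteq\{|A_n|>\delta/2\}\cup\{|B_n|>\delta/2\}$, so $\sup_{P\in\cP}\Pr_P[|A_n+B_n|>\delta]\le\sup_{P\in\cP}\Pr_P[|A_n|>\delta/2]+\sup_{P\in\cP}\Pr_P[|B_n|>\delta/2]\to 0$; Item~3 then follows immediately by applying Item~4 to the pair $A_n-M$ and $B_n$.

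The core is Item~1. Fixing $\eta>0$, I would use the sandwich $\Pr_P[A_n\le t-\eta]-\Pr_P[|B_n|>\eta]\le\Pr_P[A_n+B_n\le t]\le\Pr_P[A_n\le t+\eta]+\Pr_P[|B_n|>\eta]$, which holds for every $t\in\R$ and every $P$ (via the inclusions $\{A_n\le t-\eta\}\cap\{|B_n|\le\eta\}\subseteq\{A_n+B_n\le t\}\subseteq\{A_n\le t+\eta\}\cup\{|B_n|>\eta\}$). Writing $\gamma_n=\sup_{P\in\cP}\sup_{t\in\R}|\Pr_P[A_n\le t]-\Phi(t)|$ and $\beta_n(\eta)=\sup_{P\in\cP}\Pr_P[|B_n|>\eta]$, uniform convergence of $A_n$ gives $\gamma_n\to 0$ and the hypothesis $B_n=o_{\cP}(1)$ gives $\beta_n(\eta)\to 0$ for each fixed $\eta$. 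Replacing the $A_n$-probabilities by $\Phi$ and using the global uniform-continuity modulus $\omega(\eta)=\sup_{t\in\R}\max\{|\Phi(t+\eta)-\Phi(t)|,\,|\Phi(t-\eta)-\Phi(t)|\}$ of $\Phi$ yields $\sup_{P\in\cP}\sup_{t\in\R}|\Pr_P[A_n+B_n\le t]-\Phi(t)|\le\omega(\eta)+\gamma_n+\beta_n(\eta)$. I would then take $n\to\infty$ for fixed $\eta$ (killing $\gamma_n$ and $\beta_n(\eta)$) and only afterwards let $\eta\to 0$ (killing $\omega(\eta)$ by uniform continuity of $\Phi$); the one subtlety here is respecting this order of limits.

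Item~2 I would reduce to Item~1 by writing $A_n/B_n=A_n+D_n$ with $D_n=A_n(1/B_n-1)$ and showing $D_n=o_{\cP}(1)$. First, uniform convergence of $A_n$ to the tight law $\cN(0,1)$ gives uniform tightness $A_n=O_{\cP}(1)$: for $\delta>0$ pick $M$ with $\Pr[|\cN(0,1)|>M]<\delta/2$, and since $\Pr_P[|A_n|>M]\le\Pr[|\cN(0,1)|>M]+2\gamma_n$, the bound $\sup_{P\in\cP}\Pr_P[|A_n|>M]<\delta$ holds for all large $n$, which is all the asymptotic conclusion needs. Second, $B_n=1+o_{\cP}(1)$ forces $1/B_n-1=-(B_n-1)/B_n=o_{\cP}(1)$, since on the uniformly high-probability event $|B_n-1|\le 1/2$ the denominator is bounded away from $0$. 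A product of an $O_{\cP}(1)$ and an $o_{\cP}(1)$ factor is $o_{\cP}(1)$ via the split $\{|A_n(1/B_n-1)|>\delta\}\subseteq\{|A_n|>M\}\cup\{|1/B_n-1|>\delta/M\}$, so $D_n=o_{\cP}(1)$ and Item~1 finishes the proof. The main obstacle throughout is bookkeeping rather than any new idea: the textbook proof is pointwise, and the work lies in verifying that each error term admits a bound independent of $P$, and in Item~2 in establishing uniform tightness of $A_n$ while noting it suffices for large $n$.
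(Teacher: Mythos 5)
Your proof is correct, but there is nothing in the paper to compare it against: the paper states this lemma without proof, importing it verbatim as Lemma 20 of \citet{ShahP20}, so the "paper's proof" is a citation. Your argument is the standard one and all the uniformity bookkeeping is handled properly. In Item 1 the sandwich $\{A_n\le t-\eta\}\cap\{|B_n|\le\eta\}\subseteq\{A_n+B_n\le t\}\subseteq\{A_n\le t+\eta\}\cup\{|B_n|>\eta\}$ gives a bound $\omega(\eta)+\gamma_n+\beta_n(\eta)$ that is independent of $t$ and $P$, and you correctly take $n\to\infty$ before $\eta\to 0$, using the global uniform continuity of $\Phi$ (bounded density) to kill $\omega(\eta)$. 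In Item 2 you correctly observe that uniform convergence to a tight limit gives $\sup_{P}\Pr_P[|A_n|>M]<\delta$ only for all sufficiently large $n$ rather than for all $n$ as the paper's definition of $O_{\cP}(1)$ literally requires; since the conclusion is asymptotic this is indeed enough, and the product split $\{|A_n(1/B_n-1)|>\delta\}\subseteq\{|A_n|>M\}\cup\{|1/B_n-1|>\delta/M\}$ together with $|1/B_n-1|\le 2|B_n-1|$ on the uniformly high-probability event $|B_n-1|\le 1/2$ closes the argument. Items 3 and 4 are immediate as you say. The only cosmetic remark is that the event $\{B_n=0\}$ in Item 2 should be folded into $\{|B_n-1|>1/2\}$ so that $A_n/B_n$ is defined off a uniformly negligible event, which your argument already implicitly does.
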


\begin{claim}\label{clm:max_laplace}
    Let $(W_i)_{i \in \N}$ be i.i.d copies of $W \sim \Lap(0, \sigma)$. Let $S_n = \max_{i \in [n]}|W_i|$. Then $\E[S_n] \leq \sigma (1+\ln n)$. 
\end{claim}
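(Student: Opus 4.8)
The plan is to exploit the fact that $|W_i|$ has a clean exponential tail and then apply the standard ``integrate the survival function, split at a threshold'' technique for bounding the expectation of a maximum. Concretely, since $W \sim \Lap(0,\sigma)$ has density $\frac{1}{2\sigma}e^{-|x|/\sigma}$, the absolute value $|W_i|$ is distributed as an exponential random variable with rate $1/\sigma$, so that $\Pr[|W_i| > t] = e^{-t/\sigma}$ for all $t \geq 0$. This is the only distributional input I need.

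The next step is to write the expectation of the nonnegative random variable $S_n$ via its survival function,
\begin{align*}
    \E[S_n] = \int_0^\infty \Pr[S_n > t]\, dt,
\end{align*}
and then control $\Pr[S_n > t]$ in two regimes. For small $t$ I will use the trivial bound $\Pr[S_n > t] \leq 1$, and for large $t$ I will use a union bound together with the exponential tail,
\begin{align*}
    \Pr[S_n > t] = \Pr\Big[\max_{i \in [n]} |W_i| > t\Big] \leq \sum_{i=1}^n \Pr[|W_i| > t] = n\, e^{-t/\sigma}.
\end{align*}
Splitting the integral at a threshold $u > 0$ then yields
\begin{align*}
    \E[S_n] \leq \int_0^u 1 \, dt + \int_u^\infty n\, e^{-t/\sigma}\, dt = u + n\sigma\, e^{-u/\sigma}.
\end{align*}

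Finally I would optimize (or simply plug in) the threshold: choosing $u = \sigma \ln n$ makes $n\, e^{-u/\sigma} = 1$, so the second term collapses to $\sigma$ and the bound becomes $\sigma \ln n + \sigma = \sigma(1 + \ln n)$, exactly as claimed. There is no real obstacle here; the argument is elementary once the exponential tail of $|W_i|$ is identified. The only ``choice'' to make is the splitting threshold, and $u = \sigma \ln n$ is precisely the value that balances the two contributions and recovers the stated constant, so I would present that directly rather than carrying an unoptimized $u$ through the calculation.
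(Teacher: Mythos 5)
Your proof is correct, but it takes a different route from the paper's. The paper identifies $|W_i| \sim \mathrm{Exponential}(\sigma^{-1})$ and then invokes the exact identity $\E[\max_{i \in [n]}|W_i|] = \sigma H(n)$ for the maximum of i.i.d.\ exponentials (where $H(n) = \sum_{i=1}^n 1/i$ is the harmonic number), finishing with $H(n) \leq 1 + \ln n$. You instead use the generic tail-integration argument: $\E[S_n] = \int_0^\infty \Pr[S_n > t]\,dt$, a union bound $\Pr[S_n > t] \leq n e^{-t/\sigma}$, and a split of the integral at $u = \sigma \ln n$. Both yield exactly $\sigma(1+\ln n)$. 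The paper's argument is shorter once one accepts the order-statistics identity, and it shows the bound is essentially tight (the true expectation is $\sigma H(n) \approx \sigma(\ln n + 0.577)$); your argument is self-contained, does not require recalling that identity, and generalizes immediately to any collection of (not necessarily independent or identically distributed) random variables with a common exponential tail bound, since the union bound needs neither independence nor the exact exponential law. Either proof is acceptable here.
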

\begin{proof}
    If $W_i \sim \Lap(0, \sigma)$, then $|W_i| \sim \mathrm{Exponential}(\sigma^{-1})$.  It is a known fact that $\E[\max{|W_1|, \dots, |W_n|}] = \sigma \cdot H(n)$, where $H(n)$ is the harmonic number $H(n) = \sum_{i=1}^n 1/i$. The claim follows from the fact that $H(n) \leq \ln(n) + 1$. 
\end{proof}

\begin{claim}\label{clm:moment_of_sum}
    Let $A$ and $B$ be random variables and $p > 1$. Then $\E[|A + B|^p] \leq 2^{p-1}(\E[|A|^p] + \E[|B|^p])$.
\end{claim}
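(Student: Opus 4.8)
The plan is to reduce the claim to a deterministic pointwise inequality and then integrate. First I would apply the triangle inequality $\abs{A+B} \le \abs{A} + \abs{B}$, and use that the map $t \mapsto t^p$ is nondecreasing on $[0,\infty)$, to obtain the pointwise bound $\abs{A+B}^p \le (\abs{A} + \abs{B})^p$ valid for every realization of $(A,B)$.

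The core step is the elementary scalar inequality $(a+b)^p \le 2^{p-1}(a^p + b^p)$ for all $a, b \ge 0$ and $p \ge 1$. I would establish this via the convexity of $t \mapsto t^p$ on $[0,\infty)$, which holds precisely because $p \ge 1$: Jensen's inequality applied to the two points $a, b$ with equal weights gives $\bigl(\tfrac{a+b}{2}\bigr)^p \le \tfrac{1}{2}(a^p + b^p)$, and multiplying through by $2^p$ yields the stated bound. Applying this with $a = \abs{A}$ and $b = \abs{B}$ gives $\abs{A+B}^p \le 2^{p-1}(\abs{A}^p + \abs{B}^p)$ pointwise.

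Finally I would take expectations of both sides, using monotonicity and linearity of expectation, to conclude $\E[\abs{A+B}^p] \le 2^{p-1}(\E[\abs{A}^p] + \E[\abs{B}^p])$, as desired. There is no genuine obstacle in this argument; the only point requiring attention is that the convexity step needs $p \ge 1$, which is guaranteed by the hypothesis $p > 1$. Everything else is routine, and no integrability assumptions beyond finiteness of the right-hand side are needed, since the inequality holds trivially (with value $+\infty$) if either moment is infinite.
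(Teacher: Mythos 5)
Your proof is correct: the pointwise bound via the triangle inequality, the convexity of $t \mapsto t^p$ applied at two points with equal weights, and the final passage to expectations are all valid, and the remark about the inequality holding trivially when a moment is infinite is a nice touch. The paper states this claim (a standard fact, sometimes called the $c_r$-inequality) without proof, and your argument is exactly the canonical one that would be supplied.
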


\begin{claim}\label{clm:laplace_moment}
    Let $W\sim \mathrm{Laplace}(0, \sigma)$. Then for $n \in \mathrm{N}$ we have $\E[|W|^n] = \frac{n!}{\sigma^n}$. 
\end{claim}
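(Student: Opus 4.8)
The plan is to compute the absolute moment directly from the Laplace density. Recall that $W \sim \mathrm{Laplace}(0,\sigma)$ has density $f_W(w) = \frac{1}{2\sigma}\exp(-|w|/\sigma)$. Since both $f_W$ and the map $w \mapsto |w|^n$ are even, the first step is to fold the two symmetric halves of the real line into a single integral over $[0,\infty)$:
\[
\E[|W|^n] = \int_{-\infty}^{\infty} |w|^n f_W(w)\, dw = 2\int_0^\infty w^n \frac{1}{2\sigma} e^{-w/\sigma}\, dw = \frac{1}{\sigma}\int_0^\infty w^n e^{-w/\sigma}\, dw.
\]

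Next I would evaluate the remaining integral by the substitution $u = w/\sigma$ (so $w = \sigma u$ and $dw = \sigma\, du$), which turns it into $\sigma^n \int_0^\infty u^n e^{-u}\, du = \sigma^n\,\Gamma(n+1) = \sigma^n\, n!$, using the standard identity $\Gamma(n+1)=n!$ for integer $n$. An equivalent route that avoids naming the Gamma function is to establish the recurrence $\E[|W|^n] = \sigma n\, \E[|W|^{n-1}]$ by a single integration by parts (the boundary term vanishes since $w^n e^{-w/\sigma}\to 0$ at both endpoints), combine it with the base case $\E[|W|^0]=1$, and unfold the recursion by induction on $n$; this yields the same closed form.

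The computation itself is routine, so I do not expect a genuine obstacle. The only point requiring care is the parameterization convention: I am reading $\sigma$ as the \emph{scale} parameter, consistent with the Laplace mechanism in \Lem{laplace}, where the noise $\Lap(0,\Delta_f/\eps)$ grows with $\Delta_f/\eps$; under this convention the identity above gives $\E[|W|^n] = n!\,\sigma^n$. The stated form $n!/\sigma^n$ corresponds instead to taking $\sigma$ as the \emph{rate} (inverse-scale) parameter, i.e.\ density $\frac{\sigma}{2}e^{-\sigma|w|}$, for which the identical substitution produces exactly $n!/\sigma^n$. In either reading the argument is the same two-line integral, and since the claim is invoked in the proof of \Thm{level_rescaled_full} only to supply a finite \emph{constant} bound on a low-order absolute moment of $W_i$, the exact power of $\sigma$ is immaterial to the downstream use; all that matters is that the moment is finite, which the above computation establishes.
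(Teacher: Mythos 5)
Your computation is correct and is essentially the paper's argument made explicit: the paper's proof is the one-line observation that $|W|\sim\mathrm{Exponential}(\sigma^{-1})$ together with the standard moment formula for the exponential, which is exactly the Gamma-function integral you evaluate. Your remark about the parameterization is also right --- under the scale convention used throughout the paper (cf.\ \Lem{laplace} and \Clm{max_laplace}) the formula should read $\E[|W|^n]=n!\,\sigma^n$ rather than $n!/\sigma^n$, but as you note only finiteness of the moment is used downstream, so the discrepancy is harmless.
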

\begin{proof}
    This claim follows from the fact that $|W| \sim \mathrm{Exponential}(\sigma^{-1})$. 
\end{proof}

\section{Proofs of \Sec{crt}} \label{app:crt}

In this section, we collect all missing proofs from \Sec{crt}.

\sen*
\begin{proof}
    Fix $c \in [0, m]$. First, we bound the sensitivity of $Q_c$. Suppose by contradiction that $|Q_c - Q_c'| > \Delta_T$.  Consider the case when $Q_c > Q_c'$. Then $Q_c > Q_c' + \Delta_T$. 
    Let $\mathtt{Above}_c = \{i \in [0,m] \mid T_i \geq Q_c\}$. Define $\mathtt{Above}'_c$ similarly. Then, for all $i \in  \mathtt{Above}_c$, we have
    \begin{align}
            T_i' \geq T_i - \Delta_T \geq Q_c - \Delta_T > Q_c', \label{eq:strictly_greater}
    \end{align}
    where the first inequality holds since the values $T_i$ have sensitivity at most $\Delta_T$, the second inequality holds from $i \in \mathtt{Above}_c$, and the last inequality holds from our assumption by contradiction.  
    Thus $i \in \mathtt{Above}'_c$, and as a result $\mathtt{Above}_c \subseteq \mathtt{Above}_c'$. Moreover, for $j$ such that $T_j' = Q_c'$ we have that $j \notin \mathtt{Above}_c$ since it does not satisfy \Eqn{strictly_greater}. Therefore $|\mathtt{Above}'_c| > |\mathtt{Above}_c|$, a contradiction. 

    For the case when $Q_c < Q_c'$ we obtain a contradiction by a symmetric argument. This concludes the proof on the sensitivity of $Q_c$. Next, we bound the sensitivity of the score function. We have,     
    \begin{align*}
        |s_k(c, \cD) - s_k(c, \cD)| \leq \frac{\Big| |Q_c - T_k| - |Q_c' - T_k'|\Big|}{2\Delta_T} \leq \frac{|Q_c - Q_c'| + |T_k -T_k'|}{2\Delta_T}.
    \end{align*}
    We just showed that $|Q_c - Q_c'| \leq \Delta_T$. Since the queries $T_i$ have sensitivity at most $\Delta_T$, we also have $|T_k - T_k'| \leq \Delta_T$. We obtain that $|s(c, \cD) - s(c, \cD)| \leq 1$ for all neighboring datasets $\cD, \cD'$. 
\end{proof}

\sensitivitycrt*
 \begin{proof}
    Suppose that $\cD$ and $\cD'$ differ in the last row. In the following, we assume that $j \in [m]$ is fixed. To ease notation, we remove the superscript $(j)$ from all $r_{X, i}^{(j)}, r_{Y, i}^{(j)}$ and $R_i^{(j)}$. Since we know $\E[X \mid Z]$ exactly, we have $r_{X, i}= x_i- \E[X \mid Z = z_i]$ for $i \in [n], j\in [m]$. Then for $i \in [n]$  we have
    \begin{align*}
        |r_{X, i}- r_{X, i}'| = |(x_i- \E[X \mid Z = z_i]) - (x_i' - \E[X \mid Z = z_i'])|.
    \end{align*}
    If $i \in [n-1]$, then $x_i= x_i'$ and $z_i = z_i'$, so that $|r_{X, i}- r_{X, i}| = 0$. For $i = n$, we have $|r_{X, n}- r_{X, n}| \leq 2$ by the triangle inequality and since $|r_{X, n}|\leq 1$. 

    Let $c_1 = 2 + \frac{2\sqrt{2}}{\sqrt{\lambda}}$ and $c_2 = \frac{8\sqrt{2}}{\lambda^{3/2}} + \frac{8}{\lambda}$. Turning to the residuals of fitting $\by$ to $\bz$, by the same argument as in \Lem{sensitivity_residuals} we have, for all $i\in[n-1]$, 
    \begin{align*}
        |r_{Y,i}- r_{Y,i}'| \leq \frac{8\sqrt{2}}{\lambda^{3/2}n} + \frac{8}{\lambda n} = \frac{c_2}{n}.  
    \end{align*}
    For the last datapoint it holds
      \begin{align*}
        |r_{Y,n}- r_{Y,n}'| \leq 2 + \frac{2\sqrt{2}}{\sqrt{\lambda}} + \frac{8\sqrt{2}}{\lambda^{3/2}n} + \frac{8}{\lambda n} = c_1 + \frac{c_2}{n}. 
    \end{align*}
    Additionally, $|r_{Y, i}| \leq c_1/2$ for all $i \in [n]$. 

    We can now bound the sensitivity of the residual products $R_i= r_{X, i}r_{Y, i}$. For $i \in [n-1]$, we have 
    \begin{align*}
        |R_i- R_i'| \leq |r_{X, i}|||r_{Y, i} - r_{Y, i}'| + |r_{Y, i}'||r_{X, i} - r_{X,i}'| \leq \frac{c_2}{n} + 0. 
    \end{align*}
    For $i = n$ we have
    \begin{align*}
        |R_n - R_n'| \leq \Big(c_1+\frac{c_2}{n}\Big) + \frac{c_1}{2}\cdot 2 = 2c_1 + \frac{c_2}{n}. 
    \end{align*}
    Finally,
    \begin{align*}
         |T_j - T_j'| = \Big|\sum_{i=1}^n R_i -  \sum_{i=1}^n R_i'\Big|\leq \sum_{i=1}^n |R_i - R_i'| \leq (n-1) \cdot \frac{c_2}{n} + 2c_1 + \frac{c_2}{n} = 2c_1 + c_2.
    \end{align*}
\end{proof}

\crt*
\begin{proof}
We first show that PrivCRT is $\eps$-differentially private. The scores $T_i, i\in [0,m]$ have sensitivity at most $\Delta_T = O(1)$ by \Lem{sensitivity_crt}. Therefore, the scores $s_i, i \in [0, m]$ have sensitivity at most $1$ by \Lem{sensitivity_score}. Finally, by \Thm{rnm} we have that outputting $\hat{c}$ is $\eps$-DP. Therefore, PrivCRT is $\eps$-DP. Note that if the regularization parameter $\lambda$ is chosen adaptively based on the data, then obtaining the value of the constant $\crtconst$ by plugging in $\lambda$ might not be $\eps$-differentially private. This can be resolved by setting an a priori lower bound on $\lambda$, independent of the data, e.g.,~$\lambda \geq 1$, and plugging that lower bound to obtain $\crtconst$. 

Next, we analyze the accuracy of PrivCRT. Let $c^*$ be the true rank of $T_0$ amongst the statistics $\{T_i\}_{i \in [0, m]}$, sorted in decreasing order. Then $p^* = (1+c^*)/(m+1)$. Note that $s_{c^*} = 0$. By \Thm{rnm}, with probability at least $1-\delta$, it holds 
\begin{align*}
    s_{\hat{c}} \geq - \frac{2\log(m/\delta)}{\eps}.
\end{align*}
As a result, 
\begin{align*}
    |Q_{\hat{c}} - T_0| = -2\Delta_T \cdot s_{\hat{c}} \leq \frac{4\Delta_T \log(m/\delta)}{\eps}.
\end{align*}
Let $\gamma = \frac{4\Delta_T \log(m/\delta)}{\eps}$. The rank $c^*$ of $T_0$ cannot differ from $\hat{c}$ by more than $G_\gamma$, since $Q_{\hat{c}}$ is within distance $\gamma$ of $T_0$. Therefore, $|\hat{p} - p^*| \leq \frac{G_\gamma}{m+1}$, and since $\Delta_T = O(1)$, we  obtain the desired result. 
\end{proof}

\section{Additional Experimental Details and Results}\label{app:extra_experiments}
\paragraph{Example Dataset.} In \Fig{data} we show one example dataset from our simulations. We fit a kernel ridge regression model of $X$ to $Z$. As can be observed, the model we fit  matches $\E[X|Z]$ closely.

\begin{figure}[!h]
\centering
\includegraphics[width = 0.6\textwidth]{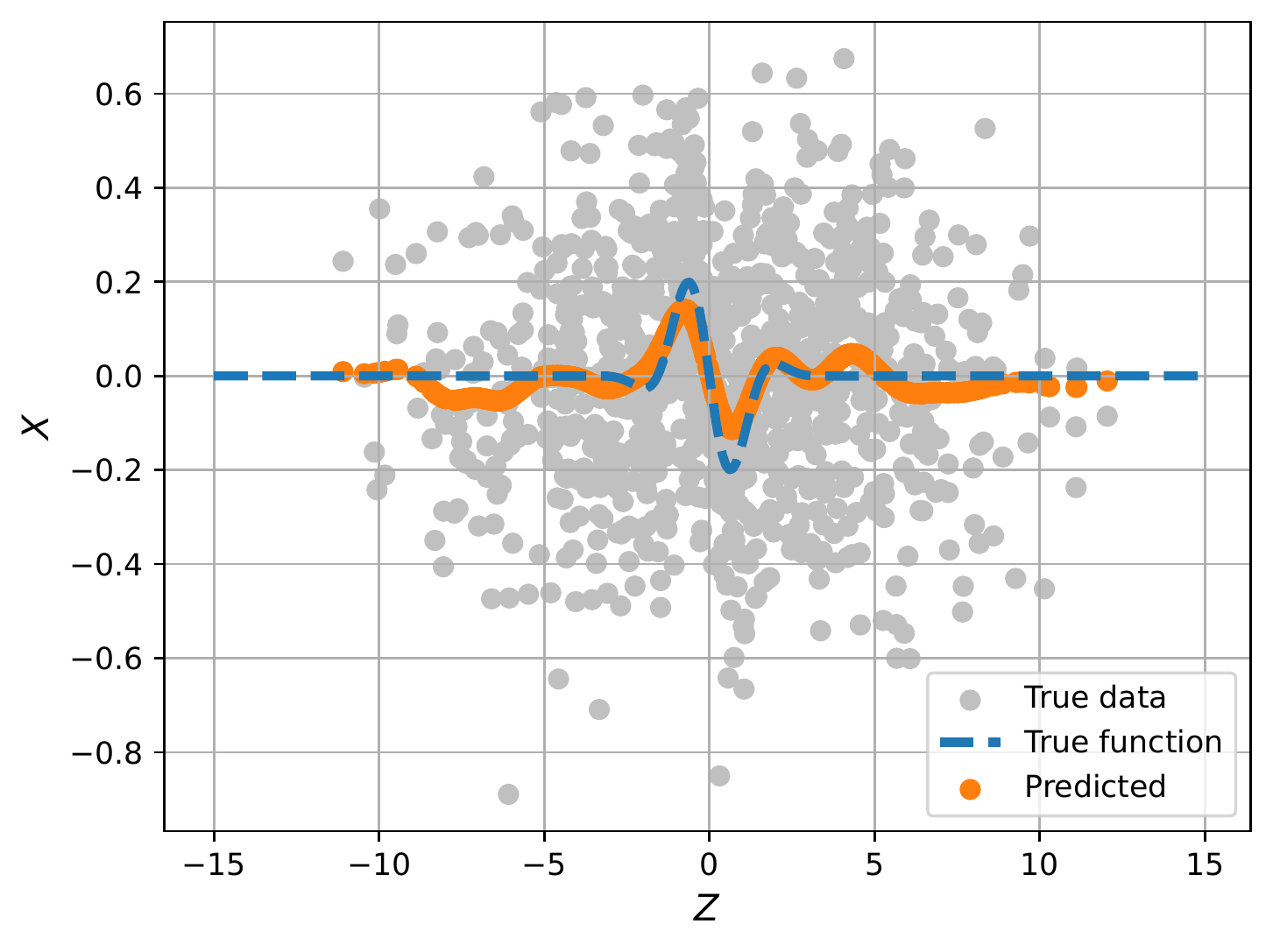}
\caption{Values of $X$ and $Z$ (after rescaling) of one sampled dataset from our simulations, with $n = 1000$, $\beta = 0$, $s=2$, $d=1$. A kernel ridge regression model is fitted to the data. The model we fit closely matches $\E[X|Z]$.}
\label{fig:data}
\end{figure}

\paragraph{Implementation of Private Kendall and PrivToT.} The implementation of PrivToT requires setting a parameter $k$ for the number of subsets into which the original dataset is divided. We use $k=10$ for the experiments in \Fig{kendall}. We additionally experimented with $k \in [20,50,100]$ and found that $k=10$ performs the best in terms of type-I error. The implementation of PrivToT was adapted from the implementation of~\citet{KazanSGB23}. The private Kendall test works for categorical $Z$. To adapt it to continuous $Z$ we apply $k$-means clustering to $Z$ to obtain $100$ clusters. The implementation of private Kendall was adapted from~\citet{WangKLK18}. 


\begin{figure}[!h]  
\begin{center}
\centering
\includegraphics[width = 0.8\textwidth]{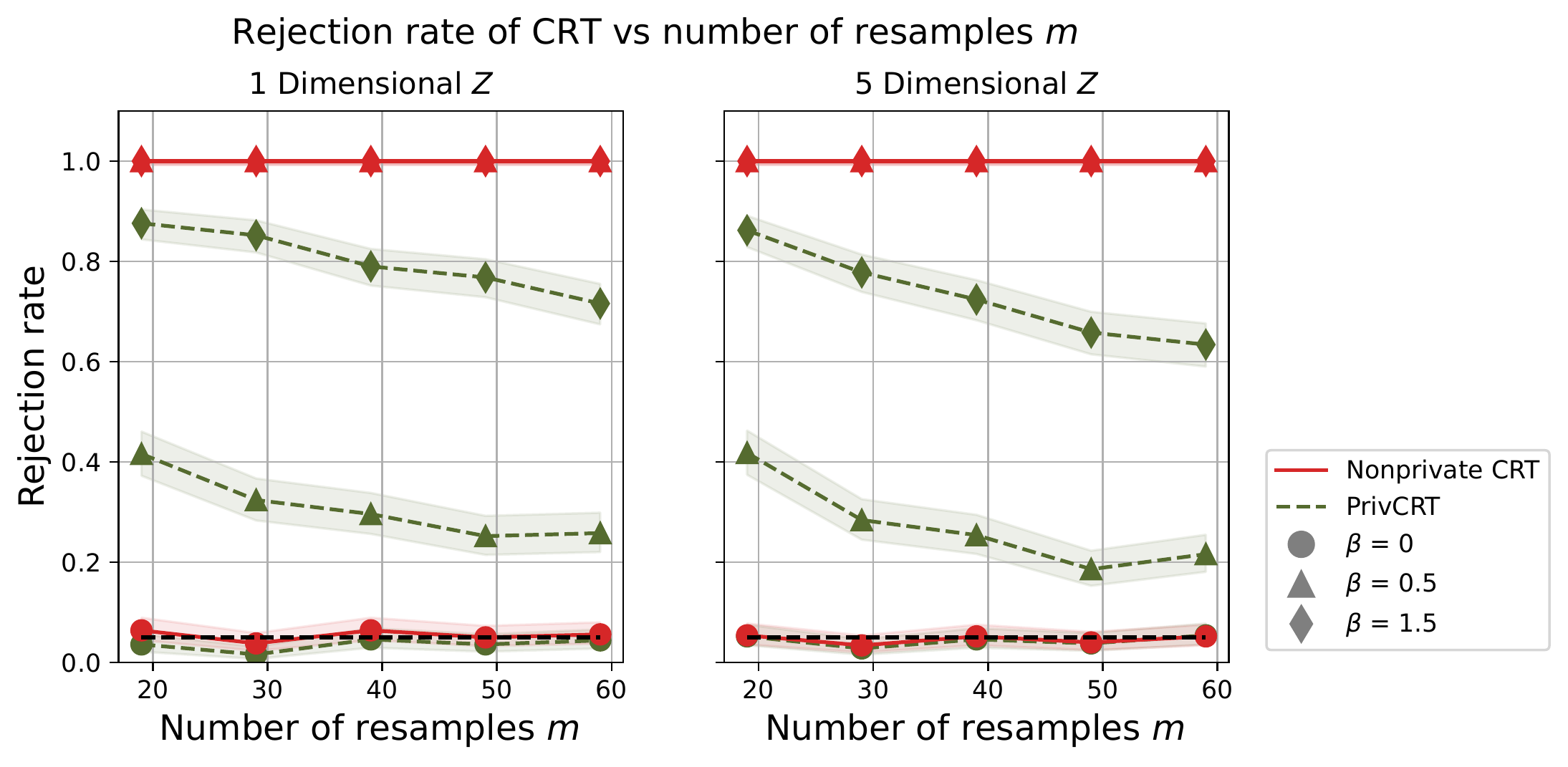}
\caption{Effect on the power of PrivCRT with increasing $m$.}
\label{fig:crt_m}
\end{center}
\end{figure}

\begin{figure}[!h]
\centering
\includegraphics[width = \textwidth]{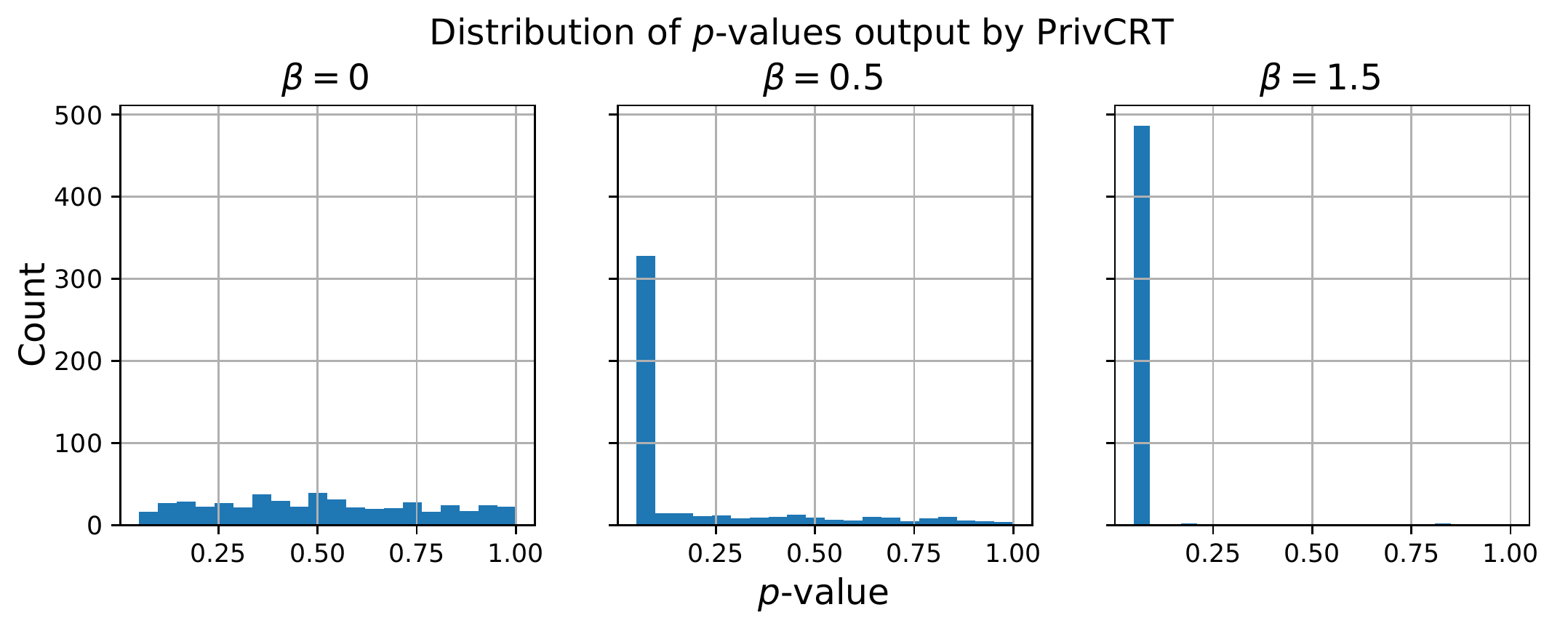}
\caption{Distribution of p-values output by PrivCRT for different dependence strengths $\beta$ under the setup in Section~\ref{sec:experiments}. Under the null, i.e., $\beta=0$, the p-values are uniformly distributed as desired.}
\label{fig:pvals}
\end{figure}


\paragraph{The Effect of Varying $m$ on the Power of PrivCRT. } 
In \Fig{crt_m}, we vary $m$, the number of resamples used in the CRT algorithm and run our experiments for $\beta \in \{0, 0.5, 1.5\}$. We set $n = 10^3$ and $\eps = 2$. Increasing $m$ does not affect the type-I error control of PrivCRT.  However, we observe that the power of PrivCRT decreases as the number of resamples $m$ increases. This is due to the increase of the variable $G_{\gamma}$ with $m$ (see \Def{c_gamma}). More specifically, for a fixed $\gamma$, as $m$ increases, $T_0$ stays fixed, while the number of other statistics $T_i$ within distance $\gamma$ of $T_0$ increases. Thus, the private test is more likely to select a rank that is further from the true rank. It is an interesting open question whether the dependence on $m$ in the accuracy of a private CRT test is avoidable. For now, we recommend using $m = O(1/\alpha)$ when employing PrivCRT.

\paragraph{Distribution of p-values Output by PrivCRT. } In \Fig{pvals} we show the distribution of the p-values output by PrivCRT for different dependence strengths $\beta$. We set $n = 10^3$, $m=19$, and $\eps=2$. Under the null, i.e., when $\beta = 0$, the p-values output by PrivCRT are uniformly distributed in the interval $[\frac{1}{m+1}, 1]$. Thus, PrivCRT controls type-I error. When $\beta > 0$, most of the p-values are close to $\alpha = 0.05$, which is the desired outcome for PrivCRT to achieve power.

\end{document}